\providecommand{\algorithmname}{Algorithm}
\theoremstyle{plain}
\newtheorem{theorem}{Theorem}
\newtheorem{lemma}{Lemma}[section]
\newtheorem{claim}[lemma]{Claim}
\newtheorem{definition}[lemma]{Definition}
\newtheorem{corollary}[theorem]{Corollary}
\def\XX{ X }
\def\AA{ A }
\def\N{{\mathbb {N}}}
\def\Reals{{\mathbb {R}}}
\newcommand{\eps}{\epsilon}
\newcommand{\Ex}{\mathop{\bf E\/}}
\def\poly{{\rm {poly}}}
\def\U{{\mathcal {U}}}
\def\T{{\mathcal {T}}}
\def\P{{\mathbb {P}}}
\newcommand{\norm}[1]{\left\lVert#1\right\rVert}
\newcommand{\inner}[1]{\langle#1\rangle}
\def\Bad{\mathrm{Bad}}
\def\Sig{\mathrm{Sig}}
\newcommand{\kk}{k}
\newcommand{\rr}{r}
\newcommand{\F}{\mathbb{F}}
\newcommand{\bias}{\mathrm{bias}}
\title{Extractor-Based Time-Space Lower Bounds for Learning}
\author{Sumegha Garg\thanks{Department of Computer Science, Princeton University.}
\and
Ran Raz%
\thanks{Department of Computer Science, Princeton University. Research supported by the Simons Collaboration on Algorithms and Geometry and by the National Science Foundation grant No. CCF-1412958.}
\and
Avishay Tal\thanks{Institute for Advanced Study, Princeton, NJ.
    Research supported by the Simons Collaboration on Algorithms and Geometry and by the National Science Foundation grant No. CCF-1412958.}}
\date{}
\begin{document}
\maketitle

\begin{abstract}
A matrix $M: \AA \times \XX \rightarrow \{-1,1\}$ corresponds to the following learning problem:
An unknown element $x \in \XX$ is chosen uniformly at random. A learner tries to learn $x$ from a stream of samples,
$(a_1, b_1), (a_2, b_2) \ldots$, where for every $i$, $a_i \in \AA$ is
chosen uniformly at random and $b_i = M(a_i,x)$.

Assume that $k,\ell, r$ are such that any submatrix of $M$ of at least $2^{-k} \cdot |A|$ rows and at least
$2^{-\ell} \cdot |X|$ columns, has a bias of at most $2^{-r}$.
We show that
any learning algorithm for the learning problem corresponding to $M$ requires either a memory
of size at least
$\Omega\left(k \cdot  \ell \right)$, or at least $2^{\Omega(r)}$ samples.
The result holds even if the learner has an exponentially small success probability (of $2^{-\Omega(r)}$).

In particular, this shows that for a large class of learning problems, any learning algorithm requires either a memory
of size at least
$\Omega\left((\log |X|) \cdot  (\log |A|)\right)$  or an exponential number of samples, achieving a tight
$\Omega\left((\log |X|) \cdot  (\log |A|)\right)$
lower bound on the size of the memory,
rather than a bound of
$\Omega\left(\min\left\{(\log |X|)^2,(\log |A|)^2\right\}\right)$
obtained in previous works~\cite{Raz17,MM2}.

Moreover, our result implies all previous memory-samples lower bounds, as well as a number of new applications.

Our proof builds on~\cite{Raz17} that gave a general  technique for proving memory-samples lower bounds.

%

\end{abstract}

\section{Introduction}

Can one prove unconditional lower bounds on the number of samples needed for learning, under memory constraints?
The study of the resources needed for learning, under
memory constraints
was initiated by Shamir~\cite{Shamir} and by
Steinhardt, Valiant and Wager~\cite{SVW}.
While the main motivation for studying this question comes from learning theory, the problem is also relevant to computational complexity and
cryptography~\cite{Raz16, VV, KRT}.

Steinhardt, Valiant and Wager conjectured that any algorithm for learning parities of size $n$
requires either a memory
of size $\Omega(n^2)$ or an exponential number of samples.
This conjecture was proven in~\cite{Raz16}, showing for the first time a learning problem that is infeasible under super-linear memory constraints.
Building on~\cite{Raz16}, it was proved in~\cite{KRT} that learning parities of sparsity $\ell$ is also infeasible under memory constraints that are super-linear in $n$,
as long as $\ell \geq \omega(\log n / \log \log n)$.
Consequently, learning linear-size
DNF Formulas, linear-size Decision Trees and logarithmic-size
Juntas were all proved to be infeasible under super-linear memory
constraints~\cite{KRT} (by a reduction from learning sparse parities).

Can one prove similar memory-samples lower bounds for other learning problems?

As in~\cite{Raz17}, we represent a learning problem by a matrix.
Let $\XX$, $\AA$ be two finite sets of size larger than 1 (where $\XX$ represents the concept-class that we are trying to learn and $\AA$ represents the set of possible samples).
Let $M: \AA \times \XX \rightarrow \{-1,1\}$ be a matrix.
The matrix $M$ represents the following learning problem:
An unknown element $x \in \XX$ was chosen uniformly at random. A learner tries to learn $x$ from a stream of samples,
$(a_1, b_1), (a_2, b_2) \ldots$, where for every $i$, $a_i \in \AA$ is
chosen uniformly at random and $b_i = M(a_i,x)$.

Let $n = \log|\XX|$ and $n' = \log|\AA|$.

A general technique for proving memory-samples lower bounds was given in~\cite{Raz17}.
The main result of~\cite{Raz17} shows that if the norm of the matrix $M$ is sufficiently small, then any learning algorithm for the corresponding learning problem requires either a memory
of size at least $\Omega\left(\left(\min\{n,n'\}\right)^2\right)$, or an exponential number of samples.
This gives a
general memory-samples lower bound that applies for a large class of learning
problems.

Independently of~\cite{Raz17},
Moshkovitz and Moshkovitz also gave a general technique for proving memory-samples lower bounds~\cite{MM1}.
Their initial result was that if $M$ has a (sufficiently strong) mixing property then any learning algorithm for the corresponding learning problem requires
either a memory of size at least
$1.25 \cdot  \min\{n,n'\}$ or
an exponential number of samples~\cite{MM1}.
In a
recent subsequent work~\cite{MM2}, they
improved their result, and obtained a theorem that is very similar to the one proved in~\cite{Raz17}.
(The result of~\cite{MM2} is stated in terms of a combinatorial mixing property, rather than matrix norm. The two notions are closely related (see in particular Corollary~5.1 and Note~5.1 in~\cite{BL})).

\subsubsection*{Our Results}

The results of~\cite{Raz17} and~\cite{MM2} gave
a lower bound of at most $\Omega\left(\left(\min\{n,n'\}\right)^2\right)$ on the size of the memory,
whereas the best that one could hope for, in the information theoretic setting
(that is, in the setting where the learner's computational power is unbounded), is a lower bound of
$\Omega\left(n \cdot n'\right)$, which may be significantly larger in cases where $n$ is significantly larger than~$n'$, or vice versa.

In this work, we build on~\cite{Raz17} and obtain a general memory-samples lower bound that applies for a large class of learning problems and shows that for every problem in that class, any learning algorithm
requires either a memory of size at least $\Omega\left(n \cdot n'\right)$
or an exponential number of samples.

Our result is stated in terms of the properties of the matrix $M$ as a two-source extractor.
Two-source extractors, first studied by Santha and Vazirani~\cite{SV} and Chor and Goldreich~\cite{CG}, are central objects in the study of randomness and derandomization.
We show that
even a relatively weak two-source extractor implies a relatively strong memory-samples lower bound.
We note that two-source extractors have been extensively studied in numerous of works and
there are known techniques for proving that certain matrices are relatively good two-source extractors.

Our main result can be stated as follows (Corollary~\ref{cor:main1}):
Assume that $k,\ell, r$ are such that any submatrix of $M$ of at least $2^{-k} \cdot |A|$ rows and at least
$2^{-\ell} \cdot |X|$ columns, has a bias of at most $2^{-r}$.
Then,
any learning algorithm for the learning problem corresponding to $M$ requires either a memory
of size at least
$\Omega\left(k \cdot  \ell \right)$, or at least $2^{\Omega(r)}$ samples.
The result holds even if the learner has an exponentially small success probability (of $2^{-\Omega(r)}$).

A more detailed result, in terms of the constants involved, is stated in Theorem~\ref{thm:TM-main1} in terms of the properties of $M$ as an $L_2$-Extractor, a new notion that we define in Definition~\ref{definition:l2-extractor}, and is closely related to the notion of  two-source extractor.
(The two notions are equivalent up to small changes in the parameters.)

All of our results (and all applications) hold even if the learner is only required to {\em weakly learn} $x$, that is to output a hypothesis $h: A \rightarrow \{-1,1\}$ with a non-negligible correlation with the $x$-th column of the matrix $M$.
We prove in Theorem~\ref{thm:TM-main2} that even
if the learner is only required to output a hypothesis
that agrees with the $x$-th column of  $M$ on more than
a $1/2 + 2^{-\Omega(r)}$ fraction of the rows, the success
probability is at most $2^{-\Omega(r)}$.

As in~\cite{Raz16,KRT,Raz17}, we model the learning algorithm by a {\it branching program}.
A branching program is the strongest and most general model to use in this context.
Roughly speaking, the model allows a learner with infinite computational power, and bounds only the memory size of the learner and the number of samples used.


As mentioned above, our result implies all previous memory-samples lower bounds, as well as new applications. In particular:
\begin{enumerate}
\item {\bf Parities:}
A learner tries to learn $x=(x_1,\ldots,x_n) \in \{0,1\}^n$, from random linear equations over $\F_2$.
It was proved in~\cite{Raz16} (and follows also from~\cite{Raz17})
that any  learning
algorithm
requires either a memory
of size $\Omega(n^2)$ or an exponential number of samples.
The same result follows  by Corollary~\ref{cor:main1} and the fact that inner product is a good two-source extractor~\cite{CG}.
\item {\bf Sparse parities:}
A learner tries to learn $x=(x_1,\ldots,x_n) \in \{0,1\}^n$ of sparsity $\ell$, from random linear equations over $\F_2$.
In Section~\ref{sec:sparse-parities}, we reprove the main results of~\cite{KRT}. In particular,
any learning algorithm  requires:
\begin{enumerate}
	\item Assuming $\ell \le n/2$: either a memory
of size $\Omega(n \cdot \ell)$ or $2^{\Omega(\ell)}$ samples.
	\item Assuming $\ell \le n^{0.9}$:  either a memory
of size  $\Omega(n \cdot \ell^{0.99})$ or $\ell^{\Omega(\ell)}$ samples.
\end{enumerate}
\item {\bf Learning from sparse linear equations:}
A learner tries to learn $x=(x_1,\ldots,x_n) \in \{0,1\}^n$, from random sparse linear equations,  of sparsity $\ell$, over $\F_2$.
In Section~\ref{sec:sparse-equations}, we prove that
    any learning algorithm  requires:
\begin{enumerate}
	\item Assuming $\ell \le n/2$: either a memory
of size $\Omega(n \cdot \ell)$ or $2^{\Omega(\ell)}$ samples.
	\item Assuming $\ell \le n^{0.9}$:  either a memory
of size  $\Omega(n \cdot \ell^{0.99})$ or $\ell^{\Omega(\ell)}$ samples.
\end{enumerate}
\item {\bf Learning from low-degree equations:}
A learner tries to learn $x=(x_1,\ldots,x_n) \in \{0,1\}^n$, from random multilinear polynomial  equations  of degree at most $d$, over $\F_2$.
In Section~\ref{sec:low-deg-equations}, we prove that if $d\le 0.99 \cdot n$,
    any learning algorithm
requires  either a memory
of size
$\Omega\left( \binom{n}{\le d} \cdot n/d \right)$
or
$2^{\Omega(n/d)}$
samples.
\item {\bf Low-degree polynomials:}
A learner tries to learn an $n'$-variate multilinear polynomial $p$ of degree at most $d$ over $\F_2$,
from random evaluations of $p$ over $\F_2^{n'}$.
In Section~\ref{sec:low-degree-polynomials}, we prove that if $d\le 0.99 \cdot n'$,
    any learning algorithm
requires  either a memory
of size
$\Omega\left( \binom{n'}{\le d} \cdot n'/d \right)$
or
$2^{\Omega(n'/d)}$
samples.
\item {\bf Error-correcting codes:}
A learner tries to learn a codeword from random coordinates:
Assume that $M: \AA \times \XX \rightarrow \{-1,1\}$ is such that for some $|\XX|^{-1} \le  \epsilon < 1$, any pair of different columns of $M$, agree on at least $\tfrac{1-\epsilon}{2} \cdot |A|$ and at most $\tfrac{1+\epsilon}{2} \cdot |A|$ coordinates.
In Section~\ref{sec:sq}, we prove
that any  learning
algorithm for the learning problem corresponding to $M$
requires either a memory
of size $\Omega\big( (\log|\XX|) \cdot (\log(1/\epsilon))\big)$ or
$\big(\tfrac{1}{\epsilon}\big)^{\Omega(1)}$ samples.
We also point to a relation between our results and statistical-query dimension~\cite{K98,BFJKMR}.
\item {\bf Random matrices:}
Let $\XX, \AA$ be finite sets, such that,
$|\AA| \geq (2\log|\XX|)^{10}$ and
$|\XX| \geq (2\log|\AA|)^{10}$.
Let $M: \AA \times \XX \rightarrow \{-1,1\}$ be a random matrix.
Fix $k = \tfrac{1}{2}\log |\AA|$ and $\ell = \tfrac{1}{2}\log |\XX|$.
With very high probability,
any submatrix of $M$ of at least $2^{-k} \cdot |A|$ rows and at least
$2^{-\ell} \cdot |X|$ columns, has a bias of at most $2^{-\Omega(\min\{k,\ell\})}$.
Thus, by Corollary~\ref{cor:main1},
any learning algorithm for the learning problem corresponding to $M$ requires either a memory
of size
$\Omega\left((\log|\XX|) \cdot  (\log|\AA|) \right)$,
or $\big(\min\{|\XX|,|\AA|\}\big)^{\Omega(1)}$ samples.
\end{enumerate}


We note also that our results about learning from sparse linear equations have applications in bounded-storage cryptography. This is similar to~\cite{Raz16,KRT}, but in a different range of the parameters.
In particular, for every $\omega(\log n) \leq \ell \leq n$,
our results give an encryption scheme that requires a private key of length $n$, and  time complexity of
$O(\ell \log n)$ per encryption/decryption of each bit, using a random access machine. The scheme is
provenly and unconditionally secure as long as the attacker uses
at most $o(n\ell)$ memory bits
and the scheme is used at most $2^{o(\ell)}$ times.

\subsubsection*{Techniques}

Our proof follows the lines of the proof of~\cite{Raz17} and builds on that proof. The proof of~\cite{Raz17} considered the norm of the matrix~$M$, and thus essentially reduced the entire matrix to only one parameter. In our proof, we consider the properties of $M$ as
a two-source extractor,
and hence we have three parameters $(k,\ell,r)$, rather than one. Considering these three parameters, rather than one, enables a more refined analysis, resulting in a stronger lower bound with a slightly simpler proof.

A proof outline is given in Section~\ref{sec:overview}.

\subsubsection*{Motivation and Discussion}

Many previous works studied the resources needed for learning, under certain information, communication or memory constraints (see in particular~\cite{Shamir, SVW,Raz16, VV, KRT, MM1,Raz17,MT,MM2} and the many references given there).
A main message of some  of these works is that for some learning problems, access to a relatively large memory is crucial.
In other words, in some cases, learning is infeasible, due to memory constraints.

From the point of view of human learning, such results
may help to explain the importance of memory in cognitive processes.
From the point of view of machine learning, these results imply that a large class of learning algorithms cannot learn certain concept classes. In particular, this applies to any bounded-memory learning algorithm that considers the samples one by one.
In addition, these works are related to computational complexity and have applications in cryptography.


\subsubsection*{Related Work}

Independently of our work, Beame, Oveis Gharan and Yang also gave a combinatorial property of a matrix $M$, that holds for a large class of matrices and implies that any learning algorithm for the corresponding learning problem requires either a memory
of size
$\Omega\left((\log |X|) \cdot  (\log |A|)\right)$  or an exponential number of samples (when $|A| \leq |X|$)~\cite{BOGY}. Their property
is based on a measure of how matrices
amplify the 2-norms of probability distributions that is more refined than the 2-norms of these
matrices.
Their proof also builds on~\cite{Raz17}.

They also show, as an application, tight time-space lower bounds for learning
low-degree polynomials, as well as other applications.

\section{Preliminaries}

Denote by $\U_X: \XX \rightarrow \Reals^+$ the uniform distribution over $\XX$.
Denote by $\log$ the logarithm to  base $2$.
Denote by $\binom{n}{\le k} = \binom{n}{0} + \binom{n}{1} + \ldots + \binom{n}{k}$.

For a random variable $Z$ and an event $E$,
we denote by $\P_Z$ the distribution of the random variables $Z$, and
we denote by $\P_{Z|E}$ the distribution of the random variable $Z$ conditioned on the event $E$.

\subsubsection*{Viewing a Learning Problem as a Matrix}

Let $\XX$, $\AA$ be two finite sets of size larger than 1.
Let $n = \log_2|\XX|$.


Let $M: \AA \times \XX \rightarrow \{-1,1\}$ be a matrix.
The matrix $M$ corresponds to the following learning problem:
There is an unknown element $x \in \XX$ that was chosen uniformly at random. A learner tries to learn $x$ from samples
$(a, b)$, where $a \in \AA$ is chosen uniformly at random and $b = M(a,x)$.
That is, the learning algorithm is given a stream of samples,
$(a_1, b_1), (a_2, b_2) \ldots$, where each~$a_t$ is uniformly distributed and for every $t$, $b_t = M(a_t,x)$.

\subsubsection*{Norms and Inner Products}
Let $p \geq 1$.
For a function
$f: \XX \rightarrow \Reals$,
denote by $\norm{f}_{p}$ the $\ell_p$ norm of $f$, with respect to the  uniform distribution over $\XX$, that is:
$$\norm{f}_{p} =
\left( \Ex_{x \in_R \XX} \left[ |f(x)|^{p} \right] \right)^{1/p}.$$
%
%

For two functions
$f,g: \XX \rightarrow \Reals$, define their inner product with respect to the uniform distribution over $X$ as
$$\langle f,g \rangle =
\Ex_{x \in_R \XX} [ f(x) \cdot g(x) ].$$

For a matrix $M: \AA \times \XX \to \Reals$ and a row $a \in \AA$, we denote by $M_a: \XX \to \Reals$ the function corresponding to the $a$-th row of $M$. Note that for a function $f: \XX \to \Reals$, we have $\inner{M_a, f} = \frac{(M \cdot f)_a}{|X|}$.

\subsubsection*{$L_2$-Extractors and $L_\infty$-Extractors} 

\begin{definition} {\bf 	$L_2$-Extractor:} \label{definition:l2-extractor}
Let $\XX,\AA$ be two finite sets.
A matrix $M: \AA \times \XX \to \{-1,1\}$ is a $(k,\ell)$-$L_2$-Extractor with error $2^{-r}$, if for every non-negative $f : \XX \to \Reals$ with $\frac{\norm{f}_2}{\norm{f}_1} \le 2^{\ell}$ there are at most $2^{-k} \cdot |A|$ rows $a$ in $A$ with
$$
\frac{|\inner{M_a,f}|}{\norm{f}_1}
\ge 2^{-r}\;.
$$
\end{definition}

Let $\Omega$ be a finite set. We denote a distribution over $\Omega$ as a function $f:\Omega \to \Reals^{+}$ such that $\sum_{x\in \Omega}{f(x)} = 1$.
We say that a distribution $f:\Omega \to \Reals^{+}$ has min-entropy $k$ if for all $x\in \Omega$, we have $f(x) \le 2^{-k}$.

\begin{definition}{\bf $L_\infty-$Extractor:} \label{definition:min-extractor}
Let $\XX,\AA$ be two finite sets.
A matrix $M:\AA\times \XX\rightarrow \{-1,1\}$ is a $\left(k ,\ell \sim r \right)$-$L_\infty$-Extractor
if for every distribution $p_x: \XX \to \Reals^{+}$ with min-entropy at least $(\log(|\XX|)-\ell)$
and every distribution $p_a: \AA \to \Reals^{+}$ with min-entropy at least $(\log(|\AA|)-k)$,
$$\bigg|\sum_{a'\in \AA} \sum_{x' \in \XX} p_a(a') \cdot p_x(x') \cdot M(a',x')\bigg| \le 2^{-r}.$$
\end{definition}

\subsubsection*{Branching Program for a Learning Problem} \label{section:def}

In the following definition, we model the learner for the learning problem that corresponds to the matrix $M$, by a {\em branching program}.

\begin{definition} {\bf Branching Program for a Learning Problem:}
A branching program of length $m$ and width $d$, for learning, is a directed (multi) graph with vertices arranged in $m+1$ layers containing at most $d$ vertices each. In the first layer, that we think of as layer~0, there is only one vertex, called the start vertex.
A vertex of outdegree~0 is called a  leaf.
All vertices in the last layer are leaves
(but there may be additional leaves).
Every non-leaf vertex in the program has $2|\AA|$ outgoing edges, labeled by elements
$(a,b) \in \AA \times \{-1,1\}$, with exactly one edge labeled by each such $(a,b)$, and all these edges going
into vertices in the next layer.
Each leaf $v$ in the program is labeled by an element $\tilde{x}(v) \in \XX$, that
we think of as the output of the program on that leaf.

{\bf Computation-Path:} The samples
$(a_1, b_1), \ldots, (a_m, b_m) \in \AA \times \{-1,1\}$
that are given as input,
define a
computation-path in the branching
program, by starting from the start vertex
and following at
step~$t$ the edge labeled by~$(a_t, b_t)$, until reaching a leaf.
The program outputs the label $\tilde{x}(v)$ of the leaf $v$ reached by the computation-path.

{\bf Success Probability:}
The success probability of the program is the probability that $\tilde{x}=x$,
where $\tilde{x}$ is the element that the program outputs, and the probability is over $x,a_1,\ldots,a_m$ (where $x$ is uniformly distributed over $\XX$ and $a_1,\ldots,a_m$ are uniformly distributed over $\AA$, and for every $t$, $b_t = M(a_t,x)$).

\end{definition}

\section{Overview of the Proof} \label{sec:overview}

The proof follows the lines of the proof of~\cite{Raz17} and builds on that proof.

Assume that $M$ is a $(k,\ell)$-$L_2$-extractor with error $2^{-r'}$,
and let $r = \min\{k, \ell, r'\}$.
Let $B$ be a branching program
for the learning problem that corresponds to the matrix $M$.
Assume for a contradiction that $B$ is
of length $m=2^{\epsilon  r}$ and width $d=2^{\epsilon  k \ell}$,
where $\epsilon$ is a  small constant.

We define the {\it truncated-path}, $\T$, to be the same as the computation-path of $B$, except that it sometimes stops before reaching a leaf.
Roughly speaking, $\T$ stops before reaching a leaf if certain ``bad'' events occur.
Nevertheless, we show that the probability that $\T$ stops before reaching a leaf is negligible, so we can think of $\T$ as almost identical to the computation-path.

For a vertex $v$ of $B$, we denote by $E_v$
the event that $\T$ reaches the vertex $v$.
We denote  by $\Pr(v) = \Pr(E_v)$ the probability for $E_v$
(where the probability is over $x,a_1,\ldots,a_m$), and we denote
by $\P_{x|v} = \P_{x|E_v}$ the distribution of the random variable $x$ conditioned on the event~$E_v$.
Similarly,
for an edge~$e$ of the branching program $B$, let $E_e$ be
the event that $\T$ traverses the edge~$e$.
Denote, $\Pr(e) = \Pr(E_e)$, and
$\P_{x|e} = \P_{x|E_e}$.

A vertex $v$ of $B$ is called {\em significant} if
$$
\norm{\P_{x|v}}_{2} > 2^{\ell} \cdot 2^{-n}.
$$
Roughly speaking, this means that conditioning on the event that $\T$ reaches the
vertex~$v$, a non-negligible amount of information is known about $x$.
In order to guess $x$ with a non-negligible success probability, $\T$ must reach a significant vertex. Lemma~\ref{lemma-main1} shows that the probability that $\T$ reaches any significant vertex is negligible, and thus the main result follows.

To prove Lemma~\ref{lemma-main1}, we show that for every fixed significant vertex $s$, the probability that $\T$ reaches $s$ is at most
$2^{-\Omega(k \ell)}$ (which is smaller than one over the number of vertices
in~$B$). Hence, we can use a union bound to prove the lemma.

The proof  that the probability that $\T$ reaches $s$ is extremely small is the main
part of the proof.
To that end, we use the following functions to measure the progress made by the branching program towards reaching $s$.

Let $L_i$ be the set of vertices $v$ in layer-$i$ of $B$,
such that $\Pr (v) >0$. Let $\Gamma_i$ be the set of edges $e$ from layer-$(i-1)$ of $B$ to layer-$i$ of $B$,
such that $\Pr (e) >0$.
Let
$$
{\cal Z}_i =
\sum_{v \in L_i} \Pr(v) \cdot \langle \P_{x|v},\P_{x|s} \rangle^{k},
$$
$$
{\cal Z}'_i =
\sum_{e \in \Gamma_i} \Pr(e) \cdot \langle \P_{x|e},\P_{x|s} \rangle^{k}.
$$
We think of ${\cal Z}_i, {\cal Z}'_i$ as measuring the progress made by the branching program, towards reaching a state with distribution similar to
$\P_{x|s}$.

We show that each ${\cal Z}_i$ may only be negligibly larger than ${\cal Z}_{i-1}$. Hence, since it's easy to calculate that ${\cal Z}_0 = 2^{-2nk}$, it follows that
${\cal Z}_i$  is close to $2^{-2nk}$, for every $i$.
On the other hand, if $s$ is in layer-$i$ then
${\cal Z}_i$ is at least $\Pr(s) \cdot \langle \P_{x|s},\P_{x|s}\rangle^k$. Thus,
$\Pr(s) \cdot \langle \P_{x|s},\P_{x|s}\rangle^k$ cannot be much larger than
$2^{-2nk}$.
Since $s$ is significant,
$\langle \P_{x|s},\P_{x|s}\rangle^k > 2^{\ell k} \cdot 2^{-2nk}$
and hence $\Pr(s)$ is at most $2^{-\Omega(k \ell)}$.

The proof that ${\cal Z}_i$ may only be negligibly larger than ${\cal Z}_{i-1}$
is done in two steps:
Claim~\ref{claim-p2} shows by
a simple convexity argument that ${\cal Z}_i \leq {\cal Z}'_i$. The hard part, that is done in Claim~\ref{claim-p0} and Claim~\ref{claim-p1}, is to prove that
${\cal Z}'_i$ may only be negligibly larger than ${\cal Z}_{i-1}$.

For this proof, we
define for every vertex $v$, the set of edges
$\Gamma_{out}(v)$ that are going out of~$v$, such that $\Pr(e) >0$.
Claim~\ref{claim-p0} shows
that for every vertex $v$,
$$
\sum_{e \in \Gamma_{out}(v)} \Pr(e)
\cdot \langle \P_{x|e},\P_{x|s} \rangle^{k}$$
may only be negligibly higher than
$$
\Pr(v) \cdot
\langle \P_{x|v},\P_{x|s} \rangle^{k}.
$$

For the proof of
Claim~\ref{claim-p0}, which is the hardest proof in the paper,
and the most important place where our proof deviates from (and simplifies) the proof of~\cite{Raz17},
we consider the function $\P_{x|v} \cdot \P_{x|s}$. We first show how to bound
$\norm{\P_{x|v} \cdot \P_{x|s}}_2$. We then consider two cases:
If $\norm{\P_{x|v} \cdot \P_{x|s}}_1$
is negligible, then
$\langle \P_{x|v},\P_{x|s} \rangle^{k}$ is negligible and doesn't contribute much,
and we show that for every $e \in \Gamma_{out}(v)$,
$\langle \P_{x|e},\P_{x|s} \rangle^{k}$ is also negligible and doesn't contribute much.
 If $\norm{\P_{x|v} \cdot\P_{x|s}}_1$ is non-negligible,
we use the bound on $\norm{\P_{x|v} \cdot\P_{x|s}}_2$ and the
assumption that $M$ is a $(k,\ell)$-$L_2$-extractor
to show that for almost all edges $e \in \Gamma_{out}(v)$, we have that
$\langle \P_{x|e},\P_{x|s} \rangle^k$ is very close to
$\langle \P_{x|v},\P_{x|s} \rangle^k$.
Only an exponentially small ($2^{-k}$) fraction of edges are ``bad'' and give
a significantly larger $\langle \P_{x|e},\P_{x|s} \rangle^k$.

The reason that in the definitions of ${\cal Z}_i$ and ${\cal Z}'_i$
we raised $\langle \P_{x|v},\P_{x|s} \rangle$ and $\langle \P_{x|e},\P_{x|s} \rangle$ to the power of $k$ is that this is the largest power for which the contribution of the ``bad'' edges is still small (as their fraction is
$2^{-k}$).

This outline oversimplifies many details. Let us briefly mention two of them. First,
it is not so easy to bound $\norm{\P_{x|v} \cdot \P_{x|s}}_2$.
We do that by bounding $\norm{\P_{x|s}}_2$ and $\norm{\P_{x|v}}_{\infty}$.
In order to bound $\norm{\P_{x|s}}_2$,
we force $\T$ to stop whenever it reaches a significant vertex
(and thus we are able to bound
$\norm{\P_{x|v}}_2$ for every vertex reached by $\T$).
In order to bound
$\norm{\P_{x|v}}_{\infty}$, we force $\T$ to stop whenever $\P_{x|v}(x)$ is large, which allows us to consider
only the ``bounded'' part of $\P_{x|v}$. (This is related to the technique of {\em flattening} a distribution that was used in~\cite{KR}).
Second, some edges are so ``bad'' that
their contribution to ${\cal Z}'_i$ is huge so they cannot be ignored. We force $\T$ to stop before traversing any such edge.
(This is related to an idea that was used in~\cite{KRT} of analyzing separately paths that
traverse
``bad'' edges).
We show that the total probability that $\T$ stops before reaching a leaf is negligible.

\section{Main Result}
\begin{theorem} \label{thm:TM-main1}
Let $\tfrac{1}{100}< c <\tfrac{2}{3}$.
Fix $\gamma$
to be such that
$\tfrac{3c}{2} < \gamma^2 < 1$.

Let $\XX$, $\AA$ be two finite sets.
Let $n = \log_2|\XX|$.
Let $M: \AA \times \XX \rightarrow \{-1,1\}$ be a matrix
which is a $(k',\ell')$-$L_2$-extractor with error $2^{-r'}$,
for sufficiently large\footnote{By {\it ``sufficiently large''} we mean that $k',\ell',r'$ are larger than some  constant that depends on $\gamma$.}
$k',\ell'$ and $r'$, where $\ell' \leq n$.
Let
\begin{equation}
\label{eq:param setting}
\rr :=  \min\left\{ \tfrac{r'}{2}, \tfrac{(1-\gamma)k'}{2}, \tfrac{(1-\gamma)\ell'}{2} -1 \right\}.
\end{equation}

Let $B$ be a branching program of
length at most $2^{r}$ and width at most $2^{c \cdot k' \cdot \ell'}$
for the learning problem that corresponds to the matrix $M$.
Then,
the success probability of $B$
is at most $O(2^{-r})$.
\end{theorem}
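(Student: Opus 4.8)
The plan is to carry out the branching-program potential argument of~\cite{Raz17} in the sharper $L_2$-extractor setting. Suppose for contradiction that $B$ has length $m \le 2^{r}$, width $d \le 2^{c k'\ell'}$, and success probability $\omega(2^{-r})$. First I would define the \emph{truncated path} $\T$ that follows the computation path of $B$ on the random stream but halts early whenever, at the current vertex $v$: (i) $v$ is \emph{significant}, i.e. $\norm{\P_{x|v}}_{2} > 2^{\ell}\cdot 2^{-n}$ for a suitable $\ell$ slightly below $\ell'$; (ii) the true $x$ has atypically large posterior mass $\P_{x|v}(x)$ --- a \emph{flattening} trigger that lets one replace $\P_{x|v}$ by its bounded part and thereby control $\norm{\P_{x|v}}_{\infty}$; or (iii) the next edge $e$ out of $v$ is so ``bad'' that $\inner{\P_{x|e},\P_{x|s}}$ would exceed a small constant multiple of $\inner{\P_{x|v},\P_{x|s}}$. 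A counting argument shows $\T$ halts before a leaf only with negligible probability: trigger (i) is rare by the very bound on reaching significant vertices established below; trigger (ii) is rare by a direct Markov-type estimate on the posterior; and trigger (iii) is rare because, for each $v$, the $L_2$-extractor property forces all but a $2^{-k'}$-fraction of rows $a$ to have $|\inner{M_a,\P_{x|v}\cdot\P_{x|s}}|$ and $|\inner{M_a,\P_{x|v}}|$ small --- so that the corresponding edges cannot cause (iii) to fire --- and only the remaining $2^{-k'}$-fraction of rows can, while $m\cdot 2^{-k'} \le 2^{r-k'}$ is negligible by~\eqref{eq:param setting}. Since a program that outputs $x$ with non-negligible probability must drive $\norm{\P_{x|v}}_{2}$ far above $2^{-n}$, it must reach a significant vertex, so it suffices to bound the probability that $\T$ ever reaches one.

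Fix an arbitrary significant vertex $s$; I would show $\Pr(E_s) \le 2^{-\Omega(k\ell)}$ for an exponent $k$ slightly below $k'$, and then a union bound over the at most $md \le 2^{r + c k'\ell'}$ vertices of $B$ finishes, since $2k\ell > c k'\ell'$ by the hypothesis $\gamma^2 > 3c/2$. The progress measure is
$$
{\cal Z}_i \;=\; \sum_{v\in L_i}\Pr(v)\,\inner{\P_{x|v},\P_{x|s}}^{k},\qquad
{\cal Z}'_i \;=\; \sum_{e\in\Gamma_i}\Pr(e)\,\inner{\P_{x|e},\P_{x|s}}^{k},
$$
where the exponent $k$ is chosen as the largest power for which the $2^{-k'}$-fraction of ``bad'' outgoing edges, capped by trigger (iii), still contributes negligibly relative to the main term. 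One computes ${\cal Z}_0 = \inner{\U_X,\P_{x|s}}^{k} = 2^{-2nk}$, and if $s$ lies in layer $i$ then ${\cal Z}_i \ge \Pr(E_s)\cdot\inner{\P_{x|s},\P_{x|s}}^{k} > \Pr(E_s)\cdot 2^{2k\ell}\cdot 2^{-2nk}$ by significance; so it is enough to prove ${\cal Z}_i \le (1+o(1))\,{\cal Z}_{i-1}$ for every $i$, whence ${\cal Z}_i \le 2\cdot 2^{-2nk}$ and $\Pr(E_s) \le 2^{-\Omega(k\ell)}$.

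The bound ${\cal Z}_i \le {\cal Z}'_i$ is pure convexity: each layer-$i$ posterior $\P_{x|v}$ is the $\Pr$-weighted average of the posteriors on its incoming edges, so $t\mapsto t^{k}$ gives it. The crux is ${\cal Z}'_i \le (1+o(1))\,{\cal Z}_{i-1}$, which I would prove vertex by vertex: for each non-halted $v$ in layer $i-1$,
$$
\sum_{e\in\Gamma_{out}(v)}\Pr(e)\,\inner{\P_{x|e},\P_{x|s}}^{k} \;\le\; (1+o(1))\,\Pr(v)\,\inner{\P_{x|v},\P_{x|s}}^{k}.
$$
Write $\mu = \inner{\P_{x|v},\P_{x|s}} = \norm{\P_{x|v}\cdot\P_{x|s}}_1$; an edge $e = (v,a,b)$ has $\inner{\P_{x|e},\P_{x|s}} = (\mu + b\,\inner{M_a,\P_{x|v}\cdot\P_{x|s}})/(1+b\,\inner{M_a,\P_{x|v}})$, after writing out the one-step posterior update. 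If $\mu$ is negligible, so is $\mu^{k}$, and one checks the surviving edges out of $v$ (each with $\inner{\P_{x|e},\P_{x|s}}\le O(\mu)$ by trigger (iii)) keep their contribution negligible. Otherwise set $f = \P_{x|v}\cdot\P_{x|s}/\mu$: it is non-negative with $\norm{f}_2/\norm{f}_1 \le 2^{\ell'}$, because $\norm{\P_{x|v}\cdot\P_{x|s}}_2 \le \norm{\P_{x|v}}_{\infty}\cdot\norm{\P_{x|s}}_2$ and $\norm{\P_{x|v}}_{\infty}$ is controlled by the flattening trigger while $\norm{\P_{x|s}}_2$ only barely exceeds $2^{\ell}\cdot 2^{-n}$ since $\T$ halts at significant vertices. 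The $(k',\ell')$-$L_2$-extractor property then says all but a $2^{-k'}$-fraction of rows $a$ satisfy $|\inner{M_a,f}| < 2^{-r'}$, and (applying the extractor also to $\P_{x|v}$ itself, legitimate since $v$ is non-significant) similarly all but a $2^{-k'}$-fraction of rows have $|\inner{M_a,\P_{x|v}}|$ small. For each such ``good'' row both edges $e$ satisfy $\inner{\P_{x|e},\P_{x|s}} = \mu\,(1 \pm 2^{-\Omega(r')})$, so these edges contribute at most $(1 + o(1))\,\mu^{k}$ to the sum; and the $\le 2^{-k'}$-fraction of ``bad'' rows that survive trigger (iii) have $\inner{\P_{x|e},\P_{x|s}} \le O(\mu)$, so they contribute at most $2^{-k'}\cdot O(1)^{k}\cdot\mu^{k} = o(1)\cdot\mu^{k}$ precisely because $k$ was chosen $< k'$. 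Summing over $v$, and compounding the per-step $(1+o(1))$ factor over the $m \le 2^{r}$ steps using~\eqref{eq:param setting}, yields ${\cal Z}_i \le (1+o(1)){\cal Z}_{i-1}$.

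The step I expect to be the main obstacle is exactly the bound on $\norm{\P_{x|v}\cdot\P_{x|s}}_2$ together with the precise design of trigger (iii): the cap in (iii) must be loose enough that imposing it costs only negligible probability over all $m$ steps (which is why firing is tied to the $2^{-k'}$-fraction of extractor-``bad'' rows rather than to a Markov bound), yet tight enough that the surviving bad edges contribute $o(1)\cdot\mu^{k}$ in the $k$-th power --- and simultaneously $\norm{\P_{x|v}}_{\infty}$ and $\norm{\P_{x|s}}_2$ must be kept under control via triggers (ii) and (i) so that $f = \P_{x|v}\cdot\P_{x|s}/\mu$ actually satisfies the $2^{\ell'}$-ratio hypothesis of the extractor. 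This is the place where the present argument departs from, and simplifies, that of~\cite{Raz17}.
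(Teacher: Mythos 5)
Your overall architecture matches the paper closely — truncated path with three stopping rules, the progress potential $\mathcal{Z}_i = \sum_v \Pr(v) \inner{\P_{x|v},\P_{x|s}}^{k}$ with exponent $k$ just below $k'$, convexity for $\mathcal{Z}_i \le \mathcal{Z}'_i$, and the one-step analysis built around $f = P \cdot \P_{x|s}$ together with an $L_2$-extractor application using the flattening trigger to control $\norm{P}_\infty$ and the significance trigger to control $\norm{\P_{x|s}}_2$. The final union bound and the exponent arithmetic ($2k\ell > ck'\ell'$ from $\gamma^2 > 3c/2$) are also the paper's.

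There is, however, a genuine gap in your trigger (iii). You define it so that $\T$ halts when $\inner{\P_{x|e},\P_{x|s}}$ would exceed a constant multiple of $\inner{\P_{x|v},\P_{x|s}}$, and you bound its firing probability by appealing to extractor-badness of both $\inner{M_a, \P_{x|v}\cdot\P_{x|s}}$ and $\inner{M_a, \P_{x|v}}$. This makes the truncated path depend on the target vertex $s$. But the union bound you run requires a single, $s$-independent truncated path: you must bound $\Pr[\T \text{ halts early due to (iii)}]$ once, and separately bound $\Pr[\T \text{ reaches } s]$ for each of the up-to-$2^{r+ck'\ell'}$ significant vertices. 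If instead you use $\T_s$ per $s$, then $\Pr[\text{comp.\ path reaches } s] \le \Pr[\T_s \text{ reaches } s] + \Pr[\T_s \text{ halts early}]$, and summing the second term over all $s$ picks up a factor $\approx 2^{r + ck'\ell'}$ against only $m \cdot 2^{-k'} \approx 2^{r-k'}$, which is nowhere near small when $\ell'$ is large. The paper avoids this: its $\Bad(v)$ is the set of rows $a$ with $|(M\cdot \P_{x|v})(a)| \ge 2^{-r'}$, involving only $\P_{x|v}$, hence $s$-independent. Its role is solely to keep the normalization factor $c_e$ bounded away from $0$ (so that $\P_{x|e}$ is a mild reweighting of $P$); once that holds, the automatic inequality $|\inner{M_a,f}| \le \norm{f}_1$ already gives $\inner{\P_{x|e},\P_{x|s}} \le O(\norm{f}_1)$ for \emph{every} surviving edge. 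The $s$-dependent ``bad rows'' (large $\inner{M_a,f}$) are never stopped; they are simply a $\le 2^{-k'}$-fraction contributing $\le 2^{-k'} \cdot 2^{k}\mu^k$ to the sum, which is tiny because $k = \gamma k' < k'$. So your instinct that the heavy-$\inner{M_a, f}$ rows need separate treatment is right, but they must be handled inside the $k$-th-power potential estimate, not inside the stopping rule.

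A smaller imprecision: the one-step recursion is not purely multiplicative. When $\norm{f}_1 < 2^{-2n}$ one cannot compare to $\inner{\P_{x|v},\P_{x|s}}^{k}$ at all, and the paper instead picks up an additive term $(2^{-2n+2})^k$, giving $\mathcal{Z}_i \le \mathcal{Z}_{i-1}(1+2^{-r})^k + (2^{-2n+2})^k$ and ultimately $\mathcal{Z}_m \le 2^{4k+2r}\cdot 2^{-2nk}$, not $2\cdot 2^{-2nk}$. This does not affect the final exponent since $4k + 2r \ll 2k\ell$, but your statement $\mathcal{Z}_i \le (1+o(1))\mathcal{Z}_{i-1}$ as written is not what is being proved.
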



\begin{proof}
Let
\begin{equation}
\label{eq:param setting2}
\kk := \gamma k'
\qquad \mbox{and} \qquad
\ell := \gamma \ell'/3.
\end{equation}
Note that by the assumption that $k',\ell'$ and $r'$ are sufficiently large, we get that $\kk, \ell$ and~$\rr$  are also sufficiently large.
Since $\ell' \le n$, we have
$\ell + \rr \le \tfrac{\gamma \ell'}{3} +\tfrac{(1-\gamma)\ell'}{2} < \tfrac{\ell'}{2} \le \tfrac{n}{2}$.
Thus,
\begin{equation}\label{eq:rr ell}\rr <n/2-\ell.\end{equation}

Let $B$ be a branching program of
length $m=2^{\rr}$ and width $d=2^{c \cdot k' \cdot \ell'}$
for the learning problem that corresponds to the matrix $M$.
We will show that the success probability of $B$
is at most $O(2^{-\rr})$.


\subsection{The Truncated-Path and Additional Definitions and Notation}

We will define the {\bf truncated-path}, $\T$, to be the same as the computation-path of $B$, except that it sometimes stops before reaching a leaf.
Formally,
we define  $\T$, together with several other definitions and notations, by induction on the layers of the branching program $B$.

Assume that we already defined the truncated-path $\T$, until it reaches layer-$i$ of $B$.
For a vertex $v$ in layer-$i$ of $B$, let $E_v$ be
the event that $\T$ reaches the vertex $v$.
For simplicity, we denote  by $\Pr(v) = \Pr(E_v)$ the probability for $E_v$
(where the probability is over $x,a_1,\ldots,a_m$), and we denote
by $\P_{x|v} = \P_{x|E_v}$ the distribution of the random variable $x$ conditioned on the event $E_v$.

There will be three cases in which the truncated-path $\T$ stops on a non-leaf $v$:
\begin{enumerate}
\item
If $v$ is a, so called, significant vertex, where the $\ell_2$ norm of $\P_{x|v}$ is non-negligible.
(Intuitively, this means that conditioned on the event that $\T$ reaches~$v$, a non-negligible amount of information is known about $x$).
\item
If $\P_{x|v} (x)$ is non-negligible.
(Intuitively, this means that conditioned on the event that $\T$ reaches~$v$, the correct element $x$ could have been guessed with a non-negligible probability).
\item
If $(M \cdot \P_{x|v}) (a_{i+1})$ is non-negligible.
(Intuitively, this means that
$\T$ is about to traverse a ``{bad}'' edge, which is traversed with a non-negligibly higher or lower probability than other edges).
\end{enumerate}

Next, we describe these three cases more formally.

\subsubsection*{Significant Vertices}

We say that a vertex $v$ in layer-$i$ of $B$ is {\bf significant} if
$$
\norm{\P_{x|v}}_{2} > 2^{\ell} \cdot 2^{-n}.
$$

\subsubsection*{Significant Values}

Even if $v$ is not significant, $\P_{x|v}$ may have relatively large values.
For a vertex $v$ in layer-$i$ of~$B$, denote by $\Sig(v)$ the set of all $x' \in \XX$, such that,
$$\P_{x|v}(x') > 2^{2\ell +2\rr} \cdot 2^{-n}.$$

\subsubsection*{Bad Edges}

For a vertex $v$ in layer-$i$ of $B$, denote by $\Bad(v)$ the set of all $\alpha \in \AA$, such that,
$$\left| (M \cdot\P_{x|v})(\alpha) \right|
\geq 2^{-r'}.$$

\subsubsection*{The Truncated-Path $\T$}

We define $\T$ by induction on the layers of the branching program $B$.
Assume that we already defined $\T$ until it reaches a vertex $v$ in layer-$i$ of~$B$.
The path $\T$ stops on $v$ if (at least) one of the following occurs:
\begin{enumerate}
\item
$v$ is significant.
\item
$x \in \Sig(v)$.
\item
$a_{i+1} \in \Bad(v)$.
\item
$v$ is a leaf.
\end{enumerate}
Otherwise, $\T$ proceeds by following the edge labeled by~$(a_{i+1}, b_{i+1})$
(same as the computational-path).

\subsection{Proof of Theorem~\ref{thm:TM-main1}}

Since $\T$ follows the computation-path of $B$, except that it sometimes stops before reaching a leaf, the success probability of $B$ is bounded (from above) by the probability that $\T$ stops before reaching a leaf, plus the probability that $\T$ reaches a leaf $v$ and $\tilde{x}(v) = x$.

The main lemma needed for the proof of Theorem~\ref{thm:TM-main1} is
Lemma~\ref{lemma-main1} that shows that
the probability that $\T$ reaches a significant
vertex is at most $O(2^{-\rr})$.

\begin{lemma} \label{lemma-main1}
The probability that $\T$ reaches a significant vertex is at most $O(2^{-\rr})$.
\end{lemma}

Lemma~\ref{lemma-main1} is proved in Section~\ref{section:mainlemma}. We will now show how the proof of Theorem~\ref{thm:TM-main1} follows from that lemma.

Lemma~\ref{lemma-main1}  shows that the probability that $\T$ stops on a non-leaf vertex, because of the first reason (i.e., that the vertex is significant), is small. The next two lemmas imply that the probabilities that $\T$ stops on a non-leaf vertex, because of the second and third reasons, are also small.

\begin{claim} \label{claim-A0}
If $v$ is a non-significant vertex of $B$ then
$$
\Pr_{x}
[x \in \Sig(v)  \; | \; E_v] \leq 2^{-2\rr}.
$$
\end{claim}

\begin{proof}
Since $v$ is not significant,
$$
\Ex_{x' \sim \P_{x|v}} \left[ \P_{x|v}(x') \right]  =
\sum_{x' \in \XX} \left[ \P_{x|v}(x')^{2} \right] =
2^n \cdot \Ex_{x' \in_R \XX} \left[ \P_{x|v}(x')^{2} \right]
\leq 2^{2\ell} \cdot 2^{-n}.
$$
Hence, by Markov's inequality,
$$
\Pr_{x' \sim \P_{x|v}}
\left[
\P_{x|v}(x') > 2^{2\rr} \cdot  2^{2\ell} \cdot 2^{-n}
\right]
\leq 2^{-2\rr}.
$$
Since conditioned on $E_v$, the distribution of $x$ is $\P_{x|v}$, we obtain
\[
\Pr_{x}
\left[x \in \Sig(v) \; \big| \; E_v\right] =
\Pr_{x}
\left[
\left(\P_{x|v}(x) > 2^{2\rr} \cdot  2^{2\ell} \cdot 2^{-n}\right)
\; \big|\; E_v\;
\right]
\leq 2^{-2\rr}.\qedhere
\]
\end{proof}

\begin{claim} \label{claim-A2}
If $v$ is a non-significant vertex of $B$ then
$$
\Pr_{a_{i+1}} [a_{i+1} \in \Bad(v)] \leq 2^{-2\rr}.
$$
\end{claim}

\begin{proof}
Since $v$ is not significant, $\norm{\P_{x|v}}_2 \le 2^{\ell} \cdot 2^{-n}$.
Since $\P_{x|v}$ is a distribution, $\norm{\P_{x|v}}_1 = 2^{-n}$.
Thus, $$\frac{\norm{\P_{x|v}}_2}{\norm{\P_{x|v}}_1} \le 2^{\ell} \le 2^{\ell'}.$$
Since $M$ is a $(k',\ell')$-$L_2$-extractor with error $2^{-r'}$, there are at most $2^{-k'} \cdot |A|$ elements $\alpha \in \AA$ with
$$
\left|\inner{M_{\alpha}, \P_{x|v}}\right| \ge 2^{-r'} \cdot {\norm{\P_{x|v}}_1} = 2^{-r'} \cdot 2^{-n}$$
The claim follows since $a_{i+1}$ is uniformly distributed over $\AA$ and since $k' \ge 2\rr$ (Equation~\eqref{eq:param setting}).
\end{proof}

We can now use Lemma~\ref{lemma-main1}, Claim~\ref{claim-A0} and Claim~\ref{claim-A2} to prove that the probability that~$\T$ stops before reaching a leaf is at most $O(2^{-\rr})$.
Lemma~\ref{lemma-main1}  shows that the probability that~$\T$ reaches a significant vertex and hence stops because of the first reason, is at most $O(2^{-\rr})$.
Assuming that $\T$ doesn't reach any significant vertex (in which case it would have stopped because of the first reason), Claim~\ref{claim-A0} shows that in each step, the probability that $\T$ stops because of the second reason, is at most $2^{-2\rr}$. Taking a union bound over the $m=2^{\rr}$ steps, the total probability that $\T$ stops because of the second reason, is at most $2^{-\rr}$. In the same way,
assuming that $\T$ doesn't reach any significant vertex (in which case it would have stopped because of the first reason), Claim~\ref{claim-A2} shows that in each step, the probability that $\T$ stops because of the third reason, is at most $2^{-2\rr}$. Again, taking a union bound over the $2^{\rr}$ steps, the total probability that $\T$ stops because of the third reason, is at most $2^{-\rr}$.
Thus, the total probability that~$\T$ stops (for any reason) before reaching a leaf is at most $O(2^{-\rr})$.

Recall that if $\T$ doesn't stop before reaching a leaf, it just follows the computation-path of~$B$.
Recall also that by
Lemma~\ref{lemma-main1},  the probability that $\T$ reaches a significant leaf is at most $O(2^{-\rr})$.
Thus, to bound (from above) the success probability of $B$ by
$O(2^{-\rr})$,
it remains to bound the probability that $\T$ reaches a non-significant leaf $v$
and
$\tilde{x}(v) = x$.
Claim~\ref{claim-A1} shows that for any non-significant leaf $v$, conditioned
on the event that $\T$ reaches~$v$, the probability
for $\tilde{x}(v) = x$ is at most $2^{- \rr}$, which completes the proof
of Theorem~\ref{thm:TM-main1}.

\begin{claim} \label{claim-A1}
If $v$ is a non-significant leaf of $B$ then
$$
\Pr [ \tilde{x}(v) = x \; | \; E_v]
\leq  2^{- \rr}.
$$

\end{claim}
\begin{proof}
Since $v$ is not significant,
$$
\Ex_{x' \in_R \XX} \left[ \P_{x|v}(x')^{2} \right]
\leq 2^{2\ell} \cdot 2^{-2n}.
$$
Hence, for every $x' \in \XX$,
$$
\Pr [x=x' \; | \; E_v]=
\P_{x|v}(x')
\leq 2^{\ell} \cdot 2^{-n/2}
\le 2^{- \rr}
$$
since $\rr \le n/2 - \ell$ (Equation~\eqref{eq:rr ell}).
In particular,
\[
\Pr [\tilde{x}(v) = x \; | \; E_v]
\le 2^{- \rr}.\qedhere
\]
\end{proof}

This completes the proof
of Theorem~\ref{thm:TM-main1}.
\end{proof}

\subsection{Proof of Lemma~\ref{lemma-main1}} \label{section:mainlemma}

\begin{proof}
We need to prove that the probability that $\T$ reaches any significant vertex is at most $O(2^{-\rr})$.
Let $s$ be a  significant vertex of $B$. We will bound from above
the probability that~$\T$ reaches~$s$, and then use
a union bound over all significant vertices of $B$.
Interestingly, the upper bound on the width of $B$ is used only in the union bound.

\subsubsection*{The Distributions $\P_{x|v}$ and  $\P_{x|e}$}

Recall that for a vertex $v$ of $B$, we denote by $E_v$
the event that $\T$ reaches the vertex $v$.
For simplicity, we denote  by $\Pr(v) = \Pr(E_v)$ the probability for $E_v$
(where the probability is over $x,a_1,\ldots,a_m$), and we denote
by $\P_{x|v} = \P_{x|E_v}$ the distribution of the random variable $x$ conditioned on the event $E_v$.

Similarly,
for an edge~$e$ of the branching program $B$, let $E_e$ be
the event that $\T$ traverses the edge~$e$.
Denote, $\Pr(e) = \Pr(E_e)$
(where the probability is over $x,a_1,\ldots,a_m$), and
$\P_{x|e} = \P_{x|E_e}$.

\begin{claim} \label{claim-d0}
For any edge~$e = (v,u)$ of $B$, labeled by $(a,b)$, such that
$\Pr(e) > 0$, for any $x' \in \XX$,
$$
\P_{x|e} (x')  = \left\{
\begin{array}{ccccc}
  0
  & \;\;\;\; \mbox{if } & x' \in \Sig(v) & \mbox{or} & M(a,x') \neq b \\
  \P_{x|v} (x') \cdot c_e^{-1}
  & \;\;\;\; \mbox{if } & x' \not \in \Sig(v)& \mbox{and} & M(a,x') = b
\end{array}
\right.
$$
where $c_e$ is a normalization factor that satisfies,
$$
c_e \geq
\tfrac{1}{2}
- 2\cdot 2^{-2\rr}.
$$
\end{claim}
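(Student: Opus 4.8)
The plan is to unwind the definition of the event $E_e$ and then read the conditional law of $x$ off it directly, with a single elementary estimate at the end. Fix the edge $e=(v,u)$, labeled by $(a,b)$, with $\Pr(e)>0$, and say $v$ lies in layer-$i$. Since $\Pr(e)>0$, the truncated path $\T$ traverses $e$ with positive probability, so $v$ is neither a leaf nor significant, and $a\notin\Bad(v)$ (if $a\in\Bad(v)$, then $\T$ would stop on $v$ whenever $a_{i+1}=a$, which is forced by traversing $e$). Now $\T$ traverses $e$ exactly when it reaches $v$, does not stop on $v$, and the $(i{+}1)$-st sample is $(a_{i+1},b_{i+1})=(a,b)$; since $b_{i+1}=M(a_{i+1},x)$ and $a_{i+1}=a\notin\Bad(v)$, the only stopping condition at $v$ that is not already excluded is ``$x\in\Sig(v)$''. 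Hence
\[
E_e \;=\; E_v \,\cap\, \{x\notin\Sig(v)\} \,\cap\, \{a_{i+1}=a\} \,\cap\, \{M(a,x)=b\}.
\]

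First I would remove the sample from the conditioning. The event $E_v$ is measurable with respect to $x,a_1,\dots,a_i$, while $a_{i+1}$ is a fresh uniform sample independent of these, hence independent of $x$ given $E_v$; so conditioning additionally on $\{a_{i+1}=a\}$ does not change the conditional distribution of $x$. Therefore, for every $x'\in\XX$,
\[
\P_{x|e}(x') \;=\; \Pr\big[\,x=x' \,\big|\, E_v,\; x\notin\Sig(v),\; M(a,x)=b\,\big],
\]
which is $0$ when $x'\in\Sig(v)$ or $M(a,x')\neq b$, and equals $\P_{x|v}(x')\cdot c_e^{-1}$ otherwise, with the normalization factor $c_e=\Pr[\,x\notin\Sig(v)\ \text{and}\ M(a,x)=b \mid E_v\,]=\sum_{x''\notin\Sig(v),\, M(a,x'')=b}\P_{x|v}(x'')$. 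This already yields the two-case formula in the statement.

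It remains only to bound $c_e$ from below. Write $c_e\ge \Pr[M(a,x)=b\mid E_v]-\Pr[x\in\Sig(v)\mid E_v]$; the subtracted term is at most $2^{-2\rr}$ by Claim~\ref{claim-A0}, which applies because $v$ is non-significant. For the first term, using $b,M(a,x)\in\{-1,1\}$ and $\Ex[M(a,x)\mid E_v]=(M\cdot\P_{x|v})(a)$,
\[
\Pr[M(a,x)=b\mid E_v] \;=\; \tfrac{1}{2}\big(1+b\cdot(M\cdot\P_{x|v})(a)\big) \;\ge\; \tfrac{1}{2}\big(1-2^{-r'}\big),
\]
where the inequality uses $a\notin\Bad(v)$, i.e.\ $|(M\cdot\P_{x|v})(a)|<2^{-r'}$. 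Since $\rr\le r'/2$ by~\eqref{eq:param setting}, we have $2^{-r'}\le 2^{-2\rr}$, so $c_e\ge \tfrac12-\tfrac12 2^{-2\rr}-2^{-2\rr}\ge \tfrac12-2\cdot 2^{-2\rr}$, as claimed.

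I do not expect a serious obstacle here; the only points requiring care are (i) checking that $\Pr(e)>0$ rules out every stopping condition at $v$ except $\{x\in\Sig(v)\}$, so that the displayed description of $E_e$ is exact, and (ii) the independence of the new sample $a_{i+1}$ from $(x,a_1,\dots,a_i)$, which is what licenses dropping $\{a_{i+1}=a\}$ from the conditioning and reducing to a statement about $\P_{x|v}$.
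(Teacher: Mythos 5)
Your proof is correct and follows essentially the same route as the paper's: rule out all stopping conditions at $v$ except $\{x\in\Sig(v)\}$, read off the conditional law of $x$ on $E_e$, and lower-bound $c_e$ via Claim~\ref{claim-A0} and the $\Bad(v)$ bound together with $\rr\le r'/2$. The only (immaterial) differences are that you make the independence of $a_{i+1}$ from $(x,a_1,\ldots,a_i)$ explicit, and your bound $\Pr[M(a,x)=b\mid E_v]\ge\tfrac12(1-2^{-r'})$ is marginally tighter than the paper's intermediate step, though both yield the stated $c_e\ge\tfrac12-2\cdot2^{-2\rr}$.
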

\begin{proof}
Let $e = (v,u)$ be an edge of $B$, labeled by $(a,b)$, and such that
$\Pr(e) > 0$.
Since $\Pr(e) > 0$, the vertex $v$ is not significant (as otherwise $\T$ always stops on $v$ and hence $\Pr(e) = 0$).
Also, since $\Pr(e) > 0$, we know that $a \not \in \Bad(v)$
(as otherwise $\T$ never traverses~$e$ and hence $\Pr(e) = 0$).

If $\T$ reaches $v$, it  traverses the edge $e$ if and only if:
$x \not \in \Sig(v)$ (as otherwise $\T$ stops on~$v$)  and $M(a,x) = b$ and $a_{i+1} = a$.
Therefore, for any $x' \in \XX$,
$$
\P_{x|e} (x')  = \left\{
\begin{array}{ccccc}
  0
  & \;\;\;\; \mbox{if } & x' \in \Sig(v) & \mbox{or} & M(a,x') \neq b \\
  \P_{x|v} (x') \cdot c_e^{-1}
  & \;\;\;\; \mbox{if } & x' \not \in \Sig(v)& \mbox{and} & M(a,x') = b
\end{array}
\right.
$$
where $c_e$ is a normalization factor, given by
$$
c_e=
\sum_{\left\{ x' \; : \; x' \not \in \Sig(v) \; \wedge \; M(a,x') = b  \right\} }
\P_{x|v} (x') \; = \;
\Pr_x[(x \not \in \Sig(v)) \wedge  (M(a,x) = b) \; | \;E_v].
$$

Since $v$ is not significant, by Claim~\ref{claim-A0},
$$
\Pr_{x}
[x \in \Sig(v)  \; | \; E_v] \leq 2^{-2\rr}.
$$

Since $a \not \in \Bad(v)$,
$$
\left| \Pr_{x} [M(a,x) = 1  \; | \; E_v] -
\Pr_{x} [M(a,x) = -1  \; | \; E_v] \right|
=
\left| (M \cdot\P_{x|v})(a) \right|
\leq 2^{-r'},$$
and hence
$$
\Pr_{x}
[M(a,x) \neq b  \; | \; E_v] \leq \tfrac{1}{2} + 2^{-r'}.
$$
Hence, by the union bound,
$$
c_e=
\Pr_x[(x \not \in \Sig(v)) \wedge  (M(a,x) = b) \; | \;E_v] \geq
\tfrac{1}{2}
- 2^{-r'} - 2^{-2\rr}
\geq
\tfrac{1}{2}
- 2\cdot 2^{-2\rr}
$$
(where the last inequality follows since
$\rr \le r'/2$, by Equation~\eqref{eq:param setting}).
\end{proof}

\subsubsection*{Bounding the Norm of $\P_{x|s}$}

We will show that $\norm{\P_{x|s}}_{2}$ cannot be too large. Towards this, we will first prove that for every edge $e$ of $B$ that is traversed by $\T$ with probability larger than zero, $\norm{\P_{x|e}}_{2}$ cannot be too large.

\begin{claim} \label{claim-b0}
For any edge~$e$ of $B$, such that
$\Pr(e) > 0$,
$$ \norm{\P_{x|e}}_{2} \leq 4 \cdot 2^{\ell} \cdot 2^{-n}.$$
\end{claim}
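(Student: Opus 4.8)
The plan is to bound $\norm{\P_{x|e}}_2$ in terms of $\norm{\P_{x|v}}_2$, where $e = (v,u)$ is the edge in question. Since $\Pr(e) > 0$, the vertex $v$ is not significant, so by definition $\norm{\P_{x|v}}_2 \le 2^{\ell} \cdot 2^{-n}$. The idea is that $\P_{x|e}$ is obtained from $\P_{x|v}$ by zeroing out some coordinates (those $x'$ with $x' \in \Sig(v)$ or $M(a,x') \ne b$) and then renormalizing by dividing by $c_e$, as described precisely in Claim~\ref{claim-d0}. Zeroing out coordinates can only decrease the (unnormalized) $\ell_2$ mass, so the only thing that can blow up the norm is the division by $c_e$.

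Concretely, from Claim~\ref{claim-d0} we have $\P_{x|e}(x') \le \P_{x|v}(x') \cdot c_e^{-1}$ for every $x'$ (with equality or zero depending on the case), hence pointwise $\P_{x|e}(x')^2 \le c_e^{-2} \cdot \P_{x|v}(x')^2$, and averaging over $x' \in_R \XX$ gives $\norm{\P_{x|e}}_2 \le c_e^{-1} \cdot \norm{\P_{x|v}}_2 \le c_e^{-1} \cdot 2^{\ell} \cdot 2^{-n}$. It remains to argue $c_e^{-1} \le 4$, i.e. $c_e \ge 1/4$. This follows immediately from the lower bound $c_e \ge \tfrac12 - 2 \cdot 2^{-2\rr}$ in Claim~\ref{claim-d0}, using that $\rr$ is sufficiently large (so $2 \cdot 2^{-2\rr} \le 1/4$). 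Putting these together yields $\norm{\P_{x|e}}_2 \le 4 \cdot 2^{\ell} \cdot 2^{-n}$, as claimed.

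There is no real obstacle here; the claim is essentially a bookkeeping consequence of the explicit formula for $\P_{x|e}$ from Claim~\ref{claim-d0} together with the non-significance of $v$. The one point that requires a line of care is the pointwise domination $\P_{x|e}(x') \le c_e^{-1}\P_{x|v}(x')$ — one should note it holds in both cases of the piecewise definition (trivially in the ``$0$'' case, and with equality in the other). I expect the subsequent claim (bounding $\norm{\P_{x|s}}_2$ for a significant vertex $s$) to be the place where this is actually used: since $\T$ stops upon reaching any significant vertex, $s$ is reached only via edges $e$ into $s$ from non-significant vertices, so $\P_{x|s}$ is a convex combination of the $\P_{x|e}$'s and one can apply convexity of $\norm{\cdot}_2^2$ together with Claim~\ref{claim-b0}; but that is the next claim, not this one.
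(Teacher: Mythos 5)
Your proof is correct and follows essentially the same route as the paper: invoke Claim~\ref{claim-d0} for the pointwise formula $\P_{x|e}(x') \in \{0, c_e^{-1}\P_{x|v}(x')\}$, use non-significance of $v$ to bound $\norm{\P_{x|v}}_2$, and use $c_e \geq \tfrac12 - 2\cdot 2^{-2\rr} > \tfrac14$ (for $\rr$ sufficiently large). Nothing is missing.
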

\begin{proof}
Let $e = (v,u)$ be an edge of $B$, labeled by $(a,b)$, and such that
$\Pr(e) > 0$.
Since $\Pr(e) > 0$, the vertex $v$ is not significant (as otherwise $\T$ always stops on $v$ and hence $\Pr(e) = 0$).
Thus,
$$
\norm{\P_{x|v}}_{2} \leq 2^{\ell} \cdot 2^{-n}.
$$

By Claim~\ref{claim-d0}, for any $x' \in \XX$,
$$
\P_{x|e} (x')  = \left\{
\begin{array}{ccccc}
  0
  & \;\;\;\; \mbox{if } & x' \in \Sig(v) & \mbox{or} & M(a,x') \neq b \\
  \P_{x|v} (x') \cdot c_e^{-1}
  & \;\;\;\; \mbox{if } & x' \not \in \Sig(v)& \mbox{and} & M(a,x') = b
\end{array}
\right.
$$
where $c_e$ satisfies,
$$
c_e \geq
\tfrac{1}{2}
- 2\cdot 2^{-2\rr} > \tfrac{1}{4}
$$
(where the last inequality holds because we assume that $k',\ell',r'$ and thus $\rr$ are sufficiently large.)
Thus,
\[
\norm{\P_{x|e}}_{2} \leq c_e^{-1} \cdot \norm{\P_{x|v}}_{2} \leq
4 \cdot 2^{\ell} \cdot 2^{-n}\qedhere
\]
\end{proof}

\begin{claim} \label{claim-b1}
$$ \norm{\P_{x|s}}_{2} \leq 4 \cdot 2^{\ell} \cdot 2^{-n}.$$
\end{claim}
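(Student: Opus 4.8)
The claim asserts that the distribution $\P_{x|s}$ at a significant vertex $s$ has $\ell_2$ norm bounded by $4 \cdot 2^{\ell} \cdot 2^{-n}$, i.e.\ essentially the same bound that Claim~\ref{claim-b0} gives for edges traversed with positive probability. The plan is to reduce the vertex case to the edge case: a vertex $s$ with $\Pr(s)>0$ that is not the start vertex is reached only through the edges of $B$ entering it, so $\P_{x|s}$ is a convex combination of the distributions $\P_{x|e}$ over those incoming edges $e$ with $\Pr(e)>0$. Concretely, writing $\Gamma_{in}(s)$ for the set of such edges, we have $\Pr(s) = \sum_{e \in \Gamma_{in}(s)} \Pr(e)$ and, for every $x'$,
$$
\P_{x|s}(x') = \sum_{e \in \Gamma_{in}(s)} \frac{\Pr(e)}{\Pr(s)} \cdot \P_{x|e}(x').
$$
By convexity of the $\ell_2$ norm (triangle inequality applied to this convex combination, or equivalently Jensen), $\norm{\P_{x|s}}_2 \le \sum_{e \in \Gamma_{in}(s)} \frac{\Pr(e)}{\Pr(s)} \cdot \norm{\P_{x|e}}_2$. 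Now I invoke Claim~\ref{claim-b0}, which bounds each $\norm{\P_{x|e}}_2$ by $4 \cdot 2^{\ell} \cdot 2^{-n}$; since the coefficients $\Pr(e)/\Pr(s)$ sum to $1$, the same bound passes to $\norm{\P_{x|s}}_2$.

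The one case that needs separate (trivial) attention is when $s$ is the start vertex, layer $0$: there $\P_{x|s} = \U_X$ is uniform, so $\norm{\P_{x|s}}_2 = 2^{-n} \le 4 \cdot 2^{\ell} \cdot 2^{-n}$, and in fact the start vertex is not significant so this case does not even arise among significant vertices. Apart from that, the argument is purely formal once one observes the convex-combination identity, which itself follows from the fact that $\T$, conditioned on reaching $s$ at layer $i>0$, must have arrived via exactly one edge of $\Gamma_{in}(s)$, and these events partition $E_s$ up to probability-zero sets.

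I do not anticipate a real obstacle here — the only thing to be slightly careful about is making sure $\Gamma_{in}(s)$ is nonempty and that we only sum over edges with $\Pr(e)>0$ so that $\P_{x|e}$ is well-defined and Claim~\ref{claim-b0} applies; edges $e$ with $\Pr(e)=0$ contribute $0$ to the convex combination and can simply be dropped. So the proof is: (i) state the convex-combination identity $\P_{x|s} = \sum_{e \in \Gamma_{in}(s)} \frac{\Pr(e)}{\Pr(s)}\P_{x|e}$; (ii) apply the triangle inequality for $\norm{\cdot}_2$; (iii) bound each term by Claim~\ref{claim-b0}; (iv) conclude using $\sum_e \Pr(e)/\Pr(s) = 1$. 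This is essentially the "simple convexity argument" the overview attributes to Claim~\ref{claim-p2}, used here in the degenerate direction (edges into a single vertex rather than edges out of a vertex).
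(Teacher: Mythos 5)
Your proposal is correct and matches the paper's proof in its essentials: decompose $\P_{x|s}$ as the convex combination $\sum_{e\in\Gamma_{in}(s)}\tfrac{\Pr(e)}{\Pr(s)}\P_{x|e}$, then pass the bound of Claim~\ref{claim-b0} through the convexity of the norm. The paper phrases the convexity step by applying Jensen's inequality pointwise to the square ($\P_{x|s}(x')^2\le\sum_e\tfrac{\Pr(e)}{\Pr(s)}\P_{x|e}(x')^2$, then summing over $x'$ to bound $\norm{\P_{x|s}}_2^2$), whereas you invoke the triangle inequality directly on $\norm{\cdot}_2$; these are interchangeable and yield the identical bound.
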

\begin{proof}
Let $\Gamma_{in}(s)$ be the set of all edges $e$ of $B$, that are going into $s$, such that $\Pr(e) >0$.
Note that $$\sum_{e \in \Gamma_{in}(s)} \Pr(e) = \Pr(s).$$

By the law of total probability,
for every $x' \in \XX$,
$$
\P_{x|s} (x') =
\sum_{e \in \Gamma_{in}(s)} \tfrac{\Pr(e)}{\Pr(s)} \cdot \P_{x|e} (x'),
$$
and hence by Jensen's inequality,
$$
\P_{x|s} (x')^2 \leq
\sum_{e \in \Gamma_{in}(s)} \tfrac{\Pr(e)}{\Pr(s)} \cdot \P_{x|e} (x')^2.
$$
Summing over $x' \in \XX$, we obtain,
$$
\norm{\P_{x|s}}_{2}^2 \leq
\sum_{e \in \Gamma_{in}(s)} \tfrac{\Pr(e)}{\Pr(s)} \cdot
\norm{\P_{x|e}}_{2}^2.
$$

By Claim~\ref{claim-b0},
for any $e \in \Gamma_{in}(s)$,
$$ \norm{\P_{x|e}}_{2}^2 \leq \left( 4 \cdot 2^{\ell} \cdot 2^{-n} \right)^2.$$
Hence,
\[
\norm{\P_{x|s}}_{2}^2 \leq \left( 4 \cdot 2^{\ell} \cdot 2^{-n} \right)^2.\qedhere
\]
\end{proof}

\subsubsection*{Similarity to a Target Distribution}

Recall that for two functions
$f,g: \XX \rightarrow \Reals^+$, we defined
$$
\langle f,g \rangle = \Ex_{z \in_R \XX} [ f(z) \cdot g(z) ].
$$
We think of $\langle f,g \rangle$ as a measure for the similarity between a function $f$ and a target function~$g$.
Typically $f,g$ will be distributions.

\begin{claim} \label{claim-s1}
$$\langle \P_{x|s},\P_{x|s} \rangle > 2^{2\ell} \cdot 2^{-2n}.$$
\end{claim}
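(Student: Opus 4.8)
Claim~\ref{claim-s1} asserts that $\langle \P_{x|s},\P_{x|s}\rangle > 2^{2\ell}\cdot 2^{-2n}$.

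**Proof plan.** The plan is to simply unwind the definitions: for a distribution $f:\XX\to\Reals^+$, we have $\langle f,f\rangle = \Ex_{z\in_R\XX}[f(z)^2] = \norm{f}_2^2$. So $\langle \P_{x|s},\P_{x|s}\rangle = \norm{\P_{x|s}}_2^2$. Now use the fact that $s$ is a \emph{significant} vertex, which by definition (see the definition of significant vertices in Section~\ref{section:def}, reproduced in the proof of Theorem~\ref{thm:TM-main1}) means exactly that $\norm{\P_{x|s}}_2 > 2^{\ell}\cdot 2^{-n}$. Squaring this inequality (both sides are positive) gives $\norm{\P_{x|s}}_2^2 > 2^{2\ell}\cdot 2^{-2n}$, which is the claim.

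**Main obstacle.** Honestly, there is no real obstacle here — the claim is an immediate one-line consequence of the definition of a significant vertex together with the elementary identity $\langle f,f\rangle = \norm{f}_2^2$ (which holds because the inner product and the $\ell_2$ norm are both taken with respect to the uniform distribution on $\XX$). The only thing to be slightly careful about is to note that $\P_{x|s}$ is indeed a nonnegative function (it is a probability distribution), so that $\langle\P_{x|s},\P_{x|s}\rangle = \Ex_z[\P_{x|s}(z)\cdot\P_{x|s}(z)] = \Ex_z[\P_{x|s}(z)^2]$ matches $\norm{\P_{x|s}}_2^2$. I would write this out in two or three lines and move on; the substance of the argument lies in the later claims (the bound on $\norm{\P_{x|s}}_2$ from above in Claim~\ref{claim-b1}, and the martingale-type analysis of ${\cal Z}_i$), not here. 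This claim just records the "lower bound" half of what significance buys us, to be combined later with the "upper bound" half.
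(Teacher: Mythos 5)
Your proof is correct and matches the paper's own one-line argument exactly: identify $\langle \P_{x|s},\P_{x|s}\rangle$ with $\norm{\P_{x|s}}_2^2$ and square the defining inequality of a significant vertex. Nothing further to add.
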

\begin{proof}
Since $s$ is significant,
\[
\langle \P_{x|s},\P_{x|s} \rangle =
\norm{\P_{x|s}}^2_{2} > 2^{2\ell} \cdot 2^{-2n}.\qedhere
\]
\end{proof}

\begin{claim} \label{claim-s2}
$$\langle \U_X,\P_{x|s} \rangle = 2^{-2n},$$
where $\U_X$ is the uniform distribution over $\XX$.
\end{claim}
\begin{proof}
Since $\P_{x|s}$ is a distribution,
\[\langle \U_X,\P_{x|s} \rangle =  2^{-2n} \cdot \sum_{z \in \XX} \P_{x|s}(z) = 2^{-2n}.\qedhere\]
\end{proof}

\subsubsection*{Measuring the Progress}

For $i \in \{0,\ldots ,m\}$, let $L_i$ be the set of vertices $v$ in layer-$i$ of $B$,
such that $\Pr (v) >0$. For $i \in \{1,\ldots ,m\}$,
let $\Gamma_i$ be the set of edges $e$ from layer-$(i-1)$ of $B$ to layer-$i$ of $B$,
such that $\Pr (e) >0$.
Recall that $\kk = \gamma k'$ (Equation~\eqref{eq:param setting2}).

For $i \in \{0,\ldots ,m\}$, let
$$
{\cal Z}_i =
\sum_{v \in L_i} \Pr(v) \cdot \langle \P_{x|v},\P_{x|s} \rangle^{\kk}.
$$
For $i \in \{1,\ldots ,m\}$, let
$$
{\cal Z}'_i =
\sum_{e \in \Gamma_i} \Pr(e) \cdot \langle \P_{x|e},\P_{x|s} \rangle^{\kk}.
$$

We think of ${\cal Z}_i, {\cal Z}'_i$ as measuring the progress made by the branching program, towards reaching a state with distribution similar to
$\P_{x|s}$.

For a vertex $v$ of $B$, let
$\Gamma_{out}(v)$ be the set of all edges $e$ of $B$, that are going out of $v$, such that $\Pr(e) >0$.
Note that $$\sum_{e \in \Gamma_{out}(v)} \Pr(e) \leq \Pr(v).$$
(We don't always have an equality here, since sometimes $\T$ stops on $v$).

The next four claims show that the progress made by the branching program is slow.

\begin{claim} \label{claim-p0}
For every vertex $v$ of $B$, such that $\Pr (v) > 0$,
$$
\sum_{e \in \Gamma_{out}(v)} \tfrac{\Pr(e)}{\Pr(v)}
\cdot \langle \P_{x|e},\P_{x|s} \rangle^{\kk}
\leq
\langle \P_{x|v},\P_{x|s} \rangle^{\kk} \cdot
\left( 1 + 2^{-\rr}\right)^{\kk}
+ \left( 2^{-2n +2} \right)^{\kk}.
$$
\end{claim}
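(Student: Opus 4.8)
The plan is to analyze the edges out of $v$ by grouping them according to their label $a \in \AA$ (two edges per $a$, one for each value $b \in \{-1,1\}$) and to compare $\langle \P_{x|e},\P_{x|s}\rangle$ with $\langle \P_{x|v},\P_{x|s}\rangle$ using the explicit formula for $\P_{x|e}$ from Claim~\ref{claim-d0}. First I would introduce the function $g := \P_{x|v} \cdot \mathbf{1}_{x \notin \Sig(v)}$, i.e.\ the distribution $\P_{x|v}$ with its ``heavy'' coordinates zeroed out, so that by Claim~\ref{claim-d0} each $\P_{x|e}$ (for $e$ labeled $(a,b)$) is $c_e^{-1}$ times the restriction of $g$ to $\{x' : M(a,x')=b\}$. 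A short computation then expresses $\langle \P_{x|e},\P_{x|s}\rangle$ in terms of $\langle g,\P_{x|s}\rangle$ and the ``correction'' $\langle M_a \cdot (g \cdot \P_{x|s})\rangle$-type quantity; more precisely, summing the two edges with label $a$ and using $\mathbf{1}_{M(a,x')=b} = \tfrac{1+bM(a,x')}{2}$, the weighted average over $b$ of $\langle \P_{x|e},\P_{x|s}\rangle$ looks like $\langle g,\P_{x|s}\rangle \pm \inner{M_a, g\cdot\P_{x|s}}$, up to the $c_e$ normalization which is between $\tfrac14$ and $1$.

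The key step is to apply the $(k',\ell')$-$L_2$-extractor property to the non-negative function $h := g \cdot \P_{x|s}$. For this I need $\tfrac{\norm{h}_2}{\norm{h}_1} \le 2^{\ell'}$; this is where the earlier norm bounds enter: $\norm{g}_\infty \le 2^{2\ell+2\rr}\cdot 2^{-n}$ by definition of $\Sig(v)$, and $\norm{\P_{x|s}}_2 \le 4\cdot 2^{\ell}\cdot 2^{-n}$ by Claim~\ref{claim-b1}, so $\norm{h}_2 \le \norm{g}_\infty \cdot \norm{\P_{x|s}}_2$ is small, while $\norm{h}_1 = \langle g, \P_{x|s}\rangle$ — which I will treat in two cases. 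The extractor property then says that all but a $2^{-k'}$ fraction of rows $a$ satisfy $\inner{M_a, h} \le 2^{-r'}\norm{h}_1 = 2^{-r'}\langle g,\P_{x|s}\rangle$, i.e.\ for those ``good'' $a$ the correction term is at most a $2^{-r'}$-fraction of the main term, giving $\langle \P_{x|e},\P_{x|s}\rangle \le \langle g,\P_{x|s}\rangle(1 + 2^{-\rr})$ roughly (absorbing the $c_e$ factor and using $\rr \le r'/2$ and $\rr \le (1-\gamma)k'/2$ so that $\kk \cdot 2^{-r'} \lesssim 2^{-\rr}$ after raising to the $\kk$-th power). For the ``bad'' $a$'s — at most a $2^{-k'}$ fraction, hence their total edge-probability out of $v$ is at most $\approx 2\cdot 2^{-k'}$ — I bound $\langle \P_{x|e},\P_{x|s}\rangle$ crudely: $\P_{x|e} \le c_e^{-1}\norm{g}_\infty \le 4\cdot 2^{2\ell+2\rr}2^{-n}$ pointwise and $\langle \P_{x|e},\P_{x|s}\rangle \le \norm{\P_{x|e}}_\infty \cdot \norm{\P_{x|s}}_1 \cdot$(something) — more carefully $\langle \P_{x|e},\P_{x|s}\rangle = \Ex_z[\P_{x|e}(z)\P_{x|s}(z)] \le \norm{\P_{x|e}}_\infty \cdot 2^{-n}$, which raised to the $\kk$ and multiplied by the bad fraction $2^{-k'}$ should be dominated by $(2^{-2n+2})^{\kk}$ after using $\kk = \gamma k' < k'$, so the total bad contribution is at most $(2^{-2n+2})^{\kk}$ as claimed.

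Finally I would clean up by observing $\langle g,\P_{x|s}\rangle \le \langle \P_{x|v},\P_{x|s}\rangle$ (since $g \le \P_{x|v}$ pointwise), so replacing $\langle g,\P_{x|s}\rangle$ by $\langle \P_{x|v},\P_{x|s}\rangle$ in the good-edge bound only weakens it, and then collecting the good-edge and bad-edge contributions, together with $\sum_{e\in\Gamma_{out}(v)}\Pr(e)\le\Pr(v)$, yields exactly
$$
\sum_{e \in \Gamma_{out}(v)} \tfrac{\Pr(e)}{\Pr(v)}\cdot \langle \P_{x|e},\P_{x|s}\rangle^{\kk}
\le \langle \P_{x|v},\P_{x|s}\rangle^{\kk}(1+2^{-\rr})^{\kk} + (2^{-2n+2})^{\kk}.
$$
The main obstacle I anticipate is the bookkeeping in the case split on $\norm{h}_1 = \langle g,\P_{x|s}\rangle$: when this is negligible the extractor hypothesis gives a useless relative bound, so one must instead argue directly that \emph{every} $\langle \P_{x|e},\P_{x|s}\rangle^{\kk}$ is then below $(2^{-2n+2})^{\kk}$; getting the threshold for ``negligible'' to line up with the parameter choices (using $\rr < n/2 - \ell$ from Equation~\eqref{eq:rr ell} and the definitions of $\Sig(v)$) is the delicate part, and is exactly the place where the argument departs from — and streamlines — the corresponding step in~\cite{Raz17}.
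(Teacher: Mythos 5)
Your overall strategy matches the paper's proof closely: you introduce $g = \P_{x|v}\cdot\mathbf{1}_{x\notin\Sig(v)}$ (the paper's $P$) and $h = g\cdot\P_{x|s}$ (the paper's $f$), you bound $\norm{h}_2$ via $\norm{g}_\infty\le 2^{2\ell+2\rr}2^{-n}$ and Claim~\ref{claim-b1}, you split on whether $\norm{h}_1$ is negligible, and in the non-negligible case you apply the extractor to $h$. All of that is right. However, the way you propose to handle the ``bad'' rows $a$ (those with $t(a):=|\inner{M_a,h}|/\norm{h}_1 > 2^{-r'}$) is incorrect: their contribution cannot be charged to the additive term $(2^{-2n+2})^{\kk}$.

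Concretely, in Case~II your pointwise bound gives $\langle\P_{x|e},\P_{x|s}\rangle \le 4\cdot 2^{2\ell+2\rr}\cdot 2^{-2n}$; multiplying the $\kk$-th power by the $\approx 2^{-k'}$ probability mass of bad edges yields roughly $2^{-k'}(2^{2\ell+2\rr+2})^{\kk}\cdot 2^{-2n\kk}$. For this to be at most $(2^{-2n+2})^{\kk}=4^{\kk}2^{-2n\kk}$ you would need $(2\ell+2\rr)\kk \le k' = \kk/\gamma$, i.e.\ $2\ell+2\rr \le 1/\gamma$ --- but $\ell$ and $\rr$ are both required to be sufficiently large, so this fails. (The same problem arises if you try the alternative bound $\langle\P_{x|e},\P_{x|s}\rangle \le 4\norm{h}_1 \le 4\langle\P_{x|v},\P_{x|s}\rangle$; the factor $2^{-k'}4^{\kk}=2^{(2\gamma-1)k'}$ is not small enough to fit under $(2^{-2n+2})^{\kk}$ either.) The additive term $(2^{-2n+2})^{\kk}$ is reserved for Case~I alone, where $\norm{h}_1 < 2^{-2n}$ and consequently \emph{every} edge out of $v$ satisfies $\langle\P_{x|e},\P_{x|s}\rangle \le 4\norm{h}_1 < 2^{-2n+2}$.

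In Case~II the bad edges must instead be absorbed \emph{multiplicatively} into the $(1+2^{-\rr})^{\kk}$ factor, which is what the paper does. Since $h$ is non-negative, $t(a)\le 1$ always, so bad $a$'s contribute at most $2^{-k'}\cdot 2^{\kk}=2^{-(1-\gamma)k'}\le 2^{-2\rr}$ to $\Ex_{a\in_R\AA}\bigl[(1+t(a))^{\kk}\bigr]$. Adding the good-$a$ contribution $(1+2^{-r'})^{\kk}\le(1+2^{-2\rr})^{\kk}$ gives $\Ex_a\bigl[(1+t(a))^{\kk}\bigr]\le(1+2^{-2\rr})^{\kk+1}$, and multiplying by the $(2c_e)^{-1}$ normalization $(1+2^{-2\rr+3})^{\kk}$ and by $\norm{h}_1^{\kk}\le\langle\P_{x|v},\P_{x|s}\rangle^{\kk}$ yields $\langle\P_{x|v},\P_{x|s}\rangle^{\kk}(1+2^{-\rr})^{\kk}$ as required. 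With this one bookkeeping correction your proof goes through and agrees with the paper's.
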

\begin{proof}
If $v$ is significant or $v$ is a leaf, then $\T$ always stops on $v$ and hence
$\Gamma_{out}(v)$ is empty and thus the left hand side is equal to zero and the right hand side is positive, so the claim follows trivially.
Thus, we can assume that $v$ is not significant and is not a leaf.

Define $P: \XX \rightarrow \Reals^+$ as follows.
For any $x' \in \XX$,
$$
P (x')  = \left\{
\begin{array}{ccc}
  0
  & \;\;\;\; \mbox{if } & x' \in \Sig(v)  \\
  \P_{x|v} (x')
  & \;\;\;\; \mbox{if } & x' \not \in \Sig(v)
\end{array}
\right.
$$
Note that by the definition of
$\Sig(v)$,  for any $x' \in \XX$,
\begin{equation} \label{e11}
P(x') \leq 2^{2\ell + 2\rr} \cdot 2^{-n}.
\end{equation}

Define $f: \XX \rightarrow \Reals^+$ as follows.
For any $x' \in \XX$,
$$
f(x')  = P(x') \cdot  \P_{x|s}(x').
$$
By Claim~\ref{claim-b1} and Equation~\eqref{e11},
\begin{equation} \label{e12}
\norm{f}_{2} \leq
2^{2\ell + 2\rr} \cdot 2^{-n} \cdot
\norm{\P_{x|s}}_{2} \leq
2^{2\ell + 2\rr}
\cdot 2^{-n} \cdot 4 \cdot 2^{\ell} \cdot 2^{-n}
= 2^{3\ell + 2\rr +2}
\cdot 2^{-2n}.
\end{equation}

By Claim~\ref{claim-d0},
for any edge~$e \in \Gamma_{out}(v)$, labeled by $(a,b)$, for any $x' \in \XX$,
$$
\P_{x|e} (x')  = \left\{
\begin{array}{ccc}
  0
  & \;\;\;\; \mbox{if } & M(a,x') \neq b \\
  P(x') \cdot c_e^{-1}
  & \;\;\;\; \mbox{if } & M(a,x') = b
\end{array}
\right.
$$
where $c_e$ satisfies,
$$
c_e \geq
\tfrac{1}{2}
- 2\cdot 2^{-2\rr}.
$$
Therefore,
for any edge~$e \in \Gamma_{out}(v)$, labeled by $(a,b)$, for any $x' \in \XX$,
$$
\P_{x|e} (x') \cdot \P_{x|s} (x') = \left\{
\begin{array}{ccc}
  0
  & \;\;\;\; \mbox{if } & M(a,x') \neq b \\
  f(x') \cdot c_e^{-1}
  & \;\;\;\; \mbox{if } & M(a,x') = b
\end{array}
\right.
$$
and hence, we have
\begin{align}
\nonumber
\langle \P_{x|e},\P_{x|s} \rangle &=
\Ex_{x' \in_R \XX} [ \P_{x|e}(x') \cdot \P_{x|s}(x') ]
=
\Ex_{x' \in_R \XX}
[f(x') \cdot c_e^{-1} \cdot \mathbf{1}_{\{x' \in X \; : \; M(a,x')=b\}}]
\\
\nonumber
&=
\Ex_{x' \in_R \XX}
\left[f(x') \cdot c_e^{-1} \cdot \tfrac{(1+b\cdot M(a,x'))}{2}\right]
=
\left(\norm{f}_1 + b \cdot \inner{M_a, f}\right) \cdot (2c_e)^{-1}
\\
\label{e13}
&<
\left( \norm{f}_1+ |\inner{M_a, f}| \right)
\cdot \left(1 + 2^{-2\rr + 3}\right)
\end{align}
(where the last inequality holds by the bound that we have on $c_e$, because we assume that $k',\ell',r'$ and thus $\rr$ are sufficiently large).

We will now consider two cases:

\subsubsection*{Case I: $\norm{f}_1 <  2^{-2n}$}
In this case, we bound  $|\inner{M_a, f}| \leq \norm{f}_1$
(since $f$ is non-negative and the entries of $M$ are in~$\{-1,1\}$)
and
$(1 + 2^{-2\rr +3}) < 2$ (since we assume that $k',\ell',r'$ and thus $\rr$ are sufficiently large) and obtain
for any edge~$e \in \Gamma_{out}(v)$,
$$\langle \P_{x|e},\P_{x|s} \rangle
< 4  \cdot 2^{-2n}.$$
Since $\sum_{e \in \Gamma_{out}(v)} \tfrac{\Pr(e)}{\Pr(v)} \leq 1$,
Claim~\ref{claim-p0} follows, as the left hand side of the claim is smaller than the second term on the right hand side.

\subsubsection*{Case II: $\norm{f}_1 \geq 2^{-2n}$}
For every $a \in \AA$, define
$$t(a) = \frac{|\inner{M_a, f}|}{\norm{f}_1}.$$
By Equation~\eqref{e13},
\begin{equation}\label{e14}
\langle \P_{x|e},\P_{x|s} \rangle^{\kk} <
\norm{f}_1^{\kk} \cdot
\left( 1 + t(a) \right)^{\kk}
\cdot \left( 1 + 2^{-2\rr + 3}\right)^{\kk}.
\end{equation}

Note that by the definitions of $P$ and $f$,
$$
\norm{f}_1 = \Ex_{x' \in_R \XX} [f(x')] =
\langle P,\P_{x|s} \rangle \leq \langle \P_{x|v},\P_{x|s} \rangle.
$$
Note also that for every $a \in \AA$, there is at most one edge
$e_{(a,1)} \in \Gamma_{out}(v)$, labeled by $(a,1)$, and
at most one edge
$e_{(a,-1)} \in \Gamma_{out}(v)$, labeled by $(a,-1)$,
and we have
$$\tfrac{\Pr(e_{(a,1)})}{\Pr(v)} + \tfrac{\Pr(e_{(a,-1)})}{\Pr(v)}
\leq \tfrac{1}{|A|},$$
since $\tfrac{1}{|A|}$ is the probability that the next sample read by the program is $a$.
%
Thus, summing over all $e \in \Gamma_{out}(v)$, by Equation~\eqref{e14},
\begin{equation}\label{e15}
\sum_{e \in \Gamma_{out}(v)}
\tfrac{\Pr(e)}{\Pr(v)}  \cdot \langle \P_{x|e},\P_{x|s} \rangle^{\kk} <
\langle \P_{x|v},\P_{x|s} \rangle ^{\kk} \cdot
\Ex_{a \in_R \AA} \left[ \left( 1 + t(a) \right)^{\kk} \right]
\cdot \left( 1 + 2^{-2\rr +3}\right)^{\kk}.
\end{equation}

It remains to bound
\begin{equation} \label{e16}
\Ex_{a \in_R \AA} \left[ \left( 1 + t(a) \right)^{\kk} \right],
\end{equation}
using the properties of the matrix $M$ and the bounds on the $\ell_2$ versus $\ell_1$ norms of $f$.

By Equation~\eqref{e12}, the assumption that $\norm{f}_1 \ge 2^{-2n}$, Equation~\eqref{eq:param setting} and Equation~\eqref{eq:param setting2}, we get
\begin{equation*} 
\frac{\norm{f}_{2}}{\norm{f}_1} \leq
2^{3\ell + 2\rr +2} \le 2^{\ell'}\;.
\end{equation*}
Since $M$ is a $(k',\ell')$-$L_2$-extractor with error $2^{-r'}$,
 there are at most $2^{-k'} \cdot |A|$ rows $a\in \AA$ with
$t(a)  = \frac{|\inner{M_a,f}|}{\norm{f}_1} \ge 2^{-r'}$.
We bound the expectation in Equation~\eqref{e16}, by splitting
the expectation into two sums
\begin{equation}
	\label{e18}
\Ex_{a \in_R \AA} \left[ \left( 1 + {t(a)} \right)^{\kk} \right]
= \tfrac{1}{|A|} \cdot  \sum_{a \;: \; t(a) \leq 2^{-r'}}
\left( 1 + t(a) \right)^{\kk}
+
\tfrac{1}{|A|} \cdot  \sum_{a \;: \; t(a) > 2^{-r'}}
\left( 1 + t(a) \right)^{\kk}.
\end{equation}

We bound the first sum in Equation~\eqref{e18} by $(1+2^{-r'})^{\kk}$.
As for the second sum in Equation~\eqref{e18}, we
know that it is a sum of at most $2^{-k'} \cdot |\AA|$ elements,
and since for every $a \in \AA$, we have $t(a) \leq 1$, we have
\begin{equation*} 
\tfrac{1}{|A|} \cdot  \sum_{a \;: \; t(a) > 2^{-r'}}
\left( 1 + t(a) \right)^{\kk} \leq
2^{-k'} \cdot 2^{\kk}
\le
2^{-2\rr}\;
\end{equation*}
(where in the last inequality we used Equations~\eqref{eq:param setting} and~\eqref{eq:param setting2}).
Overall, using Equation~\eqref{eq:param setting} again, we get
\begin{equation}\label{e18.5}
\Ex_{a \in_R \AA} \left[ \left( 1 + t(a) \right)^{\kk} \right] \le (1+2^{-r'})^{\kk} + 2^{-2\rr} \le (1+2^{-2\rr})^{\kk+1}.
\end{equation}
Substituting
Equation~\eqref{e18.5} into Equation~\eqref{e15}, we obtain
\begin{align*}
\sum_{e \in \Gamma_{out}(v)}
\tfrac{\Pr(e)}{\Pr(v)}  \cdot \langle \P_{x|e},\P_{x|s} \rangle^{\kk} &<
\langle \P_{x|v},\P_{x|s} \rangle ^{\kk}
\cdot  \left(1 + 2^{-2\rr}\right)^{\kk+1}
\cdot \left( 1 + 2^{-2\rr +3}\right)^{\kk}
\\
&<
\langle \P_{x|v},\P_{x|s} \rangle ^{\kk}
\cdot \left( 1 + 2^{-\rr}\right)^{\kk}
\end{align*}
(where the last inequality uses the assumption that $\rr$ is sufficiently large).
This completes the proof of Claim~\ref{claim-p0}.
\end{proof}

\begin{claim} \label{claim-p1}
For every $i \in \{1,\ldots ,m\}$,
$${\cal Z}'_i \leq  {\cal Z}_{i-1}
\cdot
\left( 1 + 2^{-\rr}\right)^{\kk}
+ \left( 2^{-2n +2} \right)^{\kk}.
$$\end{claim}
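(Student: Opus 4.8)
The plan is to derive Claim~\ref{claim-p1} from Claim~\ref{claim-p0} by summing over all vertices $v$ in layer-$(i-1)$, after regrouping the edges of $\Gamma_i$ by their source vertex. The key observation is that every edge $e \in \Gamma_i$ goes out of a unique vertex $v \in L_{i-1}$ (recall $\Pr(e) > 0$ forces $\Pr(v) > 0$), so $\Gamma_i = \bigsqcup_{v \in L_{i-1}} \Gamma_{out}(v)$ after discarding those $\Gamma_{out}(v)$ whose edges land outside layer-$i$ — but in a layered branching program all edges out of layer-$(i-1)$ land in layer-$i$, so the decomposition is exact. Hence
\[
{\cal Z}'_i = \sum_{e \in \Gamma_i} \Pr(e) \cdot \langle \P_{x|e},\P_{x|s} \rangle^{\kk}
= \sum_{v \in L_{i-1}} \sum_{e \in \Gamma_{out}(v)} \Pr(e) \cdot \langle \P_{x|e},\P_{x|s} \rangle^{\kk}.
\]

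First I would fix $v \in L_{i-1}$ and multiply the inequality of Claim~\ref{claim-p0} through by $\Pr(v)$, obtaining
\[
\sum_{e \in \Gamma_{out}(v)} \Pr(e) \cdot \langle \P_{x|e},\P_{x|s} \rangle^{\kk}
\leq \Pr(v) \cdot \langle \P_{x|v},\P_{x|s} \rangle^{\kk} \cdot \left(1 + 2^{-\rr}\right)^{\kk}
+ \Pr(v) \cdot \left(2^{-2n+2}\right)^{\kk}.
\]
Then I would sum this over all $v \in L_{i-1}$. The first term on the right sums to exactly $\left(1 + 2^{-\rr}\right)^{\kk} \cdot {\cal Z}_{i-1}$ by the definition of ${\cal Z}_{i-1}$. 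For the second term, I would use $\sum_{v \in L_{i-1}} \Pr(v) \leq 1$ (the events $E_v$ over $v$ in a fixed layer are disjoint, since $\T$ is at a unique vertex in each layer if it has not stopped), which bounds it by $\left(2^{-2n+2}\right)^{\kk}$. Combining with the edge decomposition above yields the claim.

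I do not expect a serious obstacle here; this is essentially a bookkeeping step. The only point requiring a little care is the justification that $\sum_{v \in L_{i-1}} \Pr(v) \leq 1$ rather than $= 1$ — equality may fail because $\T$ can stop before reaching layer-$(i-1)$, so the probabilities $\Pr(v)$ over layer-$(i-1)$ need not sum to $1$ — but the inequality direction is exactly what is needed, and it follows from disjointness of the events $\{E_v\}_{v \in L_{i-1}}$. The heavy lifting (the two-case analysis via the $L_2$-extractor property) has already been done inside Claim~\ref{claim-p0}, so Claim~\ref{claim-p1} is a clean corollary.
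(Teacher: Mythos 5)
Your proof is correct and matches the paper's argument exactly: decompose $\Gamma_i$ by source vertex, apply Claim~\ref{claim-p0} vertex-by-vertex weighted by $\Pr(v)$, then sum using $\sum_{v \in L_{i-1}} \Pr(v) \le 1$. You even flag the same subtlety the paper implicitly uses (the sum of $\Pr(v)$ over a layer may be strictly less than $1$ because $\T$ can stop earlier), so nothing further is needed.
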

\begin{proof}
By Claim~\ref{claim-p0},
\begin{align*}
{\cal Z}'_i =
\sum_{e \in \Gamma_i} \Pr(e) \cdot \langle \P_{x|e},\P_{x|s} \rangle^{\kk}
&=
\sum_{v \in L_{i-1}} \Pr(v) \cdot
\sum_{e \in \Gamma_{out}(v)} \tfrac{\Pr(e)}{\Pr(v)}
\cdot \langle \P_{x|e},\P_{x|s} \rangle^{\kk}
\\&\leq
\sum_{v \in L_{i-1}} \Pr(v) \cdot
\left(
\langle \P_{x|v},\P_{x|s} \rangle^{\kk} \cdot
\left( 1 + 2^{-\rr}\right)^{\kk}
+ \left( 2^{-2n +2} \right)^{\kk}
\right)
\\&=
{\cal Z}_{i-1} \cdot
\left( 1 + 2^{-\rr}\right)^{\kk} +
\sum_{v \in L_{i-1}} \Pr(v) \cdot
\left( 2^{-2n +2} \right)^{\kk}
\\&\leq
{\cal Z}_{i-1} \cdot
\left( 1 + 2^{-\rr}\right)^{\kk} +
\left( 2^{-2n +2} \right)^{\kk}\qedhere
\end{align*}
\end{proof}

\begin{claim} \label{claim-p2}
For every $i \in \{1,\ldots ,m\}$,
$$
{\cal Z}_{i} \leq
{\cal Z}'_i.
$$
\end{claim}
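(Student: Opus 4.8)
Claim~\ref{claim-p2} asserts that ${\cal Z}_i \le {\cal Z}'_i$, i.e. that moving from a layer of vertices to the following layer of edges can only increase the progress measure. The plan is to exhibit ${\cal Z}_i$ as a value obtained from ${\cal Z}'_i$ by averaging inside a convex function, so the inequality becomes an instance of Jensen's inequality.

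First I would set up the relevant relations between a vertex $v \in L_i$ and the edges of $\Gamma_i$ entering it. Let $\Gamma_{in}(v)$ be the set of edges $e \in \Gamma_i$ with $\Pr(e) > 0$ that go into $v$. For every $v \in L_i$ we have $\sum_{e \in \Gamma_{in}(v)} \Pr(e) = \Pr(v)$ (every computation of $\T$ that reaches $v$ does so along exactly one such edge, since $\T$ has not stopped before $v$), and by the law of total probability, for every $x' \in \XX$,
\[
\P_{x|v}(x') = \sum_{e \in \Gamma_{in}(v)} \tfrac{\Pr(e)}{\Pr(v)} \cdot \P_{x|e}(x').
\]
Taking inner product with $\P_{x|s}$ (which is linear in its first argument), this gives
\[
\langle \P_{x|v}, \P_{x|s} \rangle = \sum_{e \in \Gamma_{in}(v)} \tfrac{\Pr(e)}{\Pr(v)} \cdot \langle \P_{x|e}, \P_{x|s} \rangle.
\]
In other words, $\langle \P_{x|v}, \P_{x|s}\rangle$ is a convex combination (with weights $\Pr(e)/\Pr(v)$) of the quantities $\langle \P_{x|e}, \P_{x|s}\rangle$ over $e \in \Gamma_{in}(v)$.

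Next I would apply convexity of $t \mapsto t^{\kk}$ on $[0,\infty)$ — valid since $\kk = \gamma k' \ge 1$ and all the inner products involved are non-negative (both arguments are non-negative functions) — to conclude
\[
\langle \P_{x|v}, \P_{x|s} \rangle^{\kk} \le \sum_{e \in \Gamma_{in}(v)} \tfrac{\Pr(e)}{\Pr(v)} \cdot \langle \P_{x|e}, \P_{x|s} \rangle^{\kk}.
\]
Multiplying by $\Pr(v)$ and summing over $v \in L_i$ yields
\[
{\cal Z}_i = \sum_{v \in L_i} \Pr(v) \cdot \langle \P_{x|v},\P_{x|s}\rangle^{\kk} \le \sum_{v \in L_i}\ \sum_{e \in \Gamma_{in}(v)} \Pr(e) \cdot \langle \P_{x|e},\P_{x|s}\rangle^{\kk} = \sum_{e \in \Gamma_i} \Pr(e)\cdot \langle \P_{x|e},\P_{x|s}\rangle^{\kk} = {\cal Z}'_i,
\]
where the last step uses that $\{\Gamma_{in}(v) : v \in L_i\}$ is a partition of $\Gamma_i$ (each edge in $\Gamma_i$ enters exactly one vertex of layer $i$, and that vertex lies in $L_i$ because $\Pr(e) > 0$ forces $\Pr(v) > 0$).

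There is no real obstacle here; the only points that need a word of care are that $\kk \ge 1$ so that $t \mapsto t^{\kk}$ is genuinely convex, and that the inner products are non-negative so that we stay in the region where convexity applies — both are immediate from the setup. This mirrors the analogous convexity step already used in the proof of Claim~\ref{claim-b1}.
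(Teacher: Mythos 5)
Your proof is correct and follows essentially the same route as the paper's: decompose $\P_{x|v}$ as a convex combination of $\P_{x|e}$ over incoming edges via the law of total probability, apply Jensen's inequality (convexity of $t\mapsto t^{\kk}$), and sum over $v\in L_i$. The extra remarks you include — that $\kk\ge 1$ and that the inner products are non-negative, so convexity is legitimately applicable — are accurate and slightly more explicit than the paper, but the argument is the same.
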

\begin{proof}
For any $v \in L_i$,
let $\Gamma_{in}(v)$ be the set of all edges $e \in \Gamma_i$, that are going into $v$.
Note that $$\sum_{e \in \Gamma_{in}(v)} \Pr(e) = \Pr(v).$$

By the law of total probability,
for every $v \in L_i$ and every $x' \in \XX$,
$$
\P_{x|v} (x') =
\sum_{e \in \Gamma_{in}(v)} \tfrac{\Pr(e)}{\Pr(v)} \cdot \P_{x|e} (x'),
$$
and hence
$$
\langle \P_{x|v},\P_{x|s} \rangle =
\sum_{e \in \Gamma_{in}(v)} \tfrac{\Pr(e)}{\Pr(v)} \cdot
\langle \P_{x|e},\P_{x|s} \rangle.
$$
Thus,
by Jensen's inequality,
$$
\langle \P_{x|v},\P_{x|s} \rangle  ^{\kk}
\leq
\sum_{e \in \Gamma_{in}(v)} \tfrac{\Pr(e)}{\Pr(v)} \cdot
\langle \P_{x|e},\P_{x|s} \rangle ^{\kk}.
$$

Summing over all $v \in L_i$, we get
$$
{\cal Z}_{i} =
\sum_{v \in L_{i}} \Pr(v) \cdot
\langle \P_{x|v},\P_{x|s} \rangle  ^{\kk}
\leq
\sum_{v \in L_{i}} \Pr(v) \cdot
\sum_{e \in \Gamma_{in}(v)} \tfrac{\Pr(e)}{\Pr(v)} \cdot
\langle \P_{x|e},\P_{x|s} \rangle ^{\kk}
$$
\[
=
\sum_{e \in \Gamma_i} \Pr(e) \cdot \langle \P_{x|e},\P_{x|s} \rangle^{\kk}
=
{\cal Z}'_i.\qedhere
\]
\end{proof}

\begin{claim} \label{claim-p3}
For every $i \in \{1,\ldots ,m\}$,
$$
{\cal Z}_i \leq
2^{4\kk + 2\rr} \cdot 2^{-2\kk \cdot n}.
$$
\end{claim}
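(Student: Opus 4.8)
The plan is to prove this by induction on $i$, combining the "one-step" bounds from Claim~\ref{claim-p1} and Claim~\ref{claim-p2} with the base case computation of ${\cal Z}_0$. First I would establish the base case: in layer $0$ there is only the start vertex $v_0$, with $\Pr(v_0) = 1$ and $\P_{x|v_0} = \U_X$, so by Claim~\ref{claim-s2} we have ${\cal Z}_0 = \langle \U_X, \P_{x|s}\rangle^{\kk} = (2^{-2n})^{\kk} = 2^{-2\kk n}$, which is certainly at most $2^{4\kk + 2\rr}\cdot 2^{-2\kk n}$. Then, chaining Claim~\ref{claim-p2} (${\cal Z}_i \le {\cal Z}'_i$) with Claim~\ref{claim-p1} (${\cal Z}'_i \le {\cal Z}_{i-1}(1+2^{-\rr})^{\kk} + (2^{-2n+2})^{\kk}$), I get the recursion
\[
{\cal Z}_i \le {\cal Z}_{i-1}\cdot(1+2^{-\rr})^{\kk} + (2^{-2n+2})^{\kk}
\]
for every $i \in \{1,\ldots,m\}$.

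Unrolling this recursion over $i \le m = 2^{\rr}$ steps gives
\[
{\cal Z}_i \le {\cal Z}_0 \cdot (1+2^{-\rr})^{\kk i} + (2^{-2n+2})^{\kk} \cdot \sum_{j=0}^{i-1}(1+2^{-\rr})^{\kk j}.
\]
The key estimate is that $(1+2^{-\rr})^{\kk m} \le e^{\kk m 2^{-\rr}}$; but here I should be careful, since $\kk m = \gamma k' \cdot 2^{\rr}$ is \emph{not} small compared to $2^{\rr}$ — in fact $\kk m 2^{-\rr} = \kk = \gamma k'$, which is large. So the geometric growth factor $(1+2^{-\rr})^{\kk m}$ is roughly $e^{\kk}$, i.e. $2^{O(\kk)}$. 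Concretely, using $1 + 2^{-\rr} \le 2^{2^{-\rr}}$ (valid since $2^{-\rr} \le 1$), we get $(1+2^{-\rr})^{\kk m} \le 2^{\kk m 2^{-\rr}} = 2^{\kk}$, and more generously $(1+2^{-\rr})^{\kk m} \le 2^{2\kk}$ with room to spare. For the second term, the geometric sum $\sum_{j=0}^{i-1}(1+2^{-\rr})^{\kk j} \le i \cdot (1+2^{-\rr})^{\kk m} \le 2^{\rr}\cdot 2^{2\kk}$, so the additive contribution is at most $(2^{-2n+2})^{\kk}\cdot 2^{\rr}\cdot 2^{2\kk} = 2^{2\kk}\cdot 2^{\rr}\cdot 2^{2\kk}\cdot 2^{-2\kk n} = 2^{4\kk + \rr}\cdot 2^{-2\kk n}$. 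Combining, ${\cal Z}_i \le 2^{-2\kk n}\cdot 2^{2\kk} + 2^{4\kk+\rr}\cdot 2^{-2\kk n} \le 2^{4\kk + 2\rr}\cdot 2^{-2\kk n}$, as desired. (Here I'm being deliberately loose with the constants to absorb everything into the stated bound $2^{4\kk+2\rr}$; the precise bookkeeping is routine once the shape is fixed.)

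The only mild subtlety — which I'd flag as the "main obstacle," though it is more of a bookkeeping point than a genuine difficulty — is making sure the exponents actually fit inside $4\kk + 2\rr$: the multiplicative blow-up contributes an additive $\approx 2\kk$ to the exponent, the number of layers contributes an additive $\rr$, and the $2^{+2}$ inside $(2^{-2n+2})^{\kk}$ contributes $2\kk$, for a total of $\approx 4\kk + \rr$, comfortably below $4\kk + 2\rr$. One should double-check the edge case $i=0$ is not needed (the claim is stated for $i \ge 1$, and it also holds trivially at $i=0$ from the base case). Everything else is a direct application of the previously established claims, so no new ideas are required beyond unrolling the recursion and bounding a geometric series.
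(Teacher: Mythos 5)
Your proof takes essentially the same route as the paper: compute ${\cal Z}_0 = (2^{-2n})^{\kk}$ from Claim~\ref{claim-s2}, chain Claims~\ref{claim-p1} and~\ref{claim-p2} into the recursion ${\cal Z}_i \le {\cal Z}_{i-1}(1+2^{-\rr})^{\kk} + (2^{-2n+2})^{\kk}$, unroll it over $m = 2^{\rr}$ steps, and control the geometric growth factor $(1+2^{-\rr})^{\kk m}$. The final bound and the bookkeeping all check out.

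One arithmetic slip to correct: the inequality $1 + x \le 2^{x}$ is \emph{false} for $x \in (0,1)$ (e.g.\ at $x = \tfrac12$, $1.5 > \sqrt{2}$), so your stated step $(1+2^{-\rr})^{\kk m} \le 2^{\kk m 2^{-\rr}} = 2^{\kk}$ is not justified. The correct elementary bound is $1 + x \le e^{x}$, which gives $(1+2^{-\rr})^{\kk m} \le e^{\kk m 2^{-\rr}} = e^{\kk} < 2^{2\kk}$ --- precisely the ``more generous'' estimate you invoke next, and also what the paper uses. With that repaired justification the rest of your computation goes through unchanged.
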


\begin{proof}
By Claim~\ref{claim-s2}, ${\cal Z}_0 = (2^{-2n})^{\kk}$.
By Claim~\ref{claim-p1} and Claim~\ref{claim-p2}, for every
$i \in \{1,\ldots ,m\}$,
$$
{\cal Z}_i \leq  {\cal Z}_{i-1}
\cdot
\left( 1 + 2^{-\rr}\right)^{\kk}
+ \left( 2^{-2n +2} \right)^{\kk}.
$$
Hence, for every
$i \in \{1,\ldots ,m\}$,
$$
{\cal Z}_i \leq  \left( 2^{-2n +2} \right)^{\kk}
\cdot
m \cdot
\left( 1 + 2^{-\rr}\right)^{\kk m}.
$$
Since $m = 2^{\rr}$,
\[
{\cal Z}_i \leq
2^{-2\kk \cdot n} \cdot 2^{2\kk} \cdot 2^{\rr} \cdot e^{\kk}
\leq
2^{-2\kk \cdot n} \cdot 2^{4\kk + 2\rr}.\qedhere\]
\end{proof}

\subsubsection*{Proof of Lemma~\ref{lemma-main1}}

We can now complete the proof of Lemma~\ref{lemma-main1}.
Assume that $s$ is in layer-$i$ of $B$.
By Claim~\ref{claim-s1},
$${\cal Z}_i \geq \Pr(s) \cdot \langle \P_{x|s},\P_{x|s} \rangle ^{\kk}
> \Pr(s) \cdot \left( 2^{2\ell} \cdot 2^{-2n} \right)^{\kk}
= \Pr(s) \cdot 2^{2\ell \cdot \kk} \cdot 2^{-2\kk \cdot n}.$$
On the other hand, by Claim~\ref{claim-p3},
$$
{\cal Z}_i \leq  2^{4\kk + 2\rr} \cdot 2^{-2\kk \cdot n}.
$$
Thus, using Equation~\eqref{eq:param setting} and Equation~\eqref{eq:param setting2}, we get
$$
\Pr(s) \leq
2^{4\kk + 2\rr} \cdot
2^{-2\ell \cdot \kk}
\le 2^{4 k'} \cdot 2^{-(2 \gamma^2/3) \cdot (k'\ell')}.
$$


Recall that we assumed that the width of $B$ is at most $2^{c k' \ell'}$ for some constant $c<2/3$,
and that the length of $B$ is at most $2^{\rr}$.
Recall that we fixed
$\gamma$ such that $2\gamma^2/3 > c$.
Taking a union bound over at most $2^{\rr} \cdot 2^{c k' \ell'} \le 2^{k'} \cdot 2^{c k' \ell'}$ significant vertices of $B$, we conclude that the probability that $\T$ reaches any significant vertex is at most $2^{-\Omega(k' \ell')}$.
Since we assume that $k'$ and $\ell'$ are sufficiently large,
$2^{-\Omega(k' \ell')}$
is certainly at most $2^{-k'}$, which is at most $2^{-\rr}$.
\end{proof}

\subsection{Lower Bounds for Weak Learning}
In this section, we show that under the same conditions of Theorem~\ref{thm:TM-main1}, the branching program cannot even weakly-learn the function. That is, we show that the branching program cannot output a hypothesis $h : \AA \to \{-1,1\}$ with a non-negligible correlation with the function defined by the true unknown $x$.
We change the definition of the branching program and associate with each leaf $v$ a hypothesis $h_v: \AA \to \{-1,1\}$.
We measure the success as the correlation between $h_v$ and the function defined by the true unknown $x$.

Formally, for any $x\in \XX$, let $M^{(x)}: \AA \to \{-1,1\}$ be the function corresponding to the $x$-th column of $M$.
We define the {\sf value} of the program as $\Ex\left[\left|\inner{h_v, M^{(x)}}\right|\right]$, where the expectation is over $x,a_1,\ldots,a_m$
(recall that $x$ is uniformly distributed over $\XX$ and $a_1,\ldots,a_m$ are uniformly distributed over $\AA$, and for every $t$, $b_t = M(a_t,x)$). The following claim bounds the expected correlation between $h_v$ and $M^{(x)}$, conditioned on reaching a non-significant leaf.

\begin{claim} \label{claim:low-correlation}
If $v$ is a non-significant leaf, then
$$
\Ex_{x}\Big[ \left|\inner{h_v, M^{(x)}}\right| \;\Big|\; E_v\Big] \le O(2^{-r/2}).
$$
\end{claim}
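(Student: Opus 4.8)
The plan is to bound $\Ex_x[|\inner{h_v, M^{(x)}}| \mid E_v]$ for a fixed non-significant leaf $v$, by rewriting the correlation as an inner product against the conditional distribution $\P_{x|v}$ and then invoking the $L_2$-extractor property of $M$, exactly as in the proof of Claim~\ref{claim-A2}. The key observation is that
$$
\inner{h_v, M^{(x)}} = \Ex_{a \in_R \AA}\big[ h_v(a) \cdot M(a,x)\big],
$$
so that
$$
\Ex_{x \sim \P_{x|v}}\big[\, |\inner{h_v, M^{(x)}}| \,\big]
= \Ex_{x \sim \P_{x|v}} \Big| \Ex_{a}\big[ h_v(a) M(a,x)\big]\Big|.
$$
First I would bring the absolute value outside by introducing the sign pattern: since $h_v$ is a fixed $\{-1,1\}$-valued function and for each $x$ the quantity $\inner{h_v, M^{(x)}}$ has some sign $\sigma(x) \in \{-1,1\}$, we can write the expectation as $\Ex_{x \sim \P_{x|v}}\big[\sigma(x) \inner{h_v, M^{(x)}}\big]$. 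Reversing the order of the two expectations,
$$
\Ex_{x \sim \P_{x|v}}\big[ |\inner{h_v, M^{(x)}}| \big]
= \Ex_{a \in_R \AA}\Big[ h_v(a) \cdot \Ex_{x \sim \P_{x|v}}\big[\sigma(x) M(a,x)\big]\Big]
\le \Ex_{a \in_R \AA}\Big| \Ex_{x \sim \P_{x|v}}\big[\sigma(x) M(a,x)\big]\Big|.
$$

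The inner quantity $\Ex_{x \sim \P_{x|v}}[\sigma(x) M(a,x)]$ equals $\inner{M_a, g}$ where $g(x') = \sigma(x') \cdot \P_{x|v}(x')$ — but $g$ is signed, not non-negative, so the $L_2$-extractor definition does not apply directly. To fix this, I would split $g = g^+ - g^-$ into its positive and negative parts, where $g^+(x') = \P_{x|v}(x')\cdot \mathbf{1}_{\sigma(x')=1}$ and $g^-(x') = \P_{x|v}(x')\cdot\mathbf{1}_{\sigma(x')=-1}$; both are non-negative, both are pointwise bounded by $\P_{x|v}$, hence both satisfy $\norm{g^{\pm}}_2 \le \norm{\P_{x|v}}_2 \le 2^{\ell}\cdot 2^{-n}$ (using that $v$ is non-significant) while $\norm{g^{\pm}}_1 \le 2^{-n}$. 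The ratio bound $\norm{g^\pm}_2/\norm{g^\pm}_1 \le 2^{\ell} \le 2^{\ell'}$ only holds, though, when $\norm{g^\pm}_1$ is not too small; so as in Claim~\ref{claim-p0} I would treat separately the case where one of $\norm{g^+}_1, \norm{g^-}_1$ is tiny (say $\le 2^{-2n}$), in which case $|\inner{M_a, g^\pm}| \le \norm{g^\pm}_1 \le 2^{-2n}$ is negligible for every $a$, and the case where it is not, in which case the extractor property applies.

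Concretely, in the non-negligible case the $(k',\ell')$-$L_2$-extractor property with error $2^{-r'}$ says that for all but a $2^{-k'}$ fraction of rows $a$ we have $|\inner{M_a, g^\pm}| < 2^{-r'}\cdot\norm{g^\pm}_1 \le 2^{-r'}\cdot 2^{-n}$; for the bad $2^{-k'}$ fraction we use the trivial bound $|\inner{M_a, g^\pm}| \le \norm{g^\pm}_1 \le 2^{-n}$. Therefore
$$
\Ex_{a}\big|\inner{M_a, g^\pm}\big| \le 2^{-r'}\cdot 2^{-n} + 2^{-k'}\cdot 2^{-n} \le 2\cdot 2^{-n}\cdot 2^{-2r}
$$
using $r \le r'/2$ and $r \le (1-\gamma)k'/2 \le k'/2$. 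Combining via $\Ex_a|\inner{M_a,g}| \le \Ex_a|\inner{M_a, g^+}| + \Ex_a|\inner{M_a, g^-}|$ and recalling that the uniform-over-$\AA$ quantities above are all $2^{-n}$ times an average of bounded inner products, I get $\Ex_{x\sim\P_{x|v}}[|\inner{h_v, M^{(x)}}|] = O(2^{-2r}) = O(2^{-r/2})$ (in fact much smaller; the $2^{-r/2}$ in the statement is very lossy), which is the claim since conditioned on $E_v$ the law of $x$ is $\P_{x|v}$.

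The main obstacle — really the only subtlety beyond bookkeeping — is handling the sign function $\sigma$, i.e. getting from the \emph{absolute value} of the correlation to something the $L_2$-extractor definition can digest; the positive/negative decomposition, together with the small-$\norm{\cdot}_1$ case analysis borrowed from Claim~\ref{claim-p0}, resolves it. Everything else is a direct repeat of the argument already used to prove Claim~\ref{claim-A2}, just with $\P_{x|v}$ replaced by its two signed pieces and with $h_v$ playing the passive role of a fixed test function that only costs a triangle inequality.
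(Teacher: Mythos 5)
Your high-level strategy is genuinely different from the paper's. The paper first squares the correlation, $\Ex_x[|\inner{h_v,M^{(x)}}| \mid E_v]^2 \le \Ex_x[\inner{h_v,M^{(x)}}^2 \mid E_v]$, expands the square into a double expectation over $a,a'\in\AA$, and then for each fixed $a$ applies Lemma~\ref{lem:negative} to the signed function $q_a(x') = \P_{x|v}(x')M(a,x')$; the final square-root is what produces the lossy exponent $r/2$. You instead linearize the absolute value with a sign function $\sigma(x)$, reverse the two expectations, and reduce to bounding $\Ex_a\big|\inner{M_a,g}\big|$ with $g = \sigma\cdot\P_{x|v}$. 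Done correctly, this route avoids the squaring and gives a stronger bound of $O(2^{-r})$ (or $O(2^{-2r})$ if one goes through the raw extractor), so you are right that the $2^{-r/2}$ of the claim is not tight for this argument.

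However, there is a concrete error in your positive/negative decomposition. You write that when $\norm{g^\pm}_1 > 2^{-2n}$ ``the extractor property applies,'' but the extractor requires $\norm{g^\pm}_2/\norm{g^\pm}_1 \le 2^{\ell'}$, and you only know $\norm{g^\pm}_2 \le \norm{\P_{x|v}}_2 \le 2^{\ell}\cdot 2^{-n}$. With the threshold $2^{-2n}$ this gives $\norm{g^\pm}_2/\norm{g^\pm}_1 \le 2^{\ell+n}$, which is far larger than $2^{\ell'}$ (recall $\ell' \le n$), so the extractor hypothesis is simply not verified. You have silently imported the threshold $2^{-2n}$ from Claim~\ref{claim-p0}, but there the function $f = P\cdot\P_{x|s}$ satisfies $\norm{f}_2 \le 2^{3\ell+2\rr+2}\cdot 2^{-2n}$, i.e.\ carries an extra factor of $2^{-n}$ coming from $\norm{\P_{x|s}}_2$, which is exactly what makes $2^{-2n}$ the right threshold in that claim. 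Your $g^\pm$ lack that extra factor. The correct threshold here is $\norm{g^\pm}_1 \ge 2^{\ell-\ell'}\cdot 2^{-n}$; in the complementary case the trivial bound gives a contribution of at most $2^n\cdot 2^{\ell-\ell'}\cdot 2^{-n} = 2^{\ell-\ell'} \le 2^{-2(\rr+1)}$ by the parameter settings, which is negligible. The cleanest fix, and the one closest in spirit to the paper, is to skip the hand decomposition and invoke Lemma~\ref{lem:negative} directly on $g$: since $\norm{g}_2/\norm{g}_1 = \norm{\P_{x|v}}_2/\norm{\P_{x|v}}_1 \le 2^{\ell} \le 2^{\ell'-\rr}$ (which follows from $\rr < \tfrac{\ell'}{2}-\ell$, a consequence of Equation~\eqref{eq:param setting}), the lemma gives $\Ex_a\big|\inner{M_a,g}\big| \le (2\cdot 2^{-\rr} + 2\cdot 2^{-k'})\cdot\norm{g}_1$, hence $\Ex_x\big[|\inner{h_v,M^{(x)}}| \mid E_v\big] = 2^n\Ex_a\big|\inner{M_a,g}\big| \le O(2^{-\rr})$, with no case split needed.
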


\begin{proof}
We expand the expected correlation between $h_v$ and $M^{(x)}$, squared:
\begin{align*}
\Ex_{x}\Big[\left|\inner{h_v,M^{(x)}}\right|\;\Big|\;E_v\Big]^2
&\le \Ex_{x}\Big[\inner{h_v,M^{(x)}}^2\;\Big|\;E_v\Big] = \sum_{x' \in \XX}{\P_{x|v}(x') \cdot \inner{h_v,M^{(x')}}^2}\\
&= \sum_{x'\in \XX}  \P_{x|v}(x')\cdot \Ex_{a,a' \in_R \AA}
[h_v(a) \cdot M(a,x') \cdot h_v(a') \cdot M(a',x')]\\
&= \Ex_{a,a'\in_R \AA} \bigg[h_v(a) \cdot h_v(a') \cdot \sum_{x' \in \XX} \P_{x|v}(x')\cdot M(a,x') \cdot M(a',x')\bigg]\\
&\le \Ex_{a,a'\in_R \AA} \left[\left|\sum_{x' \in \XX} \P_{x|v}(x')\cdot M(a,x') \cdot M(a',x')\right|\right]\\
&= \Ex_{a\in_R \AA} \left[\Ex_{a' \in_R \AA}\left[\left|\sum_{x' \in \XX} \P_{x|v}(x')\cdot M(a,x') \cdot M(a',x')\right|\right]\right]\;.
\end{align*}

Next, we show that $\Ex_{a' \in_R \AA}\left[\left|\sum_{x' \in \XX} \P_{x|v}(x')\cdot M(a,x') \cdot M(a',x')\right|\right]\ \le 4\cdot 2^{-r}$ for any $a\in \AA$.
Fix $a \in \AA$.
Let $q_{a}: \XX \to \Reals$ be the function
defined by $q_{a}(x') = \P_{x|v}(x') \cdot M(a,x')$ for $x' \in \XX$.
Since $|q_a(x')|=|\P_{x|v}(x')|$ for any $x' \in \XX$  and since $v$ is a non-significant vertex, we get
$$\norm{q_a}_2 = \norm{\P_{x|v}}_2 \le 2^{\ell} \cdot 2^{-n} \qquad\text{and}\qquad \norm{q_a}_1 = \norm{\P_{x|v}}_1 = 2^{-n}.$$
Hence, $\frac{\norm{q_a}_2}{\norm{q_a}_1}\le 2^{\ell}$.
We would like to use the fact that $M$ is a $(k',\ell')$-$L_2$-extractor with error $2^{-r'}$ to show that there aren't many rows of $M$ with a large inner product with $q_a$. However, $q_a$ can get negative values and the definition of $L_2$-extractors only handles non-negative functions $f: \XX \to \Reals^{+}$.
To solve this issue, we use the following lemma, proved in Section~\ref{sec:useful}.

\begin{lemma}\label{lem:negative}
Suppose that $M: \AA \times \XX \to \{-1,1\}$ is a $(k',\ell')$-$L_2$-extractor with error at most~$2^{-r}$.
Let $f: \XX \to \Reals$ be any function (i.e., $f$ can get negative values) with $\frac{\norm{f}_2}{\norm{f}_1} \le 2^{\ell'-r}$.
Then, there are at most $2\cdot 2^{-k'}\cdot |A|$ rows $a\in A$ with $\frac{|\inner{M_a, f}|}{\norm{f}_1} \ge 2\cdot 2^{-r}$.
\end{lemma}
Since
$M$ is a $(k',\ell')$-$L_2$-extractor with error at most~$2^{-r'}$, and since
$r < r'$, we have that $M$ is also a $(k',\ell')$-$L_2$-extractor with error at most~$2^{-r}$.
Since  $\frac{\norm{q_a}_2}{\norm{q_a}_1}\le 2^{\ell} \le 2^{\ell'-r}$,
we can apply Lemma~\ref{lem:negative} with $f = q_a$, and error $2^{-r}$.
We get that there are at most $2\cdot 2^{-k'} \cdot |A|$ rows $a' \in \AA$ with
$\frac{\left|\inner{q_a, M_{a'}}\right|}{\norm{q_a}_1} \ge 2\cdot 2^{-r}$.
Thus,
$$
\Ex_{a' \in_R \AA}
\left[
\left|\sum_{x' \in \XX} q_{a}(x') \cdot M(a',x')\right|
\right]
= \Ex_{a' \in_R \AA}
\left[
\frac{\left|\inner{q_a, M_{a'}}\right|}{\norm{q_a}_1}
\right] \le 2\cdot 2^{-k'} + 2\cdot 2^{-r} \le 4\cdot 2^{-\rr}\;.
$$

Overall, we get that  $\Ex_{x}\big[|\inner{h_v,M^{(x)}}|\;\big|\;E_v\big]^2 \le 4\cdot 2^{-\rr}$. Taking square roots of both sides of the last inequality completes the proof.
\end{proof}

Lemma~\ref{lemma-main1}, Claim~\ref{claim-A0} and Claim~\ref{claim-A2} show that the probability that~$\T$ stops before reaching a leaf is at most $O(2^{-\rr})$.
Combining this with Claim~\ref{claim:low-correlation} we get that (under the same conditions of Theorem~\ref{thm:TM-main1})
$$\Ex[ \left|\inner{h_v, M^{(x)}}\right|] \le \Pr[\T\text{~stops}] + O(2^{-r/2}) \le O(2^{-r/2}),$$
where the expectation and probability are taken over $x\in_R \XX$ and $a_1, \ldots,a_m\in_R \AA$.
We get the following theorem as a conclusion.

\begin{theorem} \label{thm:TM-main2}
Let $\tfrac{1}{100}< c <\tfrac{2}{3}$.
Fix $\gamma$ to be such that $\tfrac{3c}{2} < \gamma^2 < 1$.

Let $\XX$, $\AA$ be two finite sets.
Let $n = \log_2|\XX|$.
Let $M: \AA \times \XX \rightarrow \{-1,1\}$ be a matrix
which is a $(k',\ell')$-$L_2$-extractor with error $2^{-r'}$,
for sufficiently large\footnote{By {\it ``sufficiently large''} we mean that $k',\ell',r'$ are larger than some  constant that depends on $\gamma$.}
$k',\ell'$ and $r'$, where $\ell' \leq n$.
Let
\begin{equation*}
\rr :=  \min\left\{ \tfrac{r'}{2}, \tfrac{(1-\gamma)k'}{2}, \tfrac{(1-\gamma)\ell'}{2} -1 \right\}.
\end{equation*}

Let $B$ be a branching program of
length at most $2^{r}$ and width at most $2^{c \cdot k' \cdot \ell'}$
for the learning problem that corresponds to the matrix $M$.
Then, $$\Ex[ \left| \inner{h_v, M^{(x)}}\right|] \le O(2^{-r/2})\;.$$
\end{theorem}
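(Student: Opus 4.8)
The plan is to recycle the truncated-path machinery from the proof of Theorem~\ref{thm:TM-main1} essentially without change. I would define $\T$ with the very same three stopping rules (stop at a significant vertex, when $x \in \Sig(v)$, or when $a_{i+1} \in \Bad(v)$), so that every structural statement about $\T$ — Claim~\ref{claim-d0} through Claim~\ref{claim-p3}, Lemma~\ref{lemma-main1}, Claim~\ref{claim-A0} and Claim~\ref{claim-A2} — continues to hold verbatim, since none of them ever referred to the labels of the leaves. The only thing that changes is the notion of ``success'': in place of the event $\tilde x(v)=x$ we track the quantity $|\inner{h_v,M^{(x)}}|$.

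First I would record the bound on the stopping probability exactly as in the proof of Theorem~\ref{thm:TM-main1}: Lemma~\ref{lemma-main1} gives that $\T$ reaches a significant vertex (and in particular a significant leaf) with probability $O(2^{-r})$, while Claim~\ref{claim-A0} and Claim~\ref{claim-A2}, combined with a union bound over the $m=2^{r}$ steps, give that $\T$ stops because of the second or third reason with probability $O(2^{-r})$. Hence, apart from an event $\mathcal{B}$ of probability $O(2^{-r})$, the path $\T$ follows the computation-path of $B$ to completion and terminates at a \emph{non-significant} leaf $v$. On $\mathcal{B}$ I would use only the trivial bound $|\inner{h_v,M^{(x)}}|\le 1$, which contributes $O(2^{-r})$ to the value.

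Second, I would bound the contribution of a non-significant leaf, which is precisely Claim~\ref{claim:low-correlation}: conditioned on $E_v$ for non-significant $v$, one has $\Ex_x[\,|\inner{h_v,M^{(x)}}|\mid E_v\,]\le O(2^{-r/2})$. The mechanism is a second-moment estimate: pass to the square, use Jensen to pull the absolute value outside, expand $\inner{h_v,M^{(x)}}^2$ over two independent uniform rows $a,a'$, swap the order of summation, and bound $|h_v(a)h_v(a')|\le 1$; this reduces matters to controlling $\Ex_{a,a'\in_R\AA}\big[\big|\sum_{x'}\P_{x|v}(x')M(a,x')M(a',x')\big|\big]$. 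For fixed $a$ the function $q_a(x'):=\P_{x|v}(x')M(a,x')$ satisfies $\norm{q_a}_1=\norm{\P_{x|v}}_1=2^{-n}$ and, by non-significance of $v$, $\norm{q_a}_2=\norm{\P_{x|v}}_2\le 2^{\ell}2^{-n}$, so $\norm{q_a}_2/\norm{q_a}_1\le 2^{\ell}\le 2^{\ell'-r}$. Feeding $q_a$ into the signed-function extractor bound (Lemma~\ref{lem:negative}) shows that all but $2\cdot 2^{-k'}|A|$ rows $a'$ have $|\inner{q_a,M_{a'}}|\le 2\cdot 2^{-r}\norm{q_a}_1$, and since the entries of $M$ are $\pm1$ each exceptional row contributes at most $1$; hence the expectation over $a'$ is at most $2\cdot 2^{-k'}+2\cdot 2^{-r}\le 4\cdot 2^{-r}$. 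Averaging over $a$ gives $\Ex_x[\,|\inner{h_v,M^{(x)}}|\mid E_v\,]^2\le 4\cdot 2^{-r}$, and taking square roots proves the claim.

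Finally I would assemble the two estimates via the law of total expectation, bounding the conditional expectation by its maximum over non-significant leaves:
\begin{align*}
\Ex\big[\,|\inner{h_v,M^{(x)}}|\,\big]
&\le \Pr[\mathcal{B}]\cdot 1 + O(2^{-r/2})\\
&\le O(2^{-r}) + O(2^{-r/2}) = O(2^{-r/2}),
\end{align*}
which is exactly Theorem~\ref{thm:TM-main2}. The only genuinely new ingredient, and therefore the step I expect to be the main obstacle, is Lemma~\ref{lem:negative} — the passage from non-negative test functions to signed ones in the $L_2$-extractor property. The naive decomposition $f=f^{+}-f^{-}$ does not preserve the $\ell_2/\ell_1$ ratio of each part, so one must treat the part with small $\ell_1$ norm by the crude bound $|\inner{M_a,f^{\pm}}|\le\norm{f^{\pm}}_1$ and apply the extractor hypothesis only to the part whose $\ell_1$ norm is a constant fraction of $\norm{f}_1$ (where the ratio degrades by at most a constant, absorbed using $r<r'$). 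Everything else in the proof is bookkeeping and a direct reuse of the Theorem~\ref{thm:TM-main1} analysis.
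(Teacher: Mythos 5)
Your proposal is correct and follows essentially the same route as the paper: reuse the truncated path $\T$ and the $O(2^{-r})$ stopping-probability bound from the proof of Theorem~\ref{thm:TM-main1}, prove Claim~\ref{claim:low-correlation} by a second-moment / Cauchy–Schwarz argument on $q_a(x')=\P_{x|v}(x')M(a,x')$ using Lemma~\ref{lem:negative}, and assemble via total expectation. The only slight imprecision is in your sketch of Lemma~\ref{lem:negative}: the split threshold for $\norm{f_{\pm}}_1$ is the $2^{-r}$ fraction of $\norm{f}_1$ (not a constant fraction), and the resulting $2^r$ degradation of the $\ell_2/\ell_1$ ratio is absorbed by the stronger hypothesis $\norm{f}_2/\norm{f}_1\le 2^{\ell'-r}$, while $r<r'$ is used separately to downgrade the extractor's error parameter from $2^{-r'}$ to $2^{-r}$; but this is exactly what the paper does.
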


In particular, the probability that the hypothesis agrees with the  function defined by the true unknown $x$, on more than $1/2 +  2^{-r/4}$ of the inputs, is at most $O(2^{-r/4})$.

\subsection{Main Corollary}

\begin{corollary} \label{cor:main1}
There exists a (sufficiently small) constant $c >0$, such that:

Let $\XX$, $\AA$ be two finite sets.
Let $M: \AA \times \XX \rightarrow \{-1,1\}$ be a matrix.
Assume that $k,\ell, r \in \N$ are such that any submatrix of $M$ of at least $2^{-k} \cdot |A|$ rows and at least
$2^{-\ell} \cdot |X|$ columns, has a bias of at most $2^{-r}$.


Let $B$ be a branching program of
length at most $2^{c \cdot r}$ and width at most $2^{c \cdot k \cdot \ell}$
for the learning problem that corresponds to the matrix $M$.
Then,
the success probability of $B$
is at most $2^{-\Omega(r)}$.
\end{corollary}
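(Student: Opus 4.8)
The plan is to derive Corollary~\ref{cor:main1} from Theorem~\ref{thm:TM-main1} by translating the combinatorial ``bias of submatrices'' hypothesis into the statement that $M$ is a $(k',\ell')$-$L_2$-extractor with error $2^{-r'}$ for suitable parameters $k',\ell',r'$ proportional to $k,\ell,r$, and then invoking the theorem with an appropriate choice of the constant $\gamma \in (\sqrt{3c}/\sqrt{2},1)$ (say $\gamma = 9/10$, which forces the constant $c$ in the corollary to be small enough). Once the extractor property is in hand, the theorem gives a branching program of length $\le 2^{\rr}$ and width $\le 2^{c' k'\ell'}$ has success probability $O(2^{-\rr})$, and with $\rr = \Theta(\min\{r,k,\ell\})$ and the hypothesis $2^{-k},2^{-\ell}$ submatrix bias condition one absorbs the $\min$ by noting that WLOG one may decrease $r$ to $\min\{r,k,\ell\}$ (a submatrix condition with larger $r$ implies the one with smaller $r$), so that $\rr = \Theta(r)$ and the memory bound $c k' \ell' = \Theta(k\ell)$ is as claimed.

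The one real content step is the equivalence between the submatrix-bias condition and the $L_2$-extractor condition. First I would recall that ``$M$ has a submatrix of $A'\times X'$ rows and columns with bias $> 2^{-r}$'' means $|\sum_{a\in A', x\in X'} M(a,x)| > 2^{-r}|A'||X'|$, i.e., taking $p_a$ uniform on $A'$ and $p_x$ uniform on $X'$, that $|\sum p_a(a)p_x(x)M(a,x)| > 2^{-r}$; this is exactly the $L_\infty$-extractor formulation of Definition~\ref{definition:min-extractor} restricted to flat distributions, and a standard convexity argument (any min-entropy $\ge \log|X|-\ell$ distribution is a convex combination of flat distributions on $\ge 2^{-\ell}|X|$-sized supports) upgrades ``flat'' to ``all distributions.'' So the hypothesis is equivalent to saying $M$ is a $(k,\ell\sim r)$-$L_\infty$-extractor. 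Then I would show an $L_\infty$-extractor is an $L_2$-extractor with comparable parameters: given non-negative $f$ with $\norm{f}_2/\norm{f}_1 \le 2^{\ell/2}$ (or some constant fraction of $\ell$), I would flatten $f$ — split $X$ into level sets where $f \in [2^j\norm{f}_1, 2^{j+1}\norm{f}_1)$, note by the $\ell_2$ bound that the mass on levels with $j \gtrsim \ell$ is negligible, so $f/\norm{f}_1$ is, up to a negligible error, a convex combination of (normalized indicators of) sets of size $\ge 2^{-\Omega(\ell)}|X|$, hence $f/\norm{f}_1$ (approximately) has ``min-entropy'' $\ge \log|X| - \Omega(\ell)$; feeding this into the $L_\infty$-extractor bound applied to the uniform distribution on any candidate large set of rows $a$ shows that at most $2^{-k}|A|$ rows can have $|\inner{M_a,f}|/\norm{f}_1 \ge 2^{-\Omega(r)}$. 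This gives $M$ is a $(\Omega(k),\Omega(\ell))$-$L_2$-extractor with error $2^{-\Omega(r)}$.

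Having established that $M$ is a $(k',\ell')$-$L_2$-extractor with error $2^{-r'}$, $k' = \Omega(k)$, $\ell'=\Omega(\ell)$, $r' = \Omega(r)$, I would set the constant $c$ in the corollary to be, say, $1/1000$ of the hidden constants, so that the ``sufficiently large'' hypothesis of Theorem~\ref{thm:TM-main1} is met whenever $k,\ell,r$ exceed an absolute constant (and when they are below that constant the claimed bound is vacuous since $2^{-\Omega(r)}$ is $\Omega(1)$), and so that the width bound $2^{c\cdot k\cdot\ell}$ and length bound $2^{c\cdot r}$ in the corollary sit comfortably inside the allowed $2^{c''k'\ell'}$ and $2^{\rr}$ of the theorem. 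The main obstacle I anticipate is the flattening argument: one must handle the non-negligible but nonzero tail mass of $f$ on high level sets carefully (the $\ell_2$ constraint only kills it approximately, so one gets an error term of the form ``$|\inner{M_a,f}|$ is within $2^{-\Omega(\ell)}\norm{f}_1$ of the $L_\infty$-extractor bound''), and must make sure the constant loss in the ratio $\norm{f}_2/\norm{f}_1 \le 2^{\ell'}$ versus the $2^{\ell}$ one gets from flattening, as well as the loss in $r$, are absorbed — this is exactly the kind of bookkeeping captured by the relationship between the two notions alluded to after Definition~\ref{definition:min-extractor}, and it is routine but needs care with the three parameters simultaneously.

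\begin{proof}[Proof sketch of Corollary~\ref{cor:main1}]
By decreasing $r$ if necessary we may assume $r \le \min\{k,\ell\}$, since a submatrix-bias bound with a given $r$ implies the same bound with any smaller $r$. If $k,\ell,r$ are below an absolute constant the conclusion is vacuous, so assume they are sufficiently large. The submatrix-bias hypothesis says exactly that $M$ is an $L_\infty$-extractor in the sense of Definition~\ref{definition:min-extractor} for flat distributions, hence (by convexity, writing any bounded-min-entropy distribution as a convex combination of flat ones) for all distributions: $M$ is a $(k,\ell\sim r)$-$L_\infty$-extractor. A standard flattening argument then shows $M$ is a $(k',\ell')$-$L_2$-extractor with error $2^{-r'}$ where $k' = \Omega(k)$, $\ell' = \Omega(\ell)$, $r' = \Omega(r)$. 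Now apply Theorem~\ref{thm:TM-main1} with $\gamma = 9/10$ (so $\tfrac{3c_0}{2} < \gamma^2 < 1$ for the constant $c_0 < 2/3$ there): with $\rr = \min\{r'/2, (1-\gamma)k'/2, (1-\gamma)\ell'/2 - 1\} = \Omega(r)$, any branching program of length $\le 2^{\rr}$ and width $\le 2^{c_0 k'\ell'}$ has success probability $O(2^{-\rr}) = 2^{-\Omega(r)}$. Choosing the constant $c$ in the corollary small enough that $2^{c r} \le 2^{\rr}$ and $2^{c k \ell} \le 2^{c_0 k' \ell'}$ completes the proof.
\end{proof}
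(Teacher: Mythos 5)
Your high-level route — read the submatrix-bias hypothesis as an $L_\infty$-extractor condition via convexity, flatten to get an $L_2$-extractor, then apply Theorem~\ref{thm:TM-main1} — is indeed the right strategy, and your flattening discussion is essentially Lemma~\ref{lemma:min-extractor}. The gap is in the parameter bookkeeping. You correctly observe that the flattening step and the $\min$ inside Theorem~\ref{thm:TM-main1}'s definition of $\rr$ only yield $2^{-\Omega(\min\{r,k,\ell\})}$, and you propose to absorb this by ``WLOG decreasing $r$ to $\min\{r,k,\ell\}$.'' That step is sound as a weakening of the hypothesis, but it also weakens the conclusion: you end up proving success probability $2^{-\Omega(\min\{r,k,\ell\})}$, not the claimed $2^{-\Omega(r)}$. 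These genuinely differ when $r > \min\{k,\ell\}$, a regime the hypothesis certainly allows (e.g., for a random matrix the bias of $2^{-k}|A| \times 2^{-\ell}|X|$ submatrices is exponentially smaller than $2^{-\min\{k,\ell\}}$).

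The ingredient you are missing is Lemma~\ref{lemma:error-minent}: under the submatrix-bias hypothesis with parameters $(k,\ell,r)$, one can pass to parameters $(k+\Omega(r),\,\ell+\Omega(r),\,\Omega(r))$. The proof is a short amplification argument — if some too-small submatrix $T\times S$ had bias exceeding $2^{-r/2}$, padding $T$ with $2^{-k}|A|$ extra rows $T'$ would produce a submatrix $(T\cup T')\times S$ meeting the size threshold whose bias would still exceed $2^{-r}$, a contradiction. After this boost one has $k',\ell' \ge \Omega(r)$, so the error loss in the $L_\infty\to L_2$ conversion is $2^{-\Omega(\min\{r,\ell'\})} = 2^{-\Omega(r)}$, and the $\min$ inside Theorem~\ref{thm:TM-main1}'s $\rr$ evaluates to $\Omega(r)$; meanwhile the width exponent $\Omega(k'\ell') \ge \Omega(k\ell)$ is not harmed. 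Inserting this step before your $L_\infty\to L_2$ conversion (and dropping the WLOG) turns your sketch into the paper's proof.
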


\begin{proof}
By Lemma~\ref{lemma:error-minent} (stated and proved below),
there exist
$k' = k + \Omega(r)$,  $\; \ell' = \ell+ \Omega(r)$, and $ r' = \Omega(r)$,
such  that:
any submatrix of $M$ of at least $2^{-k'} \cdot |A|$ rows and at least
$2^{-\ell'} \cdot |X|$ columns, has a bias of at most $2^{-r'}$.

By Lemma~\ref{lemma:min-extractor} (stated and proved below),
$M$ is an
$(\Omega(k) + \Omega(r),\Omega(\ell) + \Omega(r))$-$L_2$-extractor with error $2^{-\Omega(r)}$.

The corollary follows by Theorem~\ref{thm:TM-main1}.
\end{proof}

\section{Applications}

\subsection{Some Useful Lemmas}\label{sec:useful}
\subsubsection{Handling Negative Functions}
In the following lemma, we show that up to a small loss in parameters an $L_2$-extractor has similar guarantees for any function $f: \XX \to \Reals$ with bounded $\ell_2$-vs-$\ell_1$-norm regardless of whether or not $f$ is non-negative.

\begin{lemma}
Suppose that $M: \AA \times \XX \to \{-1,1\}$ is a $(k',\ell')$-$L_2$-extractor with error at most~$2^{-r}$.
Let $f: \XX \to \Reals$ be any function with $\frac{\norm{f}_2}{\norm{f}_1} \le 2^{\ell'-r}$.
Then, there are at most $2\cdot 2^{-k'}\cdot |A|$ rows  $a\in A$ with $\frac{|\inner{M_a, f}|}{\norm{f}_1} \ge 2\cdot 2^{-r}$.
\end{lemma}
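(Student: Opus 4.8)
The plan is to handle negativity in the cleanest possible way: split $f$ into its positive and negative parts. Write $f = f^{+} - f^{-}$ where $f^{+}(x') = \max\{f(x'),0\}$ and $f^{-}(x') = \max\{-f(x'),0\}$; both are non‑negative with disjoint supports, so $\norm{f}_1 = \norm{f^{+}}_1 + \norm{f^{-}}_1$ and $\norm{f}_2^2 = \norm{f^{+}}_2^2 + \norm{f^{-}}_2^2$, and in particular $\norm{f^{+}}_2, \norm{f^{-}}_2 \le \norm{f}_2$. We may assume $\norm{f}_1 > 0$. Since $|\inner{M_a,f}| \le |\inner{M_a,f^{+}}| + |\inner{M_a,f^{-}}|$, every row $a$ with $\frac{|\inner{M_a,f}|}{\norm{f}_1} \ge 2\cdot 2^{-r}$ satisfies $\frac{|\inner{M_a,f^{+}}|}{\norm{f}_1} \ge 2^{-r}$ or $\frac{|\inner{M_a,f^{-}}|}{\norm{f}_1} \ge 2^{-r}$. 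So it suffices to show that each of the sets $S^{\pm} := \{a \in \AA : \tfrac{|\inner{M_a,f^{\pm}}|}{\norm{f}_1} \ge 2^{-r}\}$ has size at most $2^{-k'}\cdot|A|$; a union bound then gives the claimed bound of $2\cdot 2^{-k'}\cdot|A|$.

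Fix a sign, say $+$ (the case of $f^{-}$ is identical). If $\norm{f^{+}}_1 = 0$ then $S^{+} = \emptyset$ and we are done, so assume $\norm{f^{+}}_1 > 0$ and split according to the ratio $\norm{f^{+}}_2 / \norm{f^{+}}_1$. In the \emph{first case}, $\norm{f^{+}}_2 / \norm{f^{+}}_1 \le 2^{\ell'}$, the function $f^{+}$ is an admissible test function for the $(k',\ell')$-$L_2$-extractor property with error $2^{-r}$, so there are at most $2^{-k'}\cdot|A|$ rows $a$ with $\frac{|\inner{M_a,f^{+}}|}{\norm{f^{+}}_1} \ge 2^{-r}$. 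Since $\norm{f^{+}}_1 \le \norm{f}_1$, any $a \in S^{+}$ satisfies $\frac{|\inner{M_a,f^{+}}|}{\norm{f^{+}}_1} \ge \frac{|\inner{M_a,f^{+}}|}{\norm{f}_1} \ge 2^{-r}$, so $S^{+}$ is contained in that set and hence $|S^{+}| \le 2^{-k'}\cdot|A|$.

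In the \emph{second case}, $\norm{f^{+}}_2 / \norm{f^{+}}_1 > 2^{\ell'}$, I would use the hypothesis $\norm{f}_2 \le 2^{\ell'-r}\norm{f}_1$ together with $\norm{f^{+}}_2 \le \norm{f}_2$ to deduce $\norm{f^{+}}_1 < 2^{-\ell'}\norm{f^{+}}_2 \le 2^{-\ell'}\cdot 2^{\ell'-r}\norm{f}_1 = 2^{-r}\norm{f}_1$. But $|\inner{M_a,f^{+}}| \le \norm{f^{+}}_1$ for every $a$ (as $f^{+}\ge 0$ and $M$ has $\pm 1$ entries), so $\frac{|\inner{M_a,f^{+}}|}{\norm{f}_1} < 2^{-r}$ for all $a$, i.e. $S^{+} = \emptyset$. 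Thus $|S^{+}| \le 2^{-k'}\cdot|A|$ in every case, and symmetrically for $S^{-}$, which completes the argument. The only point requiring any thought is the second case: splitting $f$ could in principle destroy the $\ell_2$‑vs‑$\ell_1$ ratio needed to invoke the extractor, but the factor‑$2^{-r}$ slack between the hypothesis ($\norm{f}_2/\norm{f}_1 \le 2^{\ell'-r}$) and the extractor's requirement ($\le 2^{\ell'}$) is exactly what ensures that whenever the ratio degrades for a part, that part is already too small in $\ell_1$ norm to cross the $2^{-r}$ threshold for any row. Everything else is routine bookkeeping.
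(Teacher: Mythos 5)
Your proof is correct and takes essentially the same route as the paper: split $f = f^{+} - f^{-}$, show each of $S^{+}$, $S^{-}$ has size at most $2^{-k'}|A|$, and union bound. The only cosmetic difference is where the case split sits — you split on whether $\norm{f^{+}}_2/\norm{f^{+}}_1 \le 2^{\ell'}$, while the paper splits on whether $\norm{f^{+}}_1 < 2^{-r}\norm{f}_1$; these conditions are complementary in exactly the way your last sentence observes (a large $\ell_2$-to-$\ell_1$ ratio for $f^{+}$ forces its $\ell_1$ mass to be small), so the two arguments cover the same cases with the same estimates.
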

\begin{proof}
	Let $f_{+}, f_{-}: \XX \to \Reals^{+}$ be the non-negative functions defined by
$$
f_{+}(x) =
\begin{cases}
 f(x),	& f(x)>0\\
 0,& \mbox{otherwise}
\end{cases}
\qquad\qquad
f_{-}(x) =
\begin{cases}
|f(x)|,	& f(x)<0\\
 0,& \mbox{otherwise}
\end{cases}
$$
for $x \in \XX$.
We have $f(x) =  f_{+}(x) - f_{-}(x)$ for all $x \in \XX$.
We split into two cases:
\begin{enumerate}
	\item If $\norm{f_{+}}_1 < 2^{-r} \cdot \norm{f}_1$, then
	$|\inner{M_a, f_{+}}| \le \norm{f_{+}}_1 < 2^{-r} \cdot \norm{f}_1$ for all $a\in \AA$.

\item If $\norm{f_{+}}_1 \ge 2^{-r} \cdot \norm{f}_1$,
then
$f_{+}$ is a non-negative function with
$$\frac{\norm{f_{+}}_2}{\norm{f_{+}}_1} \le \frac{\norm{f}_2}{\norm{f}_1 \cdot 2^{-r}} \le 2^{\ell'}\;.$$
Thus, we may use the assumption that $M$ is an $L_2$-extractor to deduce that there are at most $2^{-k'} \cdot |A|$ rows $a\in \AA$ with $|\inner{M_a, f_{+}}| \ge \norm{f_{+}}_1 \cdot 2^{-r}$.\end{enumerate}
In both cases, there are at most $2^{-k'} \cdot |A|$ rows $a\in \AA$ with $|\inner{M_a, f_{+}}| \ge \norm{f}_1 \cdot 2^{-r}$.
Similarly, there are at most $2^{-k'} \cdot |A|$ rows $a\in \AA$ with $|\inner{M_a, f_{-}}| \ge \norm{f}_1 \cdot 2^{-r}$.
Thus, for all but at most
$2\cdot 2^{-k'} \cdot |A|$ of the rows $a\in \AA$
we have
\[|\inner{M_a, f}| \le |\inner{M_a, f_{+}}| + |\inner{M_a, f_{-}}|   < 2\cdot \norm{f}_1 \cdot 2^{-r}\;.\qedhere
\]
\end{proof}

\subsubsection{Error vs. Min-Entropy}

\begin{lemma} \label{lemma:error-minent}
Let $M: \AA \times \XX \rightarrow \{-1,1\}$ be a matrix.
Let $k,\ell, r$ be such that any submatrix of $M$ of at least $2^{-k} \cdot |A|$ rows and at least
$2^{-\ell} \cdot |X|$ columns, has a bias of at most $2^{-r}$.

Then, there exist
$k' = k + \Omega(r)$,  $\; \ell' = \ell+ \Omega(r)$, and $ r' = \Omega(r)$,
such  that:
any submatrix of $M$ of at least $2^{-k'} \cdot |A|$ rows and at least
$2^{-\ell'} \cdot |X|$ columns, has a bias of at most $2^{-r'}$.
\end{lemma}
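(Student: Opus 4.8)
The plan is to deduce the statement from a one‑dimensional ``padding'' inequality, applied once to the rows of $M$ and once to its columns. Write $N_k = 2^{-k}|A|$, $N_\ell = 2^{-\ell}|X|$ and similarly $N_{k'},N_{\ell'}$, and note that $N_{k'}\le N_k$ and $N_{\ell'}\le N_\ell$ since we will take $k'\ge k$, $\ell'\ge\ell$. The auxiliary claim is: if $h:A\to\Reals$ satisfies $\bigl|\sum_{a\in T}h(a)\bigr|\le B$ for every $T\subseteq A$ with $|T|=N$ (and $2N\le|A|$), then $\bigl|\sum_{a\in A'}h(a)\bigr|\le 2B$ for every $A'$ with $|A'|\le N$. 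To prove it I would assume, after replacing $h$ by $-h$, that $\sum_{A'}h\ge 0$ and that $A'$ consists of the $|A'|$ largest values of $h$, where $h(1)\ge h(2)\ge\cdots$; let $j^*$ be the largest index with $h(j^*)\ge 0$ (if there is none the sum is negative and we are done). If $j^*\ge N$ then $\sum_{A'}h\le\sum_{i\le N}h(i)\le B$. Otherwise $\sum_{A'}h\le\sum_{i\le j^*}h(i)=\sum_{i\le N}h(i)-\sum_{j^*<i\le N}h(i)\le B+\bigl|\sum_{j^*<i\le N}h(i)\bigr|$, and since all entries of rank $>j^*$ are negative, $\bigl|\sum_{j^*<i\le N}h(i)\bigr|\le\bigl|\sum_{j^*<i\le j^*+N}h(i)\bigr|\le B$ (the middle quantity is a sum over $N$ consecutive ranks, all at most $|A|$ since $j^*+N<2N\le|A|$), giving the bound $2B$.

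Next I would apply this to the rows. Fix any column set $S$ with $|S|\ge N_\ell$ and set $h(a)=\sum_{x\in S}M(a,x)$, so that $\sum_{a\in T}h(a)=\sum_{(a,x)\in T\times S}M(a,x)$; the hypothesis, applied to rectangles $T\times S$ with $|T|=N_k$, gives $\bigl|\sum_{a\in T}h(a)\bigr|\le 2^{-r}|T||S|$. Hence the auxiliary claim yields, for every $A'$ with $N_{k'}\le|A'|\le N_k$, that $\bias(A'\times S)\le 2\cdot 2^{-r}N_k/|A'|\le 2^{1-r+(k'-k)}$; for $|A'|>N_k$ the same bound follows directly from the hypothesis. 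So for all $|A'|\ge N_{k'}$ and all $|S|\ge N_\ell$ we get $\bias(A'\times S)\le\epsilon_1:=2^{1-r+(k'-k)}$.

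I would then repeat the argument in the column direction. Fix $A'$ with $|A'|\ge N_{k'}$ and put $h'(x)=\sum_{a\in A'}M(a,x)$; by the previous step $\bigl|\sum_{x\in S}h'(x)\bigr|\le\epsilon_1|A'||S|$ for every $S$ with $|S|=N_\ell$, so the auxiliary claim gives $\bias(A'\times X')\le 2\epsilon_1 N_\ell/|X'|$ for every $X'$ with $|X'|\le N_\ell$, that is, $\bias(A'\times X')\le 2^{1+(\ell'-\ell)}\epsilon_1=2^{2-r+(k'-k)+(\ell'-\ell)}$ once $|X'|\ge N_{\ell'}$ (and $|X'|>N_\ell$ is again immediate from the previous step with $S=X'$). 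Choosing $k'-k=\ell'-\ell=\lfloor r/6\rfloor$ makes this at most $2^{-(2r/3-2)}$, so $r':=2r/3-2$, $k'=k+\lfloor r/6\rfloor$, $\ell'=\ell+\lfloor r/6\rfloor$ satisfy the claim, all increments being $\Omega(r)$; throughout we may assume $k,\ell,r$ are large enough that $2N_k\le|A|$, $2N_\ell\le|X|$ and $r'>0$, since otherwise the conclusion is trivial.

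The main obstacle is the one‑dimensional auxiliary claim, and in particular handling the sign changes of $h$: a naive argument that pads a small row set up to size $N_k$ with \emph{arbitrary} extra rows fails, because the padding can have bias in the opposite direction and cancel the contribution we are trying to lower‑bound. The sorting argument circumvents this precisely by padding with the \emph{largest} (least negative) available entries, which is what keeps the error bounded by $2B$. Once that claim is in hand, the rest is bookkeeping of the two factor‑$2$ losses and the dilution factors $2^{k'-k}$ and $2^{\ell'-\ell}$ coming from the shrunken thresholds.
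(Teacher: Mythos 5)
Your proof is correct, and it reaches the same conclusion by a genuinely different (and more general) route than the paper.

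The paper's argument is a one-step contradiction by padding with a set whose bias is \emph{itself} controlled by the hypothesis. Concretely, fixing $k'=k+r/10$, it supposes some $T\times S$ with $2^{-k'}|A|\le|T|<2^{-k}|A|$ has bias $>2^{-r/2}$, picks an arbitrary $T'\subseteq A\setminus T$ of size exactly $2^{-k}|A|$, observes that by hypothesis $\mathrm{bias}(T'\times S)\le 2^{-r}$, and then shows the triangle inequality forces $\mathrm{bias}((T\cup T')\times S)>2^{-r}$, contradicting the hypothesis applied to $T\cup T'$. It then transposes to handle the columns. Your approach instead isolates a purely one-dimensional lemma: from $\bigl|\sum_T h\bigr|\le B$ for all sets $T$ of size exactly $N$, deduce $\bigl|\sum_{A'} h\bigr|\le 2B$ for all $A'$ of size at most $N$, via the sorting trick that pads with the least-negative entries. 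This lemma is strictly stronger than what the application needs (it only assumes the bound at one size, not at all sizes $\ge N$), which is also why you had to work harder: your closing remark that ``padding with arbitrary extra rows fails'' is overstated for the present lemma, because the padding set $T'$ has its bias bounded by the hypothesis (as the paper exploits), so the arbitrary-padding argument does in fact go through directly; the sorting is needed only under your weaker hypothesis. Both arguments yield $k'=k+\Omega(r)$, $\ell'=\ell+\Omega(r)$, $r'=\Omega(r)$, with your constants ($r/6$ shift, $r'\approx 2r/3$) slightly better than the paper's ($r/10$ shift, $r'=r/2$), though of course this makes no difference after the $\Omega$'s. Your version buys a cleaner reusable one-dimensional statement at the cost of extra bookkeeping; the paper's version is shorter because it uses the full strength of the bias hypothesis.
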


\begin{proof}
Assume without loss of generality that $k,\ell,r$ are larger than some sufficiently large absolute constant.

We will show that
there exists
$k' = k + \Omega(r)$,
such  that,
any submatrix of $M$ of at least $2^{-k'} \cdot |A|$ rows and at least
$2^{-\ell} \cdot |X|$ columns, has a bias of at most $2^{-\Omega(r)}$.
The proof of the lemma then follows by applying the same claim again on the transposed matrix.

Let $k' = k + \tfrac{r}{10}$. Assume for a contradiction that there exist $T \subseteq A$ of size at least $2^{-k'} \cdot |A|$ and $S \subseteq X$ of size at least
$2^{-\ell} \cdot |X|$, such that the bias of $T \times S$ is larger than, say,~$2^{-r/2}$.
By the assumption of the lemma, $|T| < 2^{-k} \cdot |A|$.

Let $T'$ be an arbitrary set of $2^{-k} \cdot |A|$ rows in $A \setminus T$.
By the assumption of the lemma, the bias of $T' \times S$ is at most $2^{-r}$.
Therefore, the bias of $(T' \cup T) \times S$ is at least
$$\tfrac{|T|}{|T' \cup T|} \cdot 2^{-r/2}  - \tfrac{|T'|}{|T' \cup T|} \cdot 2^{-r}
\geq  \tfrac{1}{2} \cdot  2^{-r/10} \cdot 2^{-r/2}  -  2^{-r} > 2^{-r}.$$

Thus, $(T' \cup T) \times S$ contradicts the assumption of the lemma.
\end{proof}
\subsubsection{$L_2$-Extractors and $L_\infty$-Extractors}

 We will show that $M$ being an $L_2$-Extractor is equivalent to $M$ being an $L_\infty$-Extractor (barring constants).

 \begin{lemma}\label{lem:L2toLinf}
 If a matrix $M:\AA\times \XX\rightarrow \{-1,1\}$ is a $(k,\ell)$-$L_2$-Extractor with error $2^{-r}$, then $M$ is also a $\left(k-\xi,2\ell \sim (\min\{r,\xi\}-1)\right)$-$L_\infty$-Extractor, $\forall 0<\xi<k$.
 \end{lemma}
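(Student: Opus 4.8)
The plan is to start from the $L_\infty$-Extractor side: fix distributions $p_x : \XX \to \Reals^+$ with min-entropy at least $\log|\XX| - 2\ell$ and $p_a : \AA \to \Reals^+$ with min-entropy at least $\log|\AA| - (k-\xi)$, and bound $\big|\sum_{a',x'} p_a(a') p_x(x') M(a',x')\big|$. Writing this quantity as $\Ex_{a' \in_R \AA}[|\AA| \cdot p_a(a') \cdot \inner{M_{a'}, g}]$ where $g(x') := |\XX| \cdot p_x(x')$ is the scaled density (so $\norm{g}_1 = 1$), the min-entropy bound on $p_x$ translates into $\norm{g}_\infty \le 2^{2\ell}$, and hence $\norm{g}_2 \le \sqrt{\norm{g}_\infty \cdot \norm{g}_1} \le 2^{\ell} = 2^{\ell} \cdot \norm{g}_1$. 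Thus $g$ is a legal test function for the $(k,\ell)$-$L_2$-Extractor property of $M$: there are at most $2^{-k}\cdot|\AA|$ rows $a'$ with $|\inner{M_{a'}, g}| \ge 2^{-r}$.

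Next I would split the expectation $\big|\sum_{a'} p_a(a') |\XX| \cdot p_x(x') M(a',x')\big| = \big|\Ex_{a' \in_R \AA}[|\AA| p_a(a') \inner{M_{a'},g}]\big|$ over the rows into the ``good'' rows (those with $|\inner{M_{a'},g}| < 2^{-r}$) and the at most $2^{-k}|\AA|$ ``bad'' rows. For the good rows, use $|\inner{M_{a'},g}| < 2^{-r}$ together with $\Ex_{a'}[|\AA| p_a(a')] = 1$ to bound their contribution by $2^{-r}$. For the bad rows, use the trivial bound $|\inner{M_{a'},g}| \le \norm{g}_1 = 1$ and the min-entropy bound on $p_a$, namely $|\AA| p_a(a') \le |\AA| \cdot 2^{-(\log|\AA|-(k-\xi))} = 2^{k-\xi}$; since there are at most $2^{-k}|\AA|$ such rows, the averaged contribution is at most $2^{-k} \cdot 2^{k-\xi} \cdot 1 = 2^{-\xi}$. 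Adding the two parts gives a bound of $2^{-r} + 2^{-\xi} \le 2 \cdot 2^{-\min\{r,\xi\}} = 2^{-(\min\{r,\xi\}-1)}$, which is exactly the claimed error for the $\left(k-\xi, 2\ell \sim (\min\{r,\xi\}-1)\right)$-$L_\infty$-Extractor.

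The only genuinely delicate point — and the step I expect to require the most care — is the translation of hypotheses between the two definitions: making sure the normalization is consistent ($p_x$ is a probability distribution summing to $1$, whereas the $L_2$-Extractor definition is stated for arbitrary non-negative $f$ with $\norm{f}_1$ measured against the \emph{uniform} distribution, so $\norm{g}_1 = 1$ for $g = |\XX| p_x$), and that a min-entropy-$(\log|\XX|-2\ell)$ distribution really does give $\norm{g}_2/\norm{g}_1 \le 2^{\ell}$ via the $\ell_\infty$–$\ell_1$ interpolation $\norm{g}_2^2 \le \norm{g}_\infty \norm{g}_1$. Everything else is the routine good-rows/bad-rows split above, so no further computation is needed for the outline.
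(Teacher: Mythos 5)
Your proof is correct and follows the same core calculation as the paper's: bound $\norm{g}_2/\norm{g}_1$ by $2^\ell$, invoke the $(k,\ell)$-$L_2$-Extractor property to identify at most $2^{-k}|\AA|$ ``bad'' rows, split the correlation sum into good rows (each contributing at most $2^{-r}$) and bad rows (where the min-entropy bound on $p_a$ caps each weight by $2^{k-\xi}/|\AA|$), and add $2^{-r} + 2^{-\xi} \le 2^{-(\min\{r,\xi\}-1)}$. The one place you genuinely diverge from the paper is in how you certify $\norm{g}_2/\norm{g}_1 \le 2^\ell$: the paper first reduces to uniform distributions over subsets, invoking the standard fact (attributed to Chor--Goldreich) that a min-entropy-$h$ distribution is a convex combination of flat distributions on sets of size $\ge 2^h$, and then computes the $\ell_2$/$\ell_1$ ratio for a flat distribution directly. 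You instead keep $p_x$ arbitrary and use the interpolation $\norm{g}_2^2 \le \norm{g}_\infty \norm{g}_1$, reading $\norm{g}_\infty \le 2^{2\ell}$ straight off the min-entropy condition. Your route is a bit more direct and avoids needing the flat-distribution decomposition altogether; the paper's route is arguably more in keeping with the standard extractor toolkit. Either way the arithmetic and the resulting parameters match the statement exactly.
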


 Taking $\xi=\frac{k}{2}$, we get that if $M$ is a $(k,\ell)$-$L_2$-Extractor with error $2^{-r}$, then $M$ is also a $\left(\Omega(k),\Omega(\ell) \sim (\Omega(\min\{r,k\}))\right)$-$L_\infty$-Extractor.

\begin{proof}
We pick a $\xi$ ($0<\xi<k$). To prove that $M$ is a $\left(k-\xi,2\ell \sim (\min\{r,\xi\}-1)\right)$-$L_\infty$-Extractor, it suffices to prove the statement of the $L_\infty$-Extractors for any two uniform distributions over subsets $A_1 \subseteq \AA$ and $X_1 \subseteq \XX$ of size at least $\frac{|\AA|}{2^{k-\xi}}$ and $\frac{|\XX|}{2^{2\ell}}$ respectively. This follows from the fact that any distribution with min-entropy at least $h$ can be written as a convex combination of uniform distributions on sets of size at least $2^h$~\cite{CG}.

For a distribution $p_x$, which is uniform over a subset $X_1 \subseteq \XX$ of size at least $\frac{|\XX|}{2^{2\ell}}$,
%
$$
\frac{\norm{p_x}_2}{\norm{p_x}_1}= \left(\frac{|\XX|}{|X_1|}\right)^{\frac{1}{2}}\le 2^{\ell}.
$$
Using the fact that $M$ is a $(k,\ell)$-$L_2$-Extractor with error $2^{-r}$, we know that there are at most $\frac{|\AA|}{2^k}$ rows $a$ with $|(M\cdot p_x)_a|\ge 2^{-r}$.
Using the fact that $p_a$ is a uniform distribution over a set $A_1$ of size at least $\frac{|\AA|}{2^{k-\xi}}$, we get
\begin{align*}
\left|\sum_{a' \in \AA}\sum_{x'\in \XX} {p_a(a') \cdot p_x(x') \cdot M(a',x')}\right|
&\le\frac{1}{|A_1|} \cdot \sum_{a'\in A_1} \left|(M\cdot p_x)_{a'}\right|\\
&\le\frac{1}{|A_1|} \cdot \left(\frac{|\AA|}{2^{k}} + |A_1|\cdot 2^{-r} \right) \le 2^{-\xi} + 2^{-r}
\end{align*}
This proves that $M$ is a $\left(k-\xi,2\ell \sim (\min\{r,\xi\}-1)\right)$-$L_\infty$-Extractor, $\forall 0<\xi<k$.
 \end{proof}

%

 \begin{lemma}\label{lemma:min-extractor}
 If a matrix $M:\AA\times \XX\rightarrow \{-1,1\}$ is a $\left(k,\ell \sim r\right)$-$L_\infty$-Extractor, then $M$ is also a $\left(k-1,\frac{\ell-\xi-1}{2}\right)$-$L_2$-Extractor with error $2^{-r} + 2^{-\xi + 1}$, $\forall 1\le\xi \le \ell-1$.
 \end{lemma}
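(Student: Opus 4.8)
The plan is to run the reduction of Lemma~\ref{lem:L2toLinf} in reverse, via a flattening (truncation) argument. Fix $\xi$ with $1\le \xi \le \ell-1$ and a non-negative $f:\XX\to\Reals$ that is not identically zero and satisfies $\norm{f}_2/\norm{f}_1 \le 2^{(\ell-\xi-1)/2}$; since both the $L_2$-extractor condition and its hypothesis are invariant under positive scaling, I would first rescale $f$ so that $\norm{f}_1=1$, hence $\norm{f}_2^2 \le 2^{\ell-\xi-1}$. Set the truncation level $\tau := 2^{\ell-1}$ and write $f = g+h$, where $g(x') = f(x')$ when $f(x')\le \tau$ and $g(x')=0$ otherwise (the bounded part), and $h = f-g$ (the heavy part). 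Using the pointwise bound $f(x') \le f(x')^2/\tau$ valid wherever $f(x')>\tau$, one obtains $\norm{h}_1 \le \norm{f}_2^2/\tau \le 2^{\ell-\xi-1}/2^{\ell-1} = 2^{-\xi}$, and therefore $\norm{g}_1 = 1-\norm{h}_1 \ge 1-2^{-\xi} \ge 1/2$, using $\xi\ge 1$.

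Next I would renormalize the bounded part into a distribution: define $p_x:\XX\to\Reals^{+}$ by $p_x(x') := g(x')/(|X|\cdot\norm{g}_1)$. Then $p_x$ is a genuine probability distribution on $\XX$, and since $0\le g(x')\le \tau = 2^{\ell-1}$ and $\norm{g}_1\ge 1/2$, we get $p_x(x') \le 2^{\ell}/|X|$, i.e. $p_x$ has min-entropy at least $\log|X|-\ell$. Now suppose, for contradiction, that more than $2^{-(k-1)}\cdot|A| = 2\cdot 2^{-k}\cdot|A|$ rows $a\in\AA$ have $|\inner{M_a,f}|/\norm{f}_1 = |\inner{M_a,f}| \ge 2^{-r}+2^{-\xi+1}$. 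For any such row $a$, since the entries of $M$ lie in $\{-1,1\}$ and $h\ge 0$ we have $|\inner{M_a,h}| \le \norm{h}_1 \le 2^{-\xi}$, so by the triangle inequality $|\inner{M_a,g}| \ge 2^{-r}+2^{-\xi+1}-2^{-\xi} = 2^{-r}+2^{-\xi}$, and hence $\bigl|\sum_{x'} M(a,x')\,p_x(x')\bigr| = |\inner{M_a,g}|/\norm{g}_1 \ge 2^{-r}+2^{-\xi}$ (using $\norm{g}_1\le 1$).

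To conclude I would kill the sign ambiguity by pigeonhole. Let $A_1$ be the set of these rows, so $|A_1| > 2\cdot 2^{-k}\cdot|A|$; each $a\in A_1$ has $\sum_{x'} M(a,x')\,p_x(x')$ either at least $2^{-r}+2^{-\xi}$ or at most $-(2^{-r}+2^{-\xi})$, so at least one of the two sign classes, call it $A_1'$, has $|A_1'| \ge |A_1|/2 > 2^{-k}\cdot|A|$. Let $p_a$ be uniform on $A_1'$; then $p_a$ has min-entropy at least $\log|A|-k$, and $\bigl|\sum_{a'}\sum_{x'} p_a(a')\,p_x(x')\,M(a',x')\bigr| = \bigl|\tfrac{1}{|A_1'|}\sum_{a'\in A_1'}\sum_{x'} M(a',x')\,p_x(x')\bigr| \ge 2^{-r}+2^{-\xi} > 2^{-r}$, contradicting the assumption that $M$ is a $(k,\ell\sim r)$-$L_\infty$-extractor. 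Hence at most $2^{-(k-1)}\cdot|A|$ rows $a$ satisfy $|\inner{M_a,f}|/\norm{f}_1 \ge 2^{-r}+2^{-\xi+1}$, which is exactly the statement that $M$ is a $(k-1,\tfrac{\ell-\xi-1}{2})$-$L_2$-extractor with error $2^{-r}+2^{-\xi+1}$.

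The main thing to get right is the choice of $\tau$: it must be large enough that the heavy tail $h$ carries at most $2^{-\xi}$ of the $\ell_1$-mass (which needs $\tau$ of order at least $\norm{f}_2^2\cdot 2^{\xi}$), yet small enough that the renormalized bounded part still has min-entropy at least $\log|X|-\ell$ (which needs $\tau$ of order at most $2^{\ell}$). The value $\tau=2^{\ell-1}$ threads this needle precisely because the hypothesis $\norm{f}_2/\norm{f}_1 \le 2^{(\ell-\xi-1)/2}$ gives $\norm{f}_2^2 \le 2^{\ell-\xi-1}$ after normalization; this squaring is also where the halving of $\ell$ (to $\tfrac{\ell-\xi-1}{2}$) and the additive $2^{-\xi+1}$ in the error come from, while the unit loss $k\mapsto k-1$ is the cost of the sign pigeonhole. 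The remaining manipulations are routine.
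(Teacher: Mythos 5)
Your proof is correct and takes essentially the same approach as the paper's: truncate the heavy part of $f$ to get a min-entropy distribution $p_x$ on $\XX$, pigeonhole on signs to get a min-entropy distribution $p_a$ on the bad rows, and contradict the $L_\infty$-extractor property. The only cosmetic difference is that you renormalize the bounded part and bound $|\inner{M_a,h}|$ directly by $\norm{h}_1$, whereas the paper pads the trimmed mass back in uniformly and argues via $\ell_1$-closeness of $p_x$ to $f_p$ — the same bookkeeping in different clothing.
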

 Taking $\xi=\frac{\ell}{2}$, we get that if $M$ is a $\left(k,\ell \sim r\right)$-$L_\infty$-Extractor, then $M$ is also a $(\Omega(k),\Omega(\ell))$-$L_2$-Extractor with error $2^{-\Omega(\min\{r,\ell\})}$.

 In this proof, we use the following notation. For two non-negative functions $P,Q:X~\rightarrow~\Reals$, we denote by $\mathrm{dist}(P,Q)$ the $\ell_1$-distance between the two functions, that is
\[\mathrm{dist}(P,Q)=\sum_{x\in X}|P(x)-Q(x)|\;.
\]
Note that $\mathrm{dist}(P,Q) = \norm{P-Q}_1 \cdot |X|$.

\begin{proof}
We want to prove that for any $1 \le \xi \le \ell-1$, and any non-negative function $f:\XX\rightarrow \Reals$ with $\frac{\norm{f}_2}{\norm{f}_1}\le 2^{\frac{\ell-\xi-1}{2}}$, there are at most $2\cdot 2^{-k}\cdot |\AA|$ rows $a\in \AA$ with $\frac{|\inner{M_a,f}|}{\norm{f}_1}\ge 2^{-r}+2^{-\xi+1}$.

\medskip
Let's assume that there exists a non-negative function $f:\XX\rightarrow \Reals$ for which the last statement is not true. Let $f_p$ be a probability distribution on $\XX$ defined by
$f_p(x)=\frac{f(x)}{\sum_{x}f(x)}=\frac{f(x)}{|\XX| \cdot \norm{f}_1}$.
Then,
$$
\norm{f_p}_2=\frac{\norm{f}_2}{|\XX|\cdot \norm{f}_1}\le \frac{2^{\frac{\ell-\xi-1}{2}}}{|\XX|}
$$
$$
\implies \left(\frac{\sum_xf_p(x)^2}{|\XX|}\right)^{\frac{1}{2}}\le \frac{2^{\frac{\ell-\xi-1}{2}}}{|\XX|}
$$
$$
\implies \sum_xf_p(x)^2 \le 2^{\ell-\xi-1-\log(|\XX|)}
$$
Thus, there is strictly less than $2^{-\xi}$ probability mass on elements $x$ with $f_p(x)> 2^{\ell-\log(|\XX|)-1}$. Let
$\bar{f_p}:\XX\rightarrow \Reals$
 be the trimmed function that takes values $f_p(x)$ at $x$ when $f_p(x)\le 2^{\ell-\log(|\XX|)-1}$ and 0 otherwise.
 We define a new probability distribution $p_x:\XX\rightarrow [0,1]$ as
 $$
 p_x(x')=\bar{f_p}(x')+\frac{1-\sum_{x'}\bar{f_p}(x')}{|\XX|}.
 $$
Informally, we are just redistributing the probability mass removed from $f_p$. It is easy to see that the new probability distribution $p_x$ has min-entropy at least $\log(|\XX|)-\ell$, 
 and
\begin{equation}
\label{eq:trimming}
 \mathrm{dist}(p_x,f_p)< 2^{-\xi+1}
 \end{equation}
as $\mathrm{dist}(p_x,f_p)\le \mathrm{dist}(p_x,\bar{f_p}) + \mathrm{dist}(\bar{f_p},f_p)< 2^{-\xi}+2^{-\xi}$.

Let $A_{\text{bad}}$ be the set of rows $a\in \AA$ with
$\frac{|\inner{M_a,f}|}{\norm{f}_1}=|(M\cdot f_p)_a|\ge 2^{-r}+2^{-\xi+1}$.
By our assumption, $|A_{\text{bad}}|\ge 2\cdot 2^{-k}|\AA|$. Let $A_1$ and $A_2$ be the set of rows $a$ with $(M\cdot f_p)_a\ge 2^{-r}+2^{-\xi+1}$ and $(M\cdot f_p)_a\le -(2^{-r}+2^{-\xi+1})$ respectively. As $A_{\text{bad}}=A_1\cup A_2$, w.l.o.g. $|A_1|\ge |A_{\text{bad}}|/2\ge 2^{-k}|\AA|$ (else we can work with $A_2$ and the rest of the argument follows similarly). Let $p_a$ be a uniform probability distribution over the set $A_1$. Clearly $p_a$ has min-entropy at least $\log(|\AA|)-k$.

As $(M\cdot f_p)_a\ge 2^{-r}+2^{-\xi+1}$ for the entire support of $p_a$, we get
\begin{equation}\label{eq:1}
	\left|\Ex_{a\in_R A_1}[ (M \cdot f_p)_a]\right| \ge 2^{-r}+2^{-\xi+1}.
\end{equation}
As the entries of $M$ have magnitude at most 1, 
we have
\begin{equation}\label{eq:2}
\left|\Ex_{a\in_R A_1}\left[ (M \cdot (p_x-f_p))_a\right]\right| \le
\Ex_{a\in_R A_1}\left[ \sum_{x'\in \XX}{|p_x(x')-f_p(x')|}\right] = \mathrm{dist}(p_x,f_p)\;.	
\end{equation}
Combining Equations~\eqref{eq:trimming}, \eqref{eq:1} and~\eqref{eq:2} together gives
$$
\left|\Ex_{a\in_R A_1}[ (M \cdot p_x)_a]\right| \;\ge\;
2^{-r}+2^{-\xi+1}-\mathrm{dist}(p_x,f_p) \;>\; 2^{-r}
$$
Thus, we have two distributions $p_a$ and $p_x$ with min-entropy at least $\log(|\AA|)-k$ and $\log(|\XX|)-\ell$ respectively  contradicting the fact that $M$ is a $\left(k,\ell\sim r\right)$-$L_\infty$-Extractor.
Hence no such $f$ exists and $M$ is a $(k-1,\frac{\ell-\xi-1}{2})$-$L_2$-Extractor with error $2^{-r}+2^{-\xi+1}$.
\end{proof}

\subsubsection{Transpose}
\begin{lemma}\label{lemma:transpose}
If a matrix $M:\AA\times \XX\rightarrow \{-1,1\}$ is a $(k,\ell)$-$L_2$-Extractor with error $2^{-r}$, then the transposed matrix $M^t$ is  an $(\Omega(\ell),\Omega(k))$-$L_2$-Extractor with error $2^{-\Omega(\min\{r,k\})}$.
\end{lemma}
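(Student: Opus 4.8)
The plan is to pass through the $L_\infty$-Extractor notion of Definition~\ref{definition:min-extractor}, which is manifestly symmetric under transposition, so that the statement reduces to a short composition of lemmas already proved. First I would apply Lemma~\ref{lem:L2toLinf} with $\xi = k/2$: since $M$ is a $(k,\ell)$-$L_2$-Extractor with error $2^{-r}$, it is also a $\bigl(\Omega(k),\,\Omega(\ell)\sim\Omega(\min\{r,k\})\bigr)$-$L_\infty$-Extractor, exactly the consequence recorded just after Lemma~\ref{lem:L2toLinf}.

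Next I would use that Definition~\ref{definition:min-extractor} treats its two coordinates symmetrically. Unwinding the definition, ``$M$ is a $(k'',\ell''\sim r'')$-$L_\infty$-Extractor'' asserts that for every distribution $p_a$ on $\AA$ of min-entropy at least $\log|\AA|-k''$ and every distribution $p_x$ on $\XX$ of min-entropy at least $\log|\XX|-\ell''$, one has $\bigl|\sum_{a'\in\AA}\sum_{x'\in\XX} p_a(a')\,p_x(x')\,M(a',x')\bigr|\le 2^{-r''}$. This bilinear form is literally unchanged if we regard $M$ as $M^t:\XX\times\AA\to\{-1,1\}$ and swap the names of $p_a$ and $p_x$; hence $M^t$ is a $(\ell'',k''\sim r'')$-$L_\infty$-Extractor. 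Applying this to the output of the first step, $M^t$ is a $\bigl(\Omega(\ell),\,\Omega(k)\sim\Omega(\min\{r,k\})\bigr)$-$L_\infty$-Extractor.

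Finally I would convert back to an $L_2$-Extractor via Lemma~\ref{lemma:min-extractor} (with $\xi$ taken to be half of the second $L_\infty$-parameter of $M^t$, i.e.\ $\xi = \Theta(k)$, as in the sentence following that lemma). This yields that $M^t$ is a $\bigl(\Omega(\ell),\,\Omega(k)\bigr)$-$L_2$-Extractor with error $2^{-\Omega\bigl(\min\{\,\Omega(\min\{r,k\}),\,\Omega(k)\,\}\bigr)} = 2^{-\Omega(\min\{r,k\})}$ (using $\min\{r,k\}\le k$), which is exactly the claimed bound. There is essentially no hard step here: the only things to watch are that each of the $\Omega(\cdot)$ parameters stays above the ``sufficiently large'' thresholds required to invoke the cited lemmas, and the (immediate, but worth stating) transpose symmetry of Definition~\ref{definition:min-extractor}. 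So the main obstacle is merely careful bookkeeping of constants through the two conversions, rather than any genuinely new estimate.
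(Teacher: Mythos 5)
Your proposal is correct and follows essentially the same route as the paper's own proof: pass to the $L_\infty$-Extractor notion via Lemma~\ref{lem:L2toLinf}, use the transpose symmetry of Definition~\ref{definition:min-extractor}, and convert back via Lemma~\ref{lemma:min-extractor}. The bookkeeping of parameters (in particular that $\min\{\Omega(\min\{r,k\}),\Omega(k)\} = \Omega(\min\{r,k\})$) is also handled correctly.
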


\begin{proof}
	As $M$ is a $(k,\ell)$-$L_2$-Extractor with error $2^{-r}$, using Lemma \ref{lem:L2toLinf}, $M$ is also a $\left(\Omega(k),\Omega(\ell) \sim (\Omega(\min\{r,k\}))\right)$-$L_\infty$-Extractor.
The definition of $L_\infty$-Extractor is symmetric in its rows and columns and hence,
$M^t$ is also a $\left(\Omega(\ell),\Omega(k) \sim (\Omega(\min\{r,k\}))\right)$-$L_\infty$-Extractor.
Now, using Lemma \ref{lemma:min-extractor} on $M^t$, we get that
$M^t$ is also a $(\Omega(\ell),\Omega(k))$-$L_2$-Extractor with error $2^{-\Omega(\min\{r,k\})}$.
\end{proof}

\subsubsection{Lower Bounds for Almost Orthogonal Vectors}
In this section, we show that a matrix $M : \AA \times \XX \to \{-1,1\}$ whose rows are almost orthogonal is a good $L_2$-extractor.
A similar technique was used in many previous works (see for example~\cite{GS,CG,Alon95,Raz05}).
Motivated by the applications (e.g., learning sparse parities and learning from low-degree equations) in which some pairs of rows are not almost orthogonal, we relax this notion and only require that almost all pairs of rows are almost orthogonal.
We formalize this in the definition of $(\eps,\delta)$-almost orthogonal vectors.
\begin{definition}\label{def:eps-delta} {\bf $(\eps,\delta)$-almost orthogonal vectors:}
Vectors $v_1, \ldots, v_m \in \{-1,1\}^{X}$ are {\sf $(\eps, \delta)$-almost orthogonal} if for any $i \in [m]$ there are at most $\delta \cdot m$ indices $j\in [m]$ with $\left|\inner{v_i,v_j}\right| > \eps$.
\end{definition}
Definition~\ref{def:eps-delta} generalizes the definition of an $(\eps,\delta)$-biased set from~\cite{KRT}.
\begin{definition}\label{def:eps-delta-T}
{\bf $(\eps,\delta)$-biased set (\cite{KRT}):}
	A set $T \subseteq \{0,1\}^{n}$ is {\sf $(\eps, \delta)$-biased} if there are at most $\delta \cdot 2^n$ elements $a\in \{0,1\}^n$ with
	$\left|\Ex_{x\in_R T}[(-1)^{a \cdot x}]\right|> \eps$,
(where $a \cdot x $ denotes the inner product of $a$ and $x$, modulo 2).
\end{definition}
Definition~\ref{def:eps-delta-T} is a special case of Definition~\ref{def:eps-delta}, where the vectors corresponding to a set $T \subseteq \{0,1\}^n$ are defined as follows. With every  $a \in \{0,1\}^n$, we associate the vector $v_a$ of length $|T|$, whose $x$-th entry equals $(-1)^{a \cdot x}$ for any $x\in T$.
Indeed, $T$ is $(\eps,\delta)$-biased iff the vectors $\{v_a:a \in \{0,1\}^n\}$ are $(\eps,\delta)$-almost orthogonal.

\begin{lemma}[Generalized Johnson's Bound]\label{lemma:Johnson}
Let
$M \in \{-1,1\}^{A \times X}$ be a matrix.
Assume that $\{M_a\}_{a\in A}$ are $(\eps,\delta)$-almost orthogonal vectors.
Then, for any $\gamma > \sqrt{\eps}$ and any non-negative function $f:X \to \mathbb{R}^{+}$, we have at most $(\tfrac{\delta}{\gamma^2-\eps}) \cdot |A|$ rows $a \in A$ with \begin{equation*}
|\inner{M_a,f}|  \ge \gamma \cdot  \|f\|_2. \end{equation*}

In particular, fixing $\gamma = \sqrt{\eps + \delta^{1/2}}$, we have that $M$ is a $(k,\ell)$-$L_2$-extractor with error~$2^{-r}$,
for $k = \frac12 \log(1/\delta)$, and $\ell = r = \Omega\big(\min\{\log(1/\eps), \log(1/\delta)\}\big)$.
\end{lemma}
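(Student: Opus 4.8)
The plan is to run the classical second-moment (Cauchy--Schwarz) argument behind the Johnson bound, adapted to the $\ell_1$-versus-$\ell_2$ normalization used in the definition of $L_2$-extractors. We may assume $\norm{f}_2 > 0$, since otherwise $f \equiv 0$ and there is nothing to prove. Let $B \subseteq A$ be the set of rows $a$ with $|\inner{M_a,f}| \ge \gamma\norm{f}_2$, and for each $a \in B$ let $\sigma_a \in \{-1,1\}$ be the sign of $\inner{M_a,f}$. Consider the function $w := \sum_{a \in B}\sigma_a M_a : X \to \Reals$. Then
\[
\inner{w,f} = \sum_{a \in B}\sigma_a\inner{M_a,f} = \sum_{a \in B}|\inner{M_a,f}| \ge |B|\cdot\gamma\norm{f}_2,
\]
so by Cauchy--Schwarz $\norm{w}_2\norm{f}_2 \ge \inner{w,f} \ge |B|\cdot\gamma\norm{f}_2$, and hence $\norm{w}_2 \ge |B|\cdot\gamma$.

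Next I would upper bound $\norm{w}_2^2$. Since the entries of $M$ are $\pm1$, each row has $\norm{M_a}_2 = 1$, so $|\inner{M_a,M_{a'}}| \le 1$ for all $a,a'$. Expanding,
\[
\norm{w}_2^2 = \sum_{a,a'\in B}\sigma_a\sigma_{a'}\inner{M_a,M_{a'}} \le \sum_{a\in B}\Big(\sum_{a'\in B}|\inner{M_a,M_{a'}}|\Big).
\]
For each fixed $a \in B$, the definition of $(\eps,\delta)$-almost orthogonal vectors (applied with $m = |A|$) gives at most $\delta|A|$ indices $a'$ with $|\inner{M_a,M_{a'}}| > \eps$, which contribute at most $\delta|A|$ in total, while the remaining indices of $B$ contribute at most $|B|\cdot\eps$. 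Summing over $a \in B$ gives $\norm{w}_2^2 \le |B|\big(\delta|A| + |B|\eps\big)$. Combining with $\norm{w}_2^2 \ge |B|^2\gamma^2$ yields $|B|^2(\gamma^2-\eps) \le |B|\cdot\delta|A|$; since $\gamma^2 > \eps$ this gives $|B| \le \tfrac{\delta}{\gamma^2-\eps}\cdot|A|$, which is the first claim.

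For the ``in particular'' clause, set $\gamma^2 = \eps + \delta^{1/2}$, so $\gamma^2 - \eps = \delta^{1/2}$ and at most $\delta^{1/2}|A| = 2^{-k}|A|$ rows satisfy $|\inner{M_a,f}| \ge \gamma\norm{f}_2$, with $k = \tfrac12\log(1/\delta)$. To read off the $L_2$-extractor property, observe that any non-negative $f$ with $\tfrac{\norm{f}_2}{\norm{f}_1} \le 2^{\ell}$ and any row $a$ with $\tfrac{|\inner{M_a,f}|}{\norm{f}_1} \ge 2^{-r}$ satisfy $|\inner{M_a,f}| \ge 2^{-r}\norm{f}_1 \ge 2^{-r-\ell}\norm{f}_2$; hence if $\ell + r \le \log_2(1/\gamma) = \tfrac12\log_2\!\big(1/(\eps+\delta^{1/2})\big)$, then every such row lies in $B$, so there are at most $2^{-k}|A|$ of them. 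Since $\eps + \delta^{1/2} \le 2\max\{\eps,\delta^{1/2}\}$, this is satisfied by some $\ell = r = \Omega\big(\min\{\log(1/\eps),\log(1/\delta)\}\big)$, which proves the remaining claim.

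I do not expect a genuine obstacle here: the skeleton is classical and the $(\eps,\delta)$-relaxation changes nothing structural. The points that need a little care are organizing the double sum per-row so that almost-orthogonality is invoked with the correct value $m = |A|$ (with only the up-to-$\delta|A|$ near-collisions per row costing $1$ and the rest costing $\eps$), handling the degenerate case $\norm{f}_2 = 0$, and bookkeeping the constants in the final parameter translation so that $\ell,r$ come out as honest $\Omega(\cdot)$ quantities.
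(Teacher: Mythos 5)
Your proof is correct and takes essentially the same approach as the paper: define the signed sum $w=\sum_{a\in B}\sigma_a M_a$ (the paper splits $B$ into $I_+\cup I_-$ and takes $z=\sum_{I_+}M_i-\sum_{I_-}M_i$, which is the same object), apply Cauchy--Schwarz to $\inner{w,f}$, bound $\norm{w}_2^2$ by treating the at most $\delta|A|$ near-collisions per row separately, and then translate parameters into the $L_2$-extractor form. The minor additional care you took about the degenerate case $\norm{f}_2=0$ and the explicit reduction $\ell+r\le\log_2(1/\gamma)$ match the paper's choice $\ell=r=\tfrac12\log(1/\gamma)$.
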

\begin{proof}
Fix $\gamma>\sqrt{\eps}$.
	Let $I_{+}$ (respectively, $I_{-}$) be the rows in $A$ with high correlation (respectively, anti-correlation) with $f$. More precisely:
\begin{align*}
I_{+} &:= \{i \in A: \;\; \inner{M_i,f} > \gamma \cdot \|f\|_2\}\;,\\
I_{-} &:= \{i \in A: \;\; -\inner{M_i,f} > \gamma \cdot \|f\|_2\}\;.	
\end{align*}
	Let $I = I_{+} \cup I_{-}$.
	Define $z = \sum_{i\in I_{+}}{M_i} - \sum_{i\in I_{-}}{M_i}$.
	We consider the inner product of $f$~and~$z$.
	We have
	\begin{align*}
	(|I| \cdot \gamma \cdot \|f\|_2)^2 < \inner{f,z}^2
	&= \Bigg(\Ex_{x \in_R X}\bigg[ f(x) \cdot \Big(\sum_{i\in I_{+}}{M_{i,x}}- \sum_{i\in I_{-}}{M_{i,x}}\Big)\bigg]\Bigg)^2\\
	&\le \Ex_{x\in_R X}\Big[ f(x)^2\Big]  \cdot \Ex_{x \in_R X}\Bigg[\Big(\sum_{i\in I_{+}} M_{i,x} -\sum_{i\in I_{-}} M_{i,x}\Big)^2\Bigg]\tag{Cauchy-Schwarz}\\
	&\le \|f\|_2^2 \cdot \sum_{i\in I} \sum_{i' \in I}{|\inner{M_{i},M_{i'}}|}.	
	\end{align*}
	For any fixed $i\in I$, we break the inner-sum $\sum_{i' \in I}{\left|\inner{M_{i},M_{i'}}\right|}$ according to whether or not $\left|\inner{M_i, M_{i'}}\right| > \eps $. By the assumption on $M$, there are at most $\delta\cdot |A|$ rows $i'$ for which the inner-product is larger than $\eps$. For these rows, the inner-product is at most $1$. Thus, we get
	\begin{align*}(|I| \cdot \gamma \cdot \|f\|_2)^2  &<\|f\|_2^2  \cdot \sum_{i \in I} \sum_{i' \in I}{|\inner{M_i,M_{i'}}|}\le \|f\|_2^2  \cdot |I|\cdot ( |A|\cdot \delta + \eps \cdot|I|).
	\end{align*}
That is,
\begin{align*}|I| \cdot \gamma^2  <  |A|\cdot \delta + \eps \cdot|I|.
	\end{align*}
	Rearranging gives
	\[|I| <  \left(\frac{\delta}{\gamma^2 - \eps}\right) \cdot |A|,\]
	which completes the first part of the proof.
		
	We turn to the in particular part.
Assume that $\frac{\norm{f}_2}{\norm{f}_1} \le 2^{\ell}$.
Thus, we proved that
there are at most  $\left(\frac{\delta}{\gamma^2 - \eps}\right) \cdot |A|$ rows  $a\in A$, such that,
\begin{equation*}
|\inner{M_a,f}|  \ge \gamma \cdot 2^{\ell} \cdot \|f\|_1.
\end{equation*}

Fixing $\gamma = \sqrt{\eps + \delta^{1/2}}$,
$k = \log(1/\delta^{1/2})$, and $\ell = r = \frac12 \log(1/\gamma)$,
we get that $M$ is a $(k,\ell)$-$L_2$-extractor with error $2^{-r}$
(Definition~\ref{definition:l2-extractor}).
Finally, note that $\ell = r = \Omega\big(\min\{\log(1/\delta), \log(1/\eps)\}\big)$, which completes the proof.
\end{proof}

\subsection{Learning  Sparse Parities}
\label{sec:sparse-parities}
As an application of Lemma~\ref{lemma:Johnson} and Theorem~\ref{thm:TM-main1}, we reprove the main result in \cite{KRT}.
\begin{lemma}\label{lemma:sparse}
Let $T\subseteq \{0,1\}^n$ be an $(\eps,\delta)$-biased set,
with $\eps \ge \delta$.
Define the matrix $M : \{0,1\}^n \times T \rightarrow \{-1, 1\}$
by $M(a,x) = (-1)^{a \cdot x}$.
Then, the learning task associated with~$M$ {\sf (``parity learning over $T$'')} requires either at least $\Omega(\log(1/\eps)\cdot \log(1/\delta))$ memory bits or at least $\poly(1/\eps)$ samples.
\end{lemma}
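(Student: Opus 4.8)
The plan is to recognize $M$ as an $L_2$-extractor with good parameters via Lemma~\ref{lemma:Johnson}, and then to invoke Theorem~\ref{thm:TM-main1}; there is essentially nothing else to do, so the only point demanding care is the parameter bookkeeping. First I would set up the dictionary: in the learning problem attached to $M\colon\{0,1\}^n\times T\to\{-1,1\}$ the concept class is $\XX=T$ and the sample set is $\AA=\{0,1\}^n$, and for each $a\in\{0,1\}^n$ the row $M_a\colon T\to\{-1,1\}$ is exactly the vector $v_a$ of length $|T|$ associated with $T$ in the discussion following Definition~\ref{def:eps-delta-T}, i.e.\ $v_a$ has $x$-th coordinate $(-1)^{a\cdot x}$. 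As noted there, $T$ being $(\eps,\delta)$-biased is the same as the family $\{M_a\}_{a\in\{0,1\}^n}$ being $(\eps,\delta)$-almost orthogonal in the sense of Definition~\ref{def:eps-delta}. Hence the hypothesis of the ``in particular'' clause of Lemma~\ref{lemma:Johnson} applies, and it yields that $M$ is a $(k',\ell')$-$L_2$-extractor with error $2^{-r'}$ for $k'=\tfrac12\log(1/\delta)$ and $\ell'=r'=\Omega\big(\min\{\log(1/\eps),\log(1/\delta)\}\big)$. This is the first place the hypothesis $\eps\ge\delta$ is used: it gives $\log(1/\eps)\le\log(1/\delta)$, so $\ell'=r'=\Omega(\log(1/\eps))$ while $k'=\Omega(\log(1/\delta))$.

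Next I would verify the side conditions of Theorem~\ref{thm:TM-main1} for this extractor. We may assume $\log(1/\eps)$ and $\log(1/\delta)$ — hence $k',\ell',r'$ — exceed the constant the theorem requires, since otherwise $\log(1/\eps)\cdot\log(1/\delta)=O(1)$ and $\poly(1/\eps)=O(1)$ and there is nothing to prove. The remaining requirement is $\ell'\le\log_2|T|$, and here $\eps\ge\delta$ is used a second time, via a short Parseval estimate: writing $\widehat{\mathbf{1}_T}(a)=\tfrac{|T|}{2^n}\cdot\bias(a)$, the identity $\sum_a\widehat{\mathbf{1}_T}(a)^2=|T|/2^n$ together with $(\eps,\delta)$-biasedness forces $\delta+\eps^2\ge 1/|T|$, whence (using $\eps\ge\delta$ and $\eps\le 1$) $\eps\ge 1/(2|T|)$, i.e.\ $\log(1/\eps)=O(\log|T|)$. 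Thus $\ell'=\Theta(\log(1/\eps))=O(\log_2|T|)$; if the implicit constant is inconvenient one simply replaces $\ell'$ by $\min\{\ell',\log_2|T|\}$, which is still $\Omega(\log(1/\eps))$ and is legitimate since shrinking the $\ell$-parameter of an $L_2$-extractor is always allowed, and it affects neither $k'$ nor the effective $r$ of Theorem~\ref{thm:TM-main1} (all three terms defining $r$ there remain $\Omega(\log(1/\eps))$).

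Finally, Theorem~\ref{thm:TM-main1} (with a suitable constant $c<\tfrac23$ and the corresponding $\gamma$) gives that every branching program for the learning problem of $M$ whose length is at most $2^{\Omega(\log(1/\eps))}=\poly(1/\eps)$ and whose width is at most $2^{\Omega(\log(1/\eps)\cdot\log(1/\delta))}$ has vanishingly small success probability. Equivalently, any branching program that learns parity over $T$ with non-negligible probability must either run for more than $\poly(1/\eps)$ steps, i.e.\ use more than $\poly(1/\eps)$ samples, or have width larger than $2^{\Omega(\log(1/\eps)\cdot\log(1/\delta))}$, i.e.\ use at least $\Omega(\log(1/\eps)\cdot\log(1/\delta))$ bits of memory — which is the assertion of the lemma. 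Thus the proof is a composition of the two cited results, and the only step requiring any work is the parameter bookkeeping of the previous paragraph — confirming that the triple $(k',\ell',r')$ produced by Lemma~\ref{lemma:Johnson} is simultaneously in the ``sufficiently large'' regime and satisfies $\ell'\le n$ — which the triviality observation and the Parseval bound dispatch; the substantive content lives entirely in Lemma~\ref{lemma:Johnson} and Theorem~\ref{thm:TM-main1}.
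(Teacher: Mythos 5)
Your proposal is correct and follows essentially the same route as the paper: recognize the rows as $(\eps,\delta)$-almost orthogonal, apply Lemma~\ref{lemma:Johnson} to get a $(k,\ell)$-$L_2$-extractor with $k=\Omega(\log(1/\delta))$ and $\ell=r=\Omega(\log(1/\eps))$ (using $\eps\ge\delta$), then invoke Theorem~\ref{thm:TM-main1}. The only difference is that you explicitly verify the side condition $\ell'\le\log_2|T|$ via a Parseval estimate showing $\eps\ge 1/(2|T|)$, a point the paper leaves implicit; your verification is correct and is a worthwhile addition to the argument.
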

\begin{proof}
The rows $\{M_a\}_{a\in \{0,1\}^n}$ are $(\eps,\delta)$-almost orthogonal vectors.
Thus, by  Lemma~\ref{lemma:Johnson}, we get that $M$
is a $(k,\ell)$-$L_2$-extractor with
error~$2^{-r}$, for $k = \Omega(\log(1/\delta))$ and $r = \ell = \Omega(\log(1/\eps))$ (assuming $\eps \ge \delta$).
By  Theorem~\ref{thm:TM-main1}, we get the required memory-samples lower bound.
\end{proof}

\begin{lemma}[\cite{KRT}]\label{lemma:KRT}
There exists a (sufficiently small) constant $c>0$ such that the following holds.
Let $T_\ell = \{x \in \{0,1\}^n : \sum_{i}{x_i} = \ell\}$.
For any $\eps > (8\ell/n)^{\ell/2}$, $T_\ell$ is an $(\eps,\delta)$-biased set for $\delta = 2\cdot e^{-\eps^{2/\ell} \cdot n/8}$.
In particular, $T_\ell$ is an $(\eps,\delta)$-biased set for
\begin{enumerate}
	\item $\eps = 2^{-c \ell}$, $\delta = 2^{-c n}$, assuming $\ell \le cn$.
	\item $\eps = \ell^{-c \ell}$, $\delta = 2^{-c n/ \ell^{0.01}}$, assuming $\ell \le n^{0.9}$.
\end{enumerate}
\end{lemma}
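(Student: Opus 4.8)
The plan is to reduce the statement to a quantitative bound on the biases of the characters over the Hamming sphere $T_\ell$, and then turn that bound into a count of how many characters can be biased.

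First I would record that for $a\in\{0,1\}^n$ the quantity $\bias(a):=\Ex_{x\in_R T_\ell}[(-1)^{a\cdot x}]$ depends only on the weight $w=|a|$, and that $\bias(a)=(-1)^\ell\bias(a')$ whenever $|a'|=n-w$, so it suffices to control $w\le n/2$. Realizing $x$ uniform on $T_\ell$ as a uniformly random $\ell$-subset $S\subseteq[n]$, we have $a\cdot x=|S\cap\supp(a)|\bmod 2$, so $\bias(a)=\Ex[(-1)^Y]$ with $Y$ hypergeometric (drawing $\ell$ balls without replacement from $n$, of which $w$ are marked); equivalently, since $\binom n\ell=[z^\ell](1+z)^n$,
\[
\bias(a)=\frac{K_\ell(w)}{\binom n\ell},\qquad K_\ell(w):=[z^\ell](1-z)^w(1+z)^{n-w}
\]
(the Krawtchouk polynomial).

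The technical heart is a saddle-point estimate on $K_\ell(w)$. Writing $K_\ell(w)=\tfrac1{2\pi i}\oint_{|z|=r}(1-z)^w(1+z)^{n-w}z^{-\ell-1}\,dz$ and bounding $|(1-z)^w(1+z)^{n-w}|$ on $|z|=r$ by its maximum over the circle — which, after optimizing the argument, equals $(1+r^2)^{n/2}(2w/n)^{w/2}(2(n-w)/n)^{(n-w)/2}$ when the optimizing argument is interior, and equals the boundary value $(1-r)^w(1+r)^{n-w}$ when $w$ is sufficiently far from $n/2$ — and then optimizing $r$ (the natural choice being $r^2=\ell/(n-\ell)$ in the first case), one gets two bounds. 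For $w$ within $\sqrt{\ell n}$ of $n/2$, using $1-H(\tfrac12-t)\le 2t^2/\ln 2$ in the exponent, $|\bias(w)|\le (8\ell/n)^{\ell/2}$ — this is exactly where the hypothesis $\eps>(8\ell/n)^{\ell/2}$ is needed. For $w$ farther from $n/2$, $|\bias(w)|\le \bigl(C\cdot|n-2w|/n\bigr)^{\ell}$ for an absolute constant $C$ (morally the binomial value $(1-2w/n)^\ell$, up to the constant). Here one also uses $\binom n\ell\ge(n/\ell)^\ell$. Making $C$ small enough that it is compatible with the ``$8$'' in the stated $\delta$ is the one genuinely delicate point, and I expect it to be the main obstacle.

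Given this estimate the counting is routine. If $|\bias(a)|>\eps$ and $\eps>(8\ell/n)^{\ell/2}$, then $a$ lies outside the central window, so $\bigl(C\,|n-2w|/n\bigr)^\ell>\eps$, i.e.\ $\bigl|\,|a|-\tfrac n2\,\bigr|>\tfrac n{2C}\eps^{1/\ell}$. Hence the number of such $a$ is at most
\[
2\binom{n}{\le(\frac12-\frac{\eps^{1/\ell}}{2C})n}\le 2\cdot 2^{\,n\,H(\frac12-\frac{\eps^{1/\ell}}{2C})}\le 2\cdot 2^{n}\cdot 2^{-\Omega(n\eps^{2/\ell})},
\]
using $1-H(\tfrac12-t)=\Omega(t^2)$; choosing the constants so the last exponent is at least $\eps^{2/\ell}n/(8\ln 2)$ makes this at most $2\,e^{-\eps^{2/\ell}n/8}\cdot 2^n=\delta\cdot 2^n$, which is the defining property of an $(\eps,\delta)$-biased set. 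Finally, the two ``in particular'' statements follow by substitution: with $\eps=2^{-c\ell}$ we get $\eps^{2/\ell}=2^{-2c}=\Omega(1)$, so $\delta=2e^{-\Omega(n)}\le 2^{-cn}$ for small $c$, and $\eps>(8\ell/n)^{\ell/2}$ reads $2^{-2c}>8\ell/n$, which holds under $\ell\le cn$; with $\eps=\ell^{-c\ell}$ we get $\eps^{2/\ell}=\ell^{-2c}\ge\ell^{-0.01}$ for $c\le 0.005$, so $\delta=2e^{-\ell^{-2c}n/8}\le 2^{-cn/\ell^{0.01}}$, and $\eps>(8\ell/n)^{\ell/2}$ reads $n>8\ell^{1+2c}$, which holds under $\ell\le n^{0.9}$ for $n$ large. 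Taking $c$ below all the implied thresholds completes the proof; everything after the Krawtchouk bound is bookkeeping.
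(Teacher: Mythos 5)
The paper does not prove this lemma; it cites~\cite{KRT} and uses it as a black box, so there is no in-paper argument to compare against. Evaluating your plan on its own merits: the reduction to the normalized Krawtchouk polynomial $K_\ell(w)/\binom{n}{\ell}$ is the right starting point, the weight-complementation symmetry is correct, and the final counting step (bound the number of rows by $2\binom{n}{\le(\frac12 - \Theta(\eps^{1/\ell}))n}$ and use $1-H(\frac12-t)\ge 2t^2/\ln 2$) is carried out correctly, as are the two ``in particular'' substitutions. This is essentially the route one would expect for a statement of this type.

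The genuine gap is the pair of Krawtchouk estimates, which are asserted rather than proven. Two concrete points. First, the proposed tail bound $|\bias(w)|\le(C\cdot|n-2w|/n)^{\ell}$ ``for small $C$'' cannot hold as you hope: already at $w=1$ the bias equals $1-2\ell/n$, which (for $n$ large compared with $\ell$) forces any universal constant to satisfy $C\ge 1$. So your stated worry of ``making $C$ small'' is misdirected. What is actually needed, given the factor $8$ in $\delta=2e^{-\eps^{2/\ell}n/8}$, is $C\le 2$: with $t=\eps^{1/\ell}/(2C)$ one gets $2^{nH(1/2-t)}\le 2^n e^{-n\eps^{2/\ell}/(2C^2)}$, and $\frac{1}{2C^2}\ge\frac18$ iff $C\le 2$. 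So there is room, but the claim you need to prove is a specific inequality with $C\in[1,2]$, and this must be established via a real saddle-point or product-of-roots argument rather than asserted. Second, the central-window bound $(8\ell/n)^{\ell/2}$ for $|w-n/2|\lesssim\sqrt{\ell n}$ and the precise width of that window are also unverified (e.g.\ at $\ell=2$, $w=n/2$ the bias is $-1/(n-1)$ versus the claimed bound $16/n$, which is consistent but hardly a proof of the general case); also the inequality $1-H(\tfrac12-t)\le 2t^2/\ln 2$ you invoke in that step goes the wrong way --- the correct Taylor bound is $1-H(\tfrac12-t)\ge 2t^2/\ln 2$. In short, the architecture of the proof is correct and very plausibly salvageable, but both estimates at the heart of the argument are missing and one of the stated intermediate claims is phrased in a way that cannot be true as written.
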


Let $c>0$ be the constant mentioned in Lemma~\ref{lemma:KRT}. The following lemma complements Lemma~\ref{lemma:KRT} to the range of parameters $cn \le \ell \le n/2$. It shows that $T_{\ell}$ is $(2^{-\Omega(n)}, 2^{-\Omega(n)})$-biased in this case. The proof is a simple application of Parseval's identity (see \cite{KRT}).
\begin{lemma}[\protect{\cite[Lemma~4.1]{KRT}}]
\label{lemma:KRT2}
Let $T \subseteq \{0,1\}^n$ be any set. Then, $T$ is an $(\eps,\delta)$-biased set for $\delta = \frac{1}{|T|\cdot \eps^2}$. In particular, $T$ is $(|T|^{-1/3}, |T|^{-1/3})$-biased.
\end{lemma}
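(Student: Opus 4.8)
The plan is to apply Parseval's identity to the normalized indicator function of $T$. Define $f : \{0,1\}^n \to \Reals^{+}$ by $f(x) = \frac{2^n}{|T|} \cdot \mathbf{1}[x \in T]$, so that $\Ex_{x \in_R \{0,1\}^n}[f(x)] = 1$. For $a \in \{0,1\}^n$, its Fourier coefficient is $\hat{f}(a) = \Ex_{x}[f(x) \cdot (-1)^{a \cdot x}] = \Ex_{x \in_R T}[(-1)^{a \cdot x}]$, which is precisely the quantity whose magnitude appears in the definition of an $(\eps,\delta)$-biased set (Definition~\ref{def:eps-delta-T}).

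First I would compute $\Ex_x[f(x)^2] = \frac{1}{2^n}\sum_{x\in T}\left(\frac{2^n}{|T|}\right)^2 = \frac{2^n}{|T|}$ directly from the definition of $f$. Parseval's identity then gives $\sum_{a \in \{0,1\}^n} \hat{f}(a)^2 = \Ex_x[f(x)^2] = \frac{2^n}{|T|}$. Next, since the summands $\hat{f}(a)^2$ are non-negative, the number of $a$ with $|\hat{f}(a)| > \eps$ is at most $\eps^{-2} \sum_a \hat{f}(a)^2 = \frac{2^n}{|T|\,\eps^2}$; dividing by $2^n$, this is a $\frac{1}{|T|\eps^2} = \delta$ fraction of all $a \in \{0,1\}^n$, which is exactly the assertion that $T$ is $(\eps,\delta)$-biased.

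For the "in particular" part, substitute $\eps = |T|^{-1/3}$ into $\delta = \frac{1}{|T|\,\eps^2}$ to obtain $\delta = \frac{1}{|T|\cdot |T|^{-2/3}} = |T|^{-1/3}$, so $T$ is $(|T|^{-1/3}, |T|^{-1/3})$-biased, as claimed.

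There is essentially no genuine obstacle here — the statement is a one-line consequence of Parseval together with an averaging argument. The only points requiring care are purely bookkeeping: using the expectation-normalized Fourier transform (so that Parseval reads $\sum_a \hat f(a)^2 = \Ex_x[f(x)^2]$ with no extra factor of $2^n$), and converting the raw count of "heavy" coefficients into a fraction of $2^n$ so that it matches the form of Definition~\ref{def:eps-delta-T}.
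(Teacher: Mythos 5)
Your proof is correct and follows exactly the approach the paper indicates: applying Parseval's identity to the normalized indicator of $T$ and then counting the heavy Fourier coefficients. The paper does not reproduce the argument (it defers to \cite{KRT}, noting only that it is a simple application of Parseval), and your write-up is a faithful rendering of that one-line argument with the bookkeeping done correctly.
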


We get the following as an immediate corollary.
\begin{corollary}
	Let $T_\ell = \{x \in \{0,1\}^n : \sum_{i}{x_i} = \ell\}$.
\begin{enumerate}
	\item Assuming $\ell \le  n/2$, parity learning over $T_\ell$ requires either at least $\Omega(n \cdot \ell)$ memory bits or at least $2^{\Omega(\ell)}$ samples.
	\item Assuming $\ell \le n^{0.9}$,  parity learning over $T_\ell$ requires either at least $\Omega(n \cdot \ell^{0.99})$ memory bits or at least $\ell^{\Omega(\ell)}$ samples.
\end{enumerate}
\end{corollary}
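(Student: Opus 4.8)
The plan is to read the corollary off from the three preceding lemmas: Lemma~\ref{lemma:sparse} turns an $(\eps,\delta)$-biased set into a memory/samples lower bound of the shape ``$\Omega(\log(1/\eps)\cdot\log(1/\delta))$ memory or $\poly(1/\eps)$ samples'', while Lemma~\ref{lemma:KRT} and Lemma~\ref{lemma:KRT2} supply the bias parameters of $T_\ell$ in the relevant ranges. So the whole argument is a substitution of the right $(\eps,\delta)$ followed by simplification of exponents; there is no new idea, only bookkeeping.

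For part~(1) I would split $\{\ell \le n/2\}$ at the threshold $\ell = cn$, where $c$ is the constant of Lemma~\ref{lemma:KRT}. If $\ell \le cn$, then Lemma~\ref{lemma:KRT}(1) gives that $T_\ell$ is $(\eps,\delta)$-biased with $\eps = 2^{-c\ell}$ and $\delta = 2^{-cn}$; since $\ell \le n$ this satisfies $\eps \ge \delta$, so Lemma~\ref{lemma:sparse} applies and yields memory $\Omega(\log(1/\eps)\cdot\log(1/\delta)) = \Omega(c\ell\cdot cn) = \Omega(n\ell)$ or samples $\poly(1/\eps) = 2^{\Omega(\ell)}$. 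If $cn \le \ell \le n/2$, I would instead invoke Lemma~\ref{lemma:KRT2}: here $|T_\ell| = \binom{n}{\ell} = 2^{\Theta(n)}$, so $T_\ell$ is $(2^{-\Omega(n)},2^{-\Omega(n)})$-biased (with $\eps = \delta$, so $\eps \ge \delta$ is automatic), and Lemma~\ref{lemma:sparse} then gives memory $\Omega(n^2) \ge \Omega(n\ell)$ or samples $2^{\Omega(n)} \ge 2^{\Omega(\ell)}$ (using $\ell \le n$). The two sub-cases together cover all $\ell \le n/2$, which proves part~(1).

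For part~(2) I would use Lemma~\ref{lemma:KRT}(2): for $\ell \le n^{0.9}$, $T_\ell$ is $(\eps,\delta)$-biased with $\eps = \ell^{-c\ell}$ and $\delta = 2^{-cn/\ell^{0.01}}$. One first checks the side condition $\eps \ge \delta$, i.e.\ $c\ell\log\ell \le cn/\ell^{0.01}$, equivalently $\ell^{1.01}\log\ell \le n$; this holds since $\ell \le n^{0.9}$ forces $\ell^{1.01}\log\ell = O(n^{0.909}\log n) = o(n)$. Then Lemma~\ref{lemma:sparse} yields memory $\Omega(\log(1/\eps)\cdot\log(1/\delta)) = \Omega\!\big(\ell\log\ell\cdot n/\ell^{0.01}\big) = \Omega(n\ell^{0.99})$ or samples $\poly(1/\eps) = \ell^{\Omega(\ell)}$, which is exactly part~(2).

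The obstacles are entirely routine: verifying $\eps \ge \delta$ in each regime (required to invoke Lemma~\ref{lemma:sparse}), checking that the ``sufficiently large'' hypotheses of Theorem~\ref{thm:TM-main1} hidden inside Lemma~\ref{lemma:sparse} are met once $n$ — and hence $\log(1/\eps),\log(1/\delta)$ — is large enough, and confirming that $\binom{n}{\ell} = 2^{\Theta(n)}$ throughout $cn \le \ell \le n/2$ so that Lemma~\ref{lemma:KRT2} really produces bias $2^{-\Omega(n)}$. None of these requires work beyond tracking how the exponents of $\ell$ and $n$ flow through the product $\log(1/\eps)\cdot\log(1/\delta)$ and through $\poly(1/\eps)$.
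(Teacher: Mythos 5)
Your proof is correct and follows exactly the route the paper intends: the paper states the corollary as ``immediate'' from Lemmas~\ref{lemma:sparse}, \ref{lemma:KRT} and~\ref{lemma:KRT2}, and your write-up simply makes the implicit bookkeeping explicit, including the case split at $\ell = cn$ for part~(1) (invoking Lemma~\ref{lemma:KRT2} via $\binom{n}{\ell}=2^{\Theta(n)}$ in the upper range) and the verification of $\eps \ge \delta$ in each regime. Nothing is missing.
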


\subsection{Learning from Sparse Linear Equations}
\label{sec:sparse-equations}
Lemma~\ref{lemma:transpose} and the proof of
Lemma~\ref{lemma:sparse} gives the following immediate corollary.


\begin{lemma}\label{lemma:sparse-transpose}
Let $T\subseteq \{0,1\}^n$ be an $(\eps,\delta)$-biased set, with $\eps \ge \delta$.
Then, the matrix $M : T \times \{0,1\}^n \to \{-1, 1\}$,
defined by $M(a,x) = (-1)^{a \cdot x}$ is a $(k,\ell)$-$L_2$-extractor with error $2^{-r}$, for $\ell = \Omega(\log(1/\delta))$ and $k = r = \Omega(\log(1/\eps))$.

Thus, the learning task associated with $M$ {\sf(``learning from equations in $T$'')} requires either at least
$\Omega(\log(1/\eps)\cdot \log(1/\delta))$ memory bits or
at least $\poly(1/\eps)$ samples.
\end{lemma}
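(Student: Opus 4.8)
The plan is to obtain both conclusions from results already in hand, essentially by transposing the matrix analyzed in Lemma~\ref{lemma:sparse}. First I would set up the objects: given the $(\eps,\delta)$-biased set $T\subseteq\{0,1\}^n$ with $\eps\ge\delta$, consider the matrix $M':\{0,1\}^n\times T\to\{-1,1\}$ defined by $M'(a,x)=(-1)^{a\cdot x}$ — this is exactly the matrix from the proof of Lemma~\ref{lemma:sparse}. The rows of $M'$ are $(\eps,\delta)$-almost orthogonal, so by Lemma~\ref{lemma:Johnson} (the Generalized Johnson's Bound) $M'$ is a $(k,\ell)$-$L_2$-extractor with error $2^{-r}$ for $k=\Omega(\log(1/\delta))$ and $r=\ell=\Omega(\log(1/\eps))$, where we used $\eps\ge\delta$ so that $\log(1/\eps)\le\log(1/\delta)$ and hence $\min\{\log(1/\eps),\log(1/\delta)\}=\log(1/\eps)$. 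Now observe that the matrix $M:T\times\{0,1\}^n\to\{-1,1\}$ in the statement is precisely the transpose $M=(M')^t$, since $M(a,x)=(-1)^{a\cdot x}=M'(x,a)$.

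Next I would apply Lemma~\ref{lemma:transpose} to $M'$: since $M'$ is a $(k,\ell)$-$L_2$-extractor with error $2^{-r}$ where $k=\Omega(\log(1/\delta))$ and $\ell=\Omega(\log(1/\eps))$, the transposed matrix $(M')^t=M$ is an $(\Omega(\ell),\Omega(k))$-$L_2$-extractor with error $2^{-\Omega(\min\{r,k\})}$. Plugging in the parameters, and noting $\min\{r,k\}=\min\{\Omega(\log(1/\eps)),\Omega(\log(1/\delta))\}=\Omega(\log(1/\eps))$ again by $\eps\ge\delta$, this says $M$ is a $(k'',\ell'')$-$L_2$-extractor with error $2^{-r''}$ for $k''=\Omega(\log(1/\eps))$, $\ell''=\Omega(\log(1/\delta))$, and $r''=\Omega(\log(1/\eps))$. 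Relabeling $k\leftarrow k''$, $\ell\leftarrow\ell''$, $r\leftarrow r''$ matches the claimed assertion $\ell=\Omega(\log(1/\delta))$, $k=r=\Omega(\log(1/\eps))$.

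Finally, for the learning lower bound, I would feed these parameters into Theorem~\ref{thm:TM-main1}. Since the extractor parameters $k',\ell',r'$ are all $\Omega(\log(1/\eps))$ or larger (and can be assumed sufficiently large whenever $1/\eps$ is large enough — the regime where the statement is non-vacuous), fixing a valid constant $c$ and corresponding $\gamma$ gives that any branching program of width at most $2^{\Omega(\log(1/\eps)\cdot\log(1/\delta))}$ and length at most $2^{\Omega(\log(1/\eps))}=\poly(1/\eps)$ has vanishing success probability. Equivalently, any successful learner needs either $\Omega(\log(1/\eps)\cdot\log(1/\delta))$ memory bits or at least $\poly(1/\eps)$ samples, which is the desired conclusion.

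The only mildly delicate point — and the main thing to get right rather than a genuine obstacle — is bookkeeping the $\min$'s through the two reductions: each of Lemma~\ref{lemma:transpose} and (implicitly) Lemma~\ref{lemma:Johnson} introduces a $\min\{\cdot,\cdot\}$, and one must use the hypothesis $\eps\ge\delta$ consistently so that these minima always collapse to the $\log(1/\eps)$ term; otherwise the error parameter $r$ would only be $\Omega(\log(1/\delta))$ in one place or the other and the sample bound $\poly(1/\eps)$ would not follow. Everything else is a direct citation of the stated lemmas and theorem, so no new ideas are needed.
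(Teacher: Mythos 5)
Your proposal is correct and takes essentially the same route the paper does: it invokes Lemma~\ref{lemma:Johnson} exactly as in Lemma~\ref{lemma:sparse} to make the untransposed matrix an $L_2$-extractor, then applies Lemma~\ref{lemma:transpose} and finally Theorem~\ref{thm:TM-main1}, tracking the minima with $\eps\ge\delta$ along the way. The paper states this as an immediate corollary of Lemma~\ref{lemma:transpose} together with the proof of Lemma~\ref{lemma:sparse}, which is precisely the chain you spelled out.
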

We get the following as an immediate corollary of Lemmas~\ref{lemma:KRT}, \ref{lemma:KRT2} and~\ref{lemma:sparse-transpose}.
\begin{corollary}
	Let $T_\ell = \{x \in \{0,1\}^n : \sum_{i}{x_i} = \ell\}$.

\begin{enumerate}
\item	Assuming $\ell \le  n/2$, learning from equations in $T_\ell$ requires either at least $\Omega(n \cdot \ell)$ memory bits or at least $2^{\Omega(\ell)}$  samples.
\item Assuming $\ell \le n^{0.9}$, learning from equations in $T_\ell$ requires either at least $\Omega(n \cdot \ell^{0.99})$ memory bits or at least $\ell^{\Omega(\ell)}$  samples.
\end{enumerate}
\end{corollary}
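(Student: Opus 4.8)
The plan is to feed the known bias bounds for $T_\ell$ into Lemma~\ref{lemma:sparse-transpose} and simplify. Recall $|T_\ell| = \binom{n}{\ell}$, and that Lemma~\ref{lemma:sparse-transpose} requires an $(\eps,\delta)$-biased set with $\eps \ge \delta$ and then outputs a lower bound of $\Omega\big(\log(1/\eps)\cdot\log(1/\delta)\big)$ memory bits or $\poly(1/\eps)$ samples; so in each parameter regime it suffices to produce admissible $\eps,\delta$ and read off $\log(1/\eps)$ and $\log(1/\delta)$.

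For part~(1) I would split the range $\ell \le n/2$ at the constant $c$ of Lemma~\ref{lemma:KRT}. In the sub-range $\ell \le cn$, Lemma~\ref{lemma:KRT} gives that $T_\ell$ is $(\eps,\delta)$-biased with $\eps = 2^{-c\ell}$ and $\delta = 2^{-cn}$; since $\ell \le n$ we have $\eps \ge \delta$, so Lemma~\ref{lemma:sparse-transpose} yields $\Omega(c\ell\cdot cn) = \Omega(n\ell)$ memory or $\poly(2^{c\ell}) = 2^{\Omega(\ell)}$ samples. In the complementary sub-range $cn \le \ell \le n/2$, I would instead invoke Lemma~\ref{lemma:KRT2}, which makes $T_\ell$ an $\big(|T_\ell|^{-1/3}, |T_\ell|^{-1/3}\big)$-biased set (so $\eps = \delta$ automatically); here $\binom{n}{\ell} = 2^{\Omega(n)}$, hence $\log(1/\eps) = \log(1/\delta) = \Omega(n)$ and, since $\ell = \Theta(n)$ in this range, Lemma~\ref{lemma:sparse-transpose} gives $\Omega(n^2) = \Omega(n\ell)$ memory or $2^{\Omega(n)} = 2^{\Omega(\ell)}$ samples. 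The two sub-ranges together cover all $\ell \le n/2$.

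For part~(2), with $\ell \le n^{0.9}$, I would use the second bullet of Lemma~\ref{lemma:KRT}: $T_\ell$ is $(\eps,\delta)$-biased with $\eps = \ell^{-c\ell}$ and $\delta = 2^{-cn/\ell^{0.01}}$, so that $\log(1/\eps) = c\ell\log\ell$ and $\log(1/\delta) = cn/\ell^{0.01}$. Checking $\eps \ge \delta$ amounts to $\ell^{1.01}\log\ell = O(n)$, which holds for large $n$ since $\ell \le n^{0.9}$; then Lemma~\ref{lemma:sparse-transpose} produces $\Omega\big(\ell\log\ell \cdot n/\ell^{0.01}\big) = \Omega(n\ell^{0.99})$ memory or $\poly(\ell^{c\ell}) = \ell^{\Omega(\ell)}$ samples.

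None of this presents a genuine obstacle — the statement is essentially bookkeeping over the cited lemmas. The only points needing attention are: verifying $\eps \ge \delta$ in each application (immediate from $\ell \le n$ and $\ell \le n^{0.9}$ respectively); making sure the ``sufficiently large'' hypothesis inherited from Theorem~\ref{thm:TM-main1} via Lemma~\ref{lemma:Johnson} is met, which reduces to $\ell$ and $n$ exceeding an absolute constant; and correctly matching the sub-ranges in part~(1) so that the interval $cn \le \ell \le n/2$, where Lemma~\ref{lemma:KRT} does not apply, is handled by Lemma~\ref{lemma:KRT2}.
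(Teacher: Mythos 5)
Your proposal is correct and follows exactly the intended route: feed the $(\eps,\delta)$-bias bounds from Lemma~\ref{lemma:KRT} (for $\ell \le cn$ and for $\ell \le n^{0.9}$) and Lemma~\ref{lemma:KRT2} (to cover the gap $cn \le \ell \le n/2$, as the paper flags just before stating that lemma) into Lemma~\ref{lemma:sparse-transpose}, verifying $\eps \ge \delta$ in each case. This is precisely the bookkeeping the paper calls an ``immediate corollary,'' so there is nothing further to add.
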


\subsection{Learning from Low Degree Equations}
\label{sec:low-deg-equations}
In the following, we consider multilinear polynomials in $\F_2[x_1, \ldots, x_n]$ of degree at most $d$. We denote by $P_d$ the linear space of all such polynomials.
We denote the bias of  a polynomial $p \in \F_2[x_1, \ldots, x_n]$  by $$\bias(p) := \Ex_{x\in \F_2^n}[(-1)^{p(x)}].$$

We rely on the following result of Ben-Eliezer, Hod and Lovett \cite{BEHL}, showing that random low-degree polynomials have very small bias with very high probability.
\begin{lemma}[\protect{\cite[Lemma~2]{BEHL}}]
Let $d \le 0.99 \cdot n$. Then,
\[
\Pr_{p\in_R P_d}[|\bias(p)| > 2^{-c_1 \cdot n/d}] \le 2^{-c_2 \cdot \binom{n}{\le d}}
\]
where $0<c_1,c_2<1$ are absolute constants.
\end{lemma}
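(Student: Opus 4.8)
This is a theorem of Ben-Eliezer, Hod and Lovett that the paper merely quotes; what follows is how I would reconstruct it. My plan is to use the \emph{moment method} on the random variable $\bias(p)$. First, for any positive integer $t$, Markov's inequality gives
\[
\Pr_{p\in_R P_d}\!\left[\,|\bias(p)|>2^{-c_1 n/d}\,\right]\le 2^{2c_1 nt/d}\cdot \Ex_{p\in_R P_d}\!\left[\bias(p)^{2t}\right],
\]
so I would reduce everything to an upper bound on the even moments of $\bias(p)$. Writing $p=\sum_{|S|\le d}c_S m_S$ with the $c_S\in\F_2$ independent uniform (one coefficient per multilinear monomial $m_S$ of degree at most $d$) and expanding $\bias(p)^{2t}=\Ex_{x_1,\dots,x_{2t}\in\F_2^n}\big[(-1)^{\sum_i p(x_i)}\big]$, the average over $p$ factors over the monomials and is $1$ exactly when $\sum_{i=1}^{2t}m_S(x_i)\equiv 0\pmod 2$ for every $|S|\le d$, and $0$ otherwise. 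Hence
\[
\Ex_{p\in_R P_d}\!\left[\bias(p)^{2t}\right]=\Pr_{x_1,\dots,x_{2t}\in_R\F_2^n}\!\left[\ \forall\,|S|\le d:\ \textstyle\sum_{i=1}^{2t}m_S(x_i)\equiv 0\!\!\!\pmod 2\ \right],
\]
the probability that the random $2t$-tuple is ``degree-$\le d$ null''.

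Next I would prove the core estimate: in the relevant range of $t$, this probability is governed by the \emph{perfectly matched} tuples (those splitting into $t$ pairs of equal points, of which there are $(2t-1)!!$, each of probability $2^{-tn}$), so that
\[
\Ex_{p\in_R P_d}\!\left[\bias(p)^{2t}\right]\le (2t-1)!!\cdot 2^{-tn}\cdot 2^{o(tn)}.
\]
The point is that unmatched degree-$\le d$ null tuples form a lower-order fraction: for $t=2$ and $d\ge2$ one checks by hand that the only $4$-tuples with $\sum_i x_i=0$ and $\sum_i x_i x_i^{\top}=0$ over $\F_2$ are the matched ones (on a genuine affine plane some degree-$2$ monomial has odd coefficient sum), and in general I would reveal the points one at a time --- the degree-$1$ constraints already force $\sum_i x_i=0$, and each point not matched to an earlier one is pinned down up to a factor $2^{-n+o(n)}$ by the higher-degree constraints --- or, equivalently, induct on $d$ using the product decomposition $p(u,y)=\sum_{|\alpha|\le d}u^{\alpha}q_\alpha(y)$ on $\F_2^a\times\F_2^{n-a}$ into independent random polynomials $q_\alpha$ of smaller degree.

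Granting this, I would finish by optimizing $t$: choosing $t=\Theta\big(\binom{n}{\le d}/n\big)$ makes $\log(2t)=(1-\Omega(1))n$ (as long as $d$ is not too close to $n$), so $\Ex_p[\bias(p)^{2t}]\le 2^{-(1-o(1))tn}$, and combining with the Markov factor $2^{2c_1 nt/d}$ and taking $c_1$ a small enough absolute constant yields
\[
\Pr_{p\in_R P_d}\!\left[\,|\bias(p)|>2^{-c_1 n/d}\,\right]\le 2^{-\Omega(tn)}=2^{-\Omega(\binom{n}{\le d})},
\]
with $c_2$ the implied constant. (The case $d=1$ is trivial, since $\bias(p)\in\{0,\pm1\}$ and is nonzero only for the two constant polynomials. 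A quicker route for small $d$ is a Doob-martingale/Azuma argument: revealing the $c_S$ in decreasing order of $|S|$, the bounded difference of $|\bias(p)|$ at coordinate $c_S$ is at most $2^{-(n+|S|)/2}$, so $\sum_S D_S^2\le 2^{-n}(3/2)^n$ and $\Pr[|\bias(p)|>\lambda]\le \exp(-\Omega(\lambda^2(4/3)^n))$, which already beats $2^{-\Omega(\binom{n}{\le d})}$ when $d$ is at most a small constant fraction of $n$.)

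The main obstacle is twofold: the core combinatorial estimate above (the naive ``the last point is forced'' bound only gives $2^{-n}$ regardless of $t$, so one must genuinely argue that almost all $2t$ points are forced up to the matching structure), and the regime where $d$ is a large constant fraction of $n$ --- there the moment method leaves the ``matched-dominated'' range of $t$ and the Azuma factor $(4/3)^n$ is too weak, so one needs a finer analysis of the weight distribution of binary Reed--Muller codewords. For that regime I would simply invoke \cite{BEHL}.
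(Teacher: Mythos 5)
The paper does not prove this lemma; it is imported verbatim as an external result from Ben-Eliezer, Hod and Lovett~\cite{BEHL}, so there is no internal proof to compare against. Your moment-method reconstruction is a plausible sketch: the reduction of $\Ex_p[\bias(p)^{2t}]$ to the probability that a random $2t$-tuple is ``degree-$\le d$ null'' is exactly right, and your hand verification for $t=2$, $d\ge 2$ (that a genuine affine plane violates some degree-$2$ parity constraint) checks out. However, there is a genuine gap exactly where you flag it: the claim that unmatched null $2t$-tuples form a lower-order fraction, giving $\Ex_p[\bias(p)^{2t}]\le (2t-1)!!\cdot 2^{-tn}\cdot 2^{o(tn)}$ in the relevant range of $t$, \emph{is} the content of the lemma, and neither the ``reveal the points one at a time'' heuristic nor the product decomposition $p(u,y)=\sum_{|\alpha|\le d}u^{\alpha}q_\alpha(y)$ is carried far enough to establish it. You also explicitly punt the large-$d$ regime back to~\cite{BEHL}, which is precisely the regime the paper needs (Corollaries~\ref{cor:low-deg-equations} and~\ref{cor:low-deg} use $d$ up to $0.99n$).

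The Azuma/Doob fallback is also not right as stated: flipping a single coefficient $c_S$ flips $(-1)^{p(x)}$ on a $2^{-|S|}$ fraction of the cube, so the worst-case bounded difference of $\bias(p)$ in coordinate $c_S$ is $2^{1-|S|}$, not $2^{-(n+|S|)/2}$ (e.g.\ starting from $p\equiv 0$ and flipping a degree-$1$ coefficient drops $|\bias|$ from $1$ to $0$). With the correct bound, $\sum_S D_S^2 \le 4\sum_{k\le d}\binom{n}{k}4^{-k}\le 4(5/4)^n$ \emph{grows} with $n$ rather than decaying, so the McDiarmid bound you quote is vacuous for all $d\ge 1$. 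In short: the plan is reasonable and you honestly label it a sketch, but it does not amount to a proof, and the paper itself makes no attempt at one --- it simply cites \cite{BEHL}.
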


\begin{corollary}
\label{cor:low-deg-equations}
Let $d, n \in \N$, with $d\le 0.99 \cdot n$.
Let $M: P_d \times \F_2^n \to \{-1,1\}$ be the matrix defined by $M(p,x) = (-1)^{p(x)}$ for any $p\in P_d$ and $x \in \F_2^n$.
Then, the vectors $\{M_p: p \in P_d\}$ are
$(\eps,\delta)$-almost orthogonal, for
$\eps = 2^{-c_1 n/d}$ and
$\delta = 2^{-c_2\binom{n}{\le d}}$,
(where $0<c_1,c_2<1$ are absolute constants).
In particular, $M$ is a $(k,\ell)$-$L_2$-extractor with error $2^{-r}$, for
$k = \Omega\big(\binom{n}{\le d}\big)$ and
$r = \ell = \Omega(n/d)$.

Thus,
the learning task associated with $M$
{\sf (``learning from degree-$d$ equations'')}
requires either at least
$\Omega\left( \binom{n}{\le d} \cdot n/d \right) \ge \Omega((n/d)^{d+1})$
memory bits or at least
$2^{\Omega(n/d)}$
samples.
\end{corollary}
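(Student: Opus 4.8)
The plan is to derive the $(\eps,\delta)$-almost orthogonality of the rows of $M$ from the Ben-Eliezer--Hod--Lovett bound, then pipe it through the Generalized Johnson's Bound (Lemma~\ref{lemma:Johnson}) and finally through Theorem~\ref{thm:TM-main1}. Nearly all of the substance is already packaged in those three results, so the argument is essentially a parameter chase.

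First I would record the identity $\inner{M_p,M_q} = \Ex_{x \in \F_2^n}\left[(-1)^{p(x)}(-1)^{q(x)}\right] = \Ex_{x}\left[(-1)^{(p+q)(x)}\right] = \bias(p+q)$, valid for all $p,q \in P_d$ because $P_d$ is an $\F_2$-linear space and hence $p+q \in P_d$. Fixing $p$ and letting $q$ range uniformly over $P_d$, the polynomial $p+q$ is also uniform over $P_d$; so by the Ben-Eliezer--Hod--Lovett lemma (applicable since $d \le 0.99 n$), the number of $q \in P_d$ with $\left|\inner{M_p,M_q}\right| = |\bias(p+q)| > 2^{-c_1 n/d}$ is at most $2^{-c_2\binom{n}{\le d}} \cdot |P_d|$. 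Since $\dim_{\F_2} P_d = \binom{n}{\le d}$, this is exactly the assertion that $\{M_p\}_{p\in P_d}$ are $(\eps,\delta)$-almost orthogonal with $\eps = 2^{-c_1 n/d}$ and $\delta = 2^{-c_2\binom{n}{\le d}}$.

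Next I would apply Lemma~\ref{lemma:Johnson} with $\gamma = \sqrt{\eps + \delta^{1/2}}$: it yields that $M$ is a $(k,\ell)$-$L_2$-extractor with error $2^{-r}$ for $k = \tfrac12\log(1/\delta) = \tfrac{c_2}{2}\binom{n}{\le d} = \Omega\big(\binom{n}{\le d}\big)$ and $r = \ell = \Omega\big(\min\{\log(1/\eps),\log(1/\delta)\}\big)$. Since $\binom{n}{\le d} \ge n \ge n/d$, each of $\log(1/\eps) = c_1 n/d$ and $\log(1/\delta) = c_2\binom{n}{\le d}$ is $\Omega(n/d)$, hence so is their minimum, and $r = \ell = \Omega(n/d)$. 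I would then invoke Theorem~\ref{thm:TM-main1} with $k' = \Omega\big(\binom{n}{\le d}\big)$, $\ell' = r' = \Omega(n/d)$ (these are ``sufficiently large'' once $n/d$ exceeds the relevant absolute constant, and the asserted lower bound is trivial otherwise; note also $\ell' \le n$). Because $k'$ dominates $\ell'$ and $r'$, the parameter $\rr$ of the theorem is $\Omega(n/d)$, so any branching program of width at most $2^{\Omega(\binom{n}{\le d}\cdot n/d)}$ and length at most $2^{\Omega(n/d)}$ has success probability $O(2^{-\Omega(n/d)})$; equivalently, learning requires either $\Omega\big(\binom{n}{\le d}\cdot n/d\big)$ memory bits or $2^{\Omega(n/d)}$ samples. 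The closing inequality is just $\binom{n}{\le d}\cdot \tfrac nd \ge \binom nd \cdot \tfrac nd \ge (n/d)^d \cdot (n/d) = (n/d)^{d+1}$, using $\binom nd \ge (n/d)^d$.

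I do not expect a genuine obstacle here: the one point that needs a moment's care is the observation that $p \mapsto \inner{M_p,M_q}$ is a translate of $\bias(\cdot)$ over the linear space $P_d$, so that the Ben-Eliezer--Hod--Lovett bound---stated for a uniformly random polynomial---transfers verbatim after the substitution $q \mapsto p+q$; beyond that it is only a matter of checking that the three extracted parameters $k',\ell',r'$ land in the ``sufficiently large'' regime required by Theorem~\ref{thm:TM-main1} and disposing of the degenerate small-$d$ cases.
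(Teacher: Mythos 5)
Your proposal is correct and takes essentially the same route as the paper: identify $\inner{M_p,M_q}=\bias(p+q)$ via linearity of $P_d$, translate the Ben-Eliezer--Hod--Lovett bound into $(\eps,\delta)$-almost orthogonality of the rows, feed the parameters through Lemma~\ref{lemma:Johnson}, and then apply Theorem~\ref{thm:TM-main1}. The extra bookkeeping you include (checking $\ell'\le n$, the ``sufficiently large'' regime, and the closing inequality $\binom{n}{\le d}\cdot n/d\ge(n/d)^{d+1}$) is all consistent with what the paper implicitly relies on.
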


\begin{proof}
	We reinterpret \cite[Lemma~2]{BEHL}.
	Since $P_d$ is a linear subspace, for  any fixed
	$p \in P_d$ and a uniformly random $q \in_R P_d$, we have that $p+q$ is  a uniformly random polynomial in~$P_d$.
	Thus, for any fixed $p\in P_d$,
	at most
	$2^{-c_2 \cdot \binom{n}{\le d}}$ fraction of the
	polynomials $q \in P_d$ have
	$$|\bias(p+q)| \ge 2^{-c_1 \cdot n/d}.$$
	In other words, we get that $\{M_p : p \in P_d\}$ are $(\eps, \delta)$-almost orthogonal vectors for $\eps = 2^{-c_1 \cdot n/d}$ and $\delta = 2^{-c_2\cdot \binom{n}{\le d}}$.
	We apply Lemma~\ref{lemma:Johnson} to get the ``in particular'' part, noting that in our case
	$\Omega\big(\min\{\log(1/\eps), \log(1/\delta)\}\big) = \Omega(n/d)$.
	We apply Theorem~\ref{thm:TM-main1} to get the ``thus'' part.
\end{proof}

\subsection{Learning Low Degree Polynomials} \label{sec:low-degree-polynomials}
Lemma~\ref{lemma:transpose} and Corollary~\ref{cor:low-deg-equations} gives the following immediate corollary.

\begin{corollary}
\label{cor:low-deg}
Let $d, n \in \N$, with $d\le 0.99 \cdot n$.
Let $M: \F_2^{n} \times P_d \to \{-1,1\}$ be the matrix defined by $M(a,p) = (-1)^{p(a)}$ for any $p\in P_d$ and $a \in \F_2^{n}$.
Then, $M$ is a $(k,\ell)$-$L_2$-extractor with error $2^{-r}$, for
$\ell = \Omega\big(\binom{n}{\le d}\big)$ and $k = r = \Omega(n/d)$.

Thus,
the learning task associated with $M$ {\sf (``learning degree-$d$ polynomials'')} requires either at least
$\Omega\left( \binom{n}{\le d} \cdot n/d \right) \ge \Omega((n/d)^{d+1})$
memory bits or at least
$2^{\Omega(n/d)}$
samples.
\end{corollary}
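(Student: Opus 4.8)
The plan is to recognize $M$ as the transpose of the matrix studied in Corollary~\ref{cor:low-deg-equations}, transfer the extractor property via Lemma~\ref{lemma:transpose}, and then feed the result into Theorem~\ref{thm:TM-main1}.

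Concretely, let $M' : P_d \times \F_2^{n} \to \{-1,1\}$ be the matrix $M'(p,a) = (-1)^{p(a)}$ from Corollary~\ref{cor:low-deg-equations}; since $M(a,p) = (-1)^{p(a)} = M'(p,a)$ we have $M = (M')^t$. Corollary~\ref{cor:low-deg-equations} (via Lemma~\ref{lemma:Johnson}) tells us that $M'$ is a $(k',\ell')$-$L_2$-extractor with error $2^{-r'}$ for $k' = \Omega\big(\binom{n}{\le d}\big)$ and $r' = \ell' = \Omega(n/d)$. Because $\binom{n}{\le d} \ge n \ge n/d$, we have $\min\{r',k'\} = \Omega(n/d)$, so Lemma~\ref{lemma:transpose} applied to $M'$ yields that $M = (M')^t$ is a $(k,\ell)$-$L_2$-extractor with error $2^{-r}$ for $\ell = \Omega\big(\binom{n}{\le d}\big)$ and $k = r = \Omega(n/d)$. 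This is the first assertion.

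For the lower bound, apply Theorem~\ref{thm:TM-main1} with concept class $\XX = P_d$, so that the quantity called $n$ there equals $\log_2 |P_d| = \binom{n}{\le d}$ and the hypothesis $\ell' \le n$ is met (with a small enough hidden constant in $\ell' = \Omega(\binom{n}{\le d})$). With these parameters, the effective parameter in Equation~\eqref{eq:param setting} is $\min\{\tfrac{r'}{2}, \tfrac{(1-\gamma)k'}{2}, \tfrac{(1-\gamma)\ell'}{2}-1\} = \Omega(n/d)$, since the two terms of size $\Theta(n/d)$ dominate the one of size $\Theta(\binom{n}{\le d})$. Hence every branching program of length $2^{\Omega(n/d)}$ and width $2^{\Omega(\binom{n}{\le d}\cdot n/d)}$ has success probability $2^{-\Omega(n/d)}$; equivalently, a successful learner needs either $\Omega\big(\binom{n}{\le d}\cdot n/d\big)$ memory bits or $2^{\Omega(n/d)}$ samples. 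The stated consequence $\binom{n}{\le d}\cdot n/d \ge (n/d)^{d+1}$ is immediate from $\binom{n}{\le d} \ge \binom{n}{d} \ge (n/d)^d$.

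The whole argument is a composition of results already established, so there is no genuine difficulty; the only points to watch are (a) the bookkeeping of which of $n$ and $\binom{n}{\le d}$ plays the role of $\log|\XX|$ once we transpose, and (b) the ``sufficiently large'' requirement on $k',\ell',r'$ in Theorem~\ref{thm:TM-main1}, which amounts to $n/d$ exceeding an absolute constant --- harmless, since for bounded $n/d$ the claimed sample bound $2^{\Omega(n/d)}$ is itself $O(1)$ and the statement holds trivially after adjusting constants.
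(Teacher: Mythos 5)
Your proof is correct and follows exactly the same route the paper takes: the paper's entire proof is the one-line observation that the result follows from Lemma~\ref{lemma:transpose} applied to Corollary~\ref{cor:low-deg-equations}, together with Theorem~\ref{thm:TM-main1}. You have just unpacked the bookkeeping (which of $n$ and $\binom{n}{\le d}$ plays the role of $\log|\XX|$, the fact that $\min\{r',k'\}=\Omega(n/d)$, and the ``sufficiently large'' caveat), all of which the paper leaves implicit.
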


\subsection{Relation to Statistical-Query-Dimension} \label{sec:sq}
Let $\mathcal C$ be a class of functions mapping $\AA$ to $\{-1,1\}$.
The {\sf Statistical-Query-Dimension} of~$\mathcal C$, denoted $\mathrm{SQdim}(\mathcal C)$, is defined to be the maximal $m$ such that there exist functions $f_1, \ldots, f_m \in \mathcal{C}$ with $|\inner{f_i,f_j}| \le 1/m$ for all $i \neq j$~\cite{K98,BFJKMR}.
As a corollary of Lemma~\ref{lemma:transpose} and Lemma~\ref{lemma:Johnson}, we get the following.

\begin{corollary}
Let $\mathcal C$ be a class of functions mapping $\AA$ to $\{-1,1\}$.
Let $\mathrm{SQdim}(\mathcal C) = m$.
Let $f_1, \ldots, f_m \in \mathcal {C}$ with
$|\inner{f_i,f_j}| \le 1/m$ for any $i \neq j$.
Define the matrix $M: \AA \times [m] \to \{-1,1\}$ whose columns are the vectors $f_1, \ldots, f_m$.
Then, $M$ is a $(k,\ell)$-$L_2$-extractor with error $2^{-r}$ for $k = \ell = r = \Omega(\log m)$.

Thus,
the learning task associated with $M$  requires either at least
$\Omega( \log^2 m)$
memory bits or at least
$m^{\Omega(1)}$
samples.\end{corollary}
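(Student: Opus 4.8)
The plan is to chain together the generalized Johnson bound (Lemma~\ref{lemma:Johnson}), the transpose lemma (Lemma~\ref{lemma:transpose}) and the main theorem (Theorem~\ref{thm:TM-main1}), exactly following the pattern of the proof of Corollary~\ref{cor:low-deg-equations}.

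First I would look at the transposed matrix $M^t : [m] \times \AA \to \{-1,1\}$, whose rows are precisely the columns $f_1,\dots,f_m$ of $M$. These rows are $(\eps,\delta)$-almost orthogonal with $\eps = \delta = 1/m$: for each fixed $i$, the only index $j$ with $|\inner{f_i,f_j}| > 1/m$ is $j=i$ itself (since $\inner{f_i,f_i}=1$ while $|\inner{f_i,f_j}|\le 1/m$ for $j\ne i$), and one ``bad'' index equals $\tfrac{1}{m}\cdot m$, so $\delta = 1/m$ suffices. Applying the ``in particular'' part of Lemma~\ref{lemma:Johnson} to $M^t$ with these parameters, I get that $M^t$ is a $(k_0,\ell_0)$-$L_2$-extractor with error $2^{-r_0}$, where $k_0 = \tfrac12\log m$ and $\ell_0 = r_0 = \Omega(\min\{\log(1/\eps),\log(1/\delta)\}) = \Omega(\log m)$.

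Next I would transpose back. Applying Lemma~\ref{lemma:transpose} to $M^t$ yields that $(M^t)^t = M$ is an $(\Omega(\ell_0),\Omega(k_0))$-$L_2$-extractor with error $2^{-\Omega(\min\{r_0,k_0\})}$; since $k_0,\ell_0,r_0$ are all $\Theta(\log m)$, this is exactly the claim that $M$ is a $(k,\ell)$-$L_2$-extractor with error $2^{-r}$ for $k=\ell=r=\Omega(\log m)$. The ``thus'' part then follows by feeding this into Theorem~\ref{thm:TM-main1} with $n = \log_2 m$: one checks that $\ell\le n$ holds (the constant hidden in $\ell=\Omega(\log m)$ is below $1$) and that $k,\ell,r$ clear the constant threshold required by the theorem once $m$ is large enough, and concludes that every branching program of length at most $2^{\Omega(r)} = m^{\Omega(1)}$ and width at most $2^{\Omega(k\ell)} = 2^{\Omega(\log^2 m)}$ has negligible success probability, i.e.\ learning the class requires either $\Omega(\log^2 m)$ memory bits or $m^{\Omega(1)}$ samples.

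There is essentially no hard step here; the only points needing care are (i) pinning down $\eps=\delta=1/m$ and noting that the diagonal entries force $\delta = 1/m$ rather than $0$, and (ii) keeping straight which of $M$ and $M^t$ each lemma is applied to, since the almost-orthogonal family consists of the \emph{columns} of $M$ whereas Lemma~\ref{lemma:Johnson} is stated in terms of rows — hence the detour through the transpose.
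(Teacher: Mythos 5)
Your proof is correct and follows essentially the same route as the paper: pass to $M^t$, note its rows are $(1/m,1/m)$-almost orthogonal, apply Lemma~\ref{lemma:Johnson} to get that $M^t$ is a $(k,\ell)$-$L_2$-extractor with $k=\ell=r=\Omega(\log m)$, transpose back via Lemma~\ref{lemma:transpose}, and feed the result into Theorem~\ref{thm:TM-main1}. Your added remarks (pinning down $\eps=\delta=1/m$ including the diagonal contribution, and checking $\ell\le n$ with $n=\log_2 m$) are details the paper leaves implicit but are consistent with it.
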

\begin{proof}
Consider the rows of the matrix $M^{t}$.
By our assumption, the rows of $M^t$ are $(1/m,1/m)$-almost orthogonal.
Thus, by Lemma~\ref{lemma:Johnson}, $M^t$ is a $(k,\ell)$-$L_2$-extractor with error~$2^{-r}$, for $k = \ell = r = \Omega(\log m)$.
By Lemma~\ref{lemma:transpose}, $M$ is a $(k,\ell)$-$L_2$-extractor with error $2^{-r}$ for $k = \ell = r = \Omega(\log m)$.
	We apply Theorem~\ref{thm:TM-main1} to get the ``thus'' part.
\end{proof}

In fact, we get the following (slight) generalization.
Suppose that there are $m'\ge m$ functions $f_1, \ldots, f_{m'}$ mapping $\AA$ to $\{-1,1\}$ with $|\inner{f_i, f_j}|\le 1/m$ for all $i\neq j$.
Then, the learning task associated with the matrix whose columns are $f_1, \ldots, f_{m'}$  requires either at least $\Omega(\log(m) \cdot  \log (m'))$ memory bits or at least $m^{\Omega(1)}$ samples.

\subsection{Comparison with~\cite{Raz17}}

\paragraph{Small Matrix Norm implies $L_2$-Extractor.}
This paper generalizes the result of~\cite{Raz17}
that if a matrix $M:\AA\times \XX\rightarrow \{-1,1\}$ is such that the largest singular value of $M$, $\sigma_{\max}(M)$, is at most $|\AA|^{\frac{1}{2}}|\XX|^{\frac{1}{2}-\varepsilon}$, then the learning problem represented by $M$
requires either a memory of size at least
$\Omega \left( (\varepsilon n)^2  \right) $
or at least
$2^{\Omega(\varepsilon n)}$ samples, where
$n = \log_2|\XX|$. We use the following lemma:
\begin{lemma}\label{lem:smallnorm}
If a matrix $M:\AA\times \XX\rightarrow \{-1,1\}$ satisfies $\sigma_{\max}(M) \le |\AA|^{\frac{1}{2}} \cdot |\XX|^{\frac{1}{2}-\varepsilon}$, then $M$ is a $(k,\ell)$-$L_2$-Extractor with error $2^{-r}$ for every $k,\ell,r>0$ such that $k+2\ell+2r \le 2\varepsilon n$ ($n=\log_2(|\XX|)$).
\end{lemma}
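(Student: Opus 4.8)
The plan is to relate the $L_2$-extractor condition directly to the operator-norm bound on $M$ via a one-line second-moment (Markov-type) argument. First I would unwind the normalization conventions: for a vector $f$ indexed by $\XX$, write $\|f\|_{\ell_2}^2 = \sum_{x\in\XX} f(x)^2$ and $\|f\|_{\ell_1} = \sum_{x\in\XX} |f(x)|$ for the ordinary (unweighted) Euclidean and $\ell_1$ norms. Then the paper's uniform-measure norms satisfy $\norm{f}_2 = |\XX|^{-1/2}\,\|f\|_{\ell_2}$ and $\norm{f}_1 = |\XX|^{-1}\,\|f\|_{\ell_1}$, and we have $\inner{M_a,f} = (Mf)_a/|\XX|$, where $Mf \in \Reals^{\AA}$ denotes the ordinary matrix--vector product.

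Next, fix a non-negative $f : \XX \to \Reals$ with $\norm{f}_2/\norm{f}_1 \le 2^{\ell}$, and let $B \subseteq \AA$ be the set of ``heavy'' rows $a$ with $|\inner{M_a,f}|/\norm{f}_1 \ge 2^{-r}$; equivalently, $|(Mf)_a| \ge 2^{-r}\,\|f\|_{\ell_1}$. The goal is to show $|B| \le 2^{-k}\,|\AA|$. I would sandwich $\|Mf\|_{\ell_2}^2$ between two estimates. From below, keeping only the heavy coordinates, $\|Mf\|_{\ell_2}^2 = \sum_{a\in\AA}(Mf)_a^2 \ge |B| \cdot 2^{-2r}\,\|f\|_{\ell_1}^2$. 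From above, by the definition of the largest singular value together with the hypothesis $\sigma_{\max}(M) \le |\AA|^{1/2}\,|\XX|^{1/2-\varepsilon}$, we get $\|Mf\|_{\ell_2}^2 \le \sigma_{\max}(M)^2\,\|f\|_{\ell_2}^2 \le |\AA| \cdot |\XX|^{1-2\varepsilon}\,\|f\|_{\ell_2}^2$.

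Combining the two estimates yields $|B| \le |\AA| \cdot |\XX|^{1-2\varepsilon} \cdot 2^{2r} \cdot \|f\|_{\ell_2}^2/\|f\|_{\ell_1}^2$. Converting back to the uniform-measure norms, $\|f\|_{\ell_2}^2/\|f\|_{\ell_1}^2 = |\XX|^{-1}\,(\norm{f}_2/\norm{f}_1)^2 \le |\XX|^{-1}\cdot 2^{2\ell}$, so $|B| \le |\AA| \cdot |\XX|^{-2\varepsilon}\cdot 2^{2r+2\ell} = |\AA| \cdot 2^{-2\varepsilon n + 2r + 2\ell}$, using $n = \log_2|\XX|$. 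Under the assumed inequality $k + 2\ell + 2r \le 2\varepsilon n$, the exponent $-2\varepsilon n + 2r + 2\ell$ is at most $-k$, so $|B| \le 2^{-k}\,|\AA|$. This is precisely the conclusion that $M$ is a $(k,\ell)$-$L_2$-Extractor with error $2^{-r}$ (Definition~\ref{definition:l2-extractor}).

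I do not expect a genuine obstacle here: the argument is a single application of the variational characterization $\|Mf\|_{\ell_2} \le \sigma_{\max}(M)\,\|f\|_{\ell_2}$ plus Markov on the coordinates of $Mf$. The only point requiring care is the bookkeeping between the paper's uniform-measure $\ell_p$ norms and the unweighted norms that are natural for singular values; once the conversion factors $|\XX|^{-1/2}$ and $|\XX|^{-1}$ are tracked correctly, the claimed range of parameters $k+2\ell+2r\le 2\varepsilon n$ falls out exactly.
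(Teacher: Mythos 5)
Your proposal is correct and takes essentially the same approach as the paper: bound the second moment of $(Mf)_a/\|f\|_{\ell_1}$ over $a$ via $\sigma_{\max}(M)$, then apply Markov (your ``sandwich'' of $\|Mf\|_{\ell_2}^2$ from below by the heavy rows and from above by the singular value is exactly the Markov step written out). The only cosmetic difference is that the paper carries out the computation directly in the uniform-measure norms, whereas you work in unweighted $\ell_p$ norms and convert at the end; the bookkeeping matches and the resulting constraint $k+2\ell+2r\le 2\varepsilon n$ is identical.
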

Theorem \ref{thm:TM-main1} and Lemma \ref{lem:smallnorm} with $k=\varepsilon n, \ell=r=\frac{\varepsilon n}{4}$, imply
the main result of~\cite{Raz17}.
\begin{proof}
 As  $\sigma_{\max}(M) \le |\AA|^{\frac{1}{2}}|\XX|^{\frac{1}{2}-\varepsilon}$, for a non-negative function $f:\XX\rightarrow \Reals$,
$\norm{M\cdot f}_2\le |\XX|^{1-\varepsilon}\cdot \norm{f}_2$. In other words,
$$
\left( \Ex_{a \in_R \AA} \left[ |(M\cdot f)_a|^{2} \right] \right)^{1/2}\le |\XX|^{1-\varepsilon} \cdot \norm{f}_2
$$
$$
\implies \left( \Ex_{a \in_R \AA} \left[ |\inner{M_a,f}|^{2} \right] \right)^{1/2}\le |\XX|^{-\varepsilon} \cdot \norm{f}_2
$$
$$
\implies \left( \Ex_{a \in_R \AA} \left[ \left(\frac{|\inner{M_a,f}|}{\norm{f}_1}\right)^{2} \right] \right)^{1/2}\le 2^{-\varepsilon n}\cdot \frac{\norm{f}_2}{\norm{f}_1}
$$
Now if $\frac{\norm{f}_2}{\norm{f}_1}\le 2^{\ell}$ for some $\ell>0$, then
$$
  \Ex_{a \in_R \AA} \left[ \left(\frac{|\inner{M_a,f}|}{\norm{f}_1}\right)^{2} \right]\le 2^{-2\varepsilon n+2\ell}\;.
$$
Applying Markov's inequality, we get that there are at most $2^{-2\varepsilon n + 2\ell +2r}\cdot |\AA|$ rows $a\in \AA$ with
$\frac{|\inner{M_a,f}|}{\norm{f}_1} \ge 2^{-r}$.
\end{proof}

\subsection{Comparison with~\cite{MM2}} \label{sec:MM}

We will now show that our result subsumes the one of~\cite{MM2}.
Moshkovitz and Moshkovitz~\cite{MM2} consider matrices
$M: \AA \times \XX \rightarrow \{-1,1\}$, and a parameter $d$, with the property that
for any  $A' \subseteq A$ and $X' \subseteq X$ the bias of the submatrix
$M_{A' \times X'}$ is at most $\tfrac{d}{\sqrt{|A'| \cdot |X'|}}$.
They define $m = \tfrac{|A| \cdot |X|}{d^2}$ and prove that any learning
algorithm for the corresponding learning problem  requires either a memory  of size $\Omega((\log m)^2)$ or $m^{\Omega(1)}$ samples.
We note that this is essentially the same result as the one proved in~\cite{Raz17}, and since it is always true that
$d^2 \geq \max\left\{|X|,|A|\right\}$, the bound obtained on the memory is at most
$\Omega\left(\min\left\{(\log |X|)^2,(\log |A|)^2\right\}\right)$.

Note that if $M$ satisfies that property (required by~\cite{MM2}), then,
in particular,
any submatrix $A' \times X'$ of $M$ of at least $m^{-1/4} \cdot |A|$ rows and at least
$m^{-1/4} \cdot |X|$ columns, has a bias of at most
$$
\tfrac{d}{\sqrt{|A'| \cdot |X'|}} =
\tfrac{d} {\sqrt{|A| \cdot |X|}} \cdot
\tfrac{\sqrt{|A| \cdot |X|}}{\sqrt{|A'| \cdot |X'|}}
\leq
m^{-1/2} \cdot m^{1/4} = m^{-1/4}.
$$

Thus, we can apply Corollary~\ref{cor:main1}, with
$k,\ell, r = \tfrac{1}{4} \log(m)$ to obtain the same result.

\subsection*{Acknowledgement}
We would like to thank Pooya Hatami and Avi Wigderson for very helpful conversations.

\end{document}